\documentclass{article}
\usepackage[preprint]{cs-conference}

\usepackage{amssymb}
\usepackage{amsmath}
\usepackage{amsfonts}
\usepackage{amsthm}
\usepackage{bm}
\usepackage{booktabs}
\usepackage{float}
\usepackage{placeins}
\usepackage{graphicx}
\usepackage{microtype}
\usepackage{titletoc}
\usepackage{xcolor}
\usepackage{url}
\definecolor{linkcolor}{rgb}{0.10,0.25,0.55}
\definecolor{citecolor}{rgb}{0.10,0.45,0.25}
\definecolor{learnedparamred}{RGB}{145,35,35}
\newcommand{\learnedparam}[1]{\textcolor{learnedparamred}{#1}}
\usepackage[
  pagebackref=false,
  breaklinks=true,
  colorlinks=true,
  bookmarks=false,
  citecolor=citecolor,
  linkcolor=linkcolor,
  urlcolor=linkcolor
]{hyperref}
\hypersetup{
  pdftitle={A Flow Matching Framework for Neural Representational Dissimilarity},
  pdfauthor={Zeyuan Ye and Xue-Xin Wei}
}

\definecolor{revisiongreen}{RGB}{45,125,80}

\newtheorem{proposition}{Proposition}[section]
\newtheorem{lemma}[proposition]{Lemma}

\title{A Flow Matching Framework\\for Neural Representational Dissimilarity\thanks{Code: \url{https://github.com/AgeYY/FlowRDM}.}}

\author{Zeyuan Ye \qquad Xue-Xin Wei\\
Department of Neuroscience, The University of Texas at Austin\\
\texttt{\{y.zeyuan, weixx\}@utexas.edu}}

\begin{document}

\raggedbottom

\maketitle

\begin{abstract}
Neural representational dissimilarity quantifies differences between neural response distributions, and is essential for comparing neural codes across stimuli, brain areas, tasks, and models. Commonly used distance metrics involve different assumptions and are estimated with separate methods. Here, we show that a variety of distance metrics can be unified under a flow matching framework developed in deep generative models. That is, these distances arise as Jeffreys divergences under different velocity constraints. We find that flow matching has advantages for estimating distances involving complicated distributions and continuous variables. Furthermore, this framework enables the design of new distance metrics in a principled way. Together, flow matching provides a unified approach for understanding, estimating, and designing neural representational dissimilarity metrics.

\end{abstract}

\begin{figure}[H]
\centering
\includegraphics[width=0.7\linewidth]{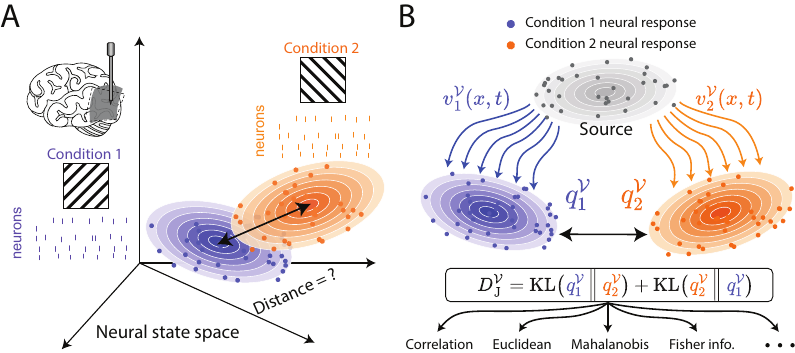}
\caption{\textbf{Overview of the proposed flow matching framework for neural representational dissimilarity.} (\textbf{A}) Different stimuli evoke condition-specific neural responses, producing different population-response distributions whose dissimilarity is to be quantified. (\textbf{B}) Flow matching learns velocity fields $v_1^{\mathcal V}(x,t)$ and $v_2^{\mathcal V}(x,t)$ that transport a shared source distribution to the fitted distributions $q_1^{\mathcal V}$ and $q_2^{\mathcal V}$. Here, we define representational dissimilarity as the Jeffreys divergence between $q_1^{\mathcal V}$ and $q_2^{\mathcal V}$. We prove that different choices of $\mathcal V$ correspond to different distance metrics.}
\label{fig:overview}
\vspace{-0.25em}
\end{figure}

\section{Introduction}

A central question in neuroscience is how the brain represents information through stochastic neural population responses. Representing information means not only ``storing'' it but also ``structuring'' it in an appropriate format to support downstream computation given the biological constraints \citep{KriegeskorteKievit2013Geometry,KriegeskorteWei2021TuningGeometry}. How can we study representational structures? Representational similarity analysis (RSA) takes a geometric perspective, probing representational structure by measuring pairwise distances between neural population response distributions across conditions, such as stimuli or task states \citep{Kriegeskorte2008RSA,Nili2014RSAToolbox}. RSA has been widely used to study how neural representations support perception and task performance \citep{Kriegeskorte2008RSA,Cichy2014Resolving,Diedrichsen2017RepresentationalModels} and to compare representational systems across species and computational models, including deep neural network models~\citep{Kriegeskorte2008Categorical,Kriegeskorte2009PopulationCodes,Kornblith2019Similarity}.

A key step in RSA is to define and estimate representational distances. Because neural responses are stochastic, this often becomes a mathematical question of how to compare distributions. No single distance metric is best for every purpose as different metrics reveal different properties of the distributions \citep{Walther2016Reliability,Diedrichsen2017RepresentationalModels,KriegeskorteWei2021TuningGeometry}. For example, correlation distance compares the relative pattern of mean activity across neurons. Euclidean distance measures separation between mean response vectors. Mahalanobis distance additionally accounts for response covariance. Kullback-Leibler (KL) divergence compares full response distributions. Lastly, Fisher information measures how sensitive the response distribution changes with a small change of the stimulus variable \citep{Kullback1951Information,Amari2016InformationGeometry}. 

Given the diversity of these useful distance metrics, a natural question arises: 
can they be unified under a single mathematical framework? Such a framework, if exists, could offer several benefits. First, it would make the statistical assumptions underlying different metrics more explicit, clarifying their interpretation. Second, it would provide a principled way to design new distance metrics. Finally, it could provide a unified estimator for these metrics, so that improvements to the estimator could benefit multiple metrics rather than requiring separate efforts for each one.

Here, we provide such a framework using flow matching. Flow matching is a modern deep generative modeling method that is widely used in image, video, and language generation \citep{Lipman2023FlowMatching,Esser2024RectifiedFlow,Davtyan2023VideoFlow,Hu2024TextFlow}. It starts by generating a data sample from a simple source distribution (e.g., a Gaussian distribution) and then evolves the sample toward a target distribution through a velocity field. In our flow-matching framework for neural dissimilarity, we define the representational distance as the Jeffreys divergence between two target neural-response distributions under a given velocity constraint. We mathematically prove that this Jeffreys divergence reduces to existing distance metrics under different velocity-field constraints (Fig.~\ref{fig:overview}). We show that flow matching offers advantages when estimating distances involving complex distributions and continuous variables, and allows the principled design of new distance metrics.

\section{Background and Related Work}
\label{sec:background}

\paragraph{Representational dissimilarity analysis.}
RSA summarizes neural, behavioral, or model representations by a representational dissimilarity matrix (RDM), whose entries measure the pairwise separation between conditions \citep{Kriegeskorte2008RSA,Kriegeskorte2009PopulationCodes,KriegeskorteKievit2013Geometry,Nili2014RSAToolbox}. Common RDM entries include correlation, Euclidean, Mahalanobis, and cross-validated Mahalanobis distances; these choices differ in how they treat response scale, noise covariance, and finite-sample bias \citep{Walther2016Reliability,Diedrichsen2017RepresentationalModels}.

\paragraph{Fisher information estimation.}
Fisher information measures how rapidly $p_\theta(x)$ changes with a continuous stimulus or behavioral variable, $\theta$ \citep{Fisher1922Foundations,Amari2016InformationGeometry,SeungSompolinsky1993PopulationCodes,Pouget2000PopulationCodes}. Its inverse lower-bounds the variance of unbiased estimators, making Fisher information a local measure of discriminability \citep{Series2009Homunculus, Rao1945Information,Brunel1998PopulationCoding,Abbott1999CorrelatedVariability,Averbeck2006NeuralCorrelations}. With repeated trials at nearby stimulus values, Fisher information can be estimated from tuning-curve derivatives and response covariance or from decoder-based estimator \citep{Kanitscheider2015MeasuringFisher,Kohn2016CorrelationsInformation}. Continuous and naturalistic experiments, however, have no exact repeats. Estimation procedures need to exploit information shared across nearby values of the condition $\theta$, for example by using Gaussian processes \citep{RasmussenWilliams2006GaussianProcesses,YeWessel2025GridInformation,Nejatbakhsh2023Wishart}. However, Gaussian processes scale poorly \citep{Bruinsma2020ScalableGP}. Our flow-matching framework unifies Fisher information estimation with the other distance metrics that have traditionally been treated separately. %

\paragraph{Continuous normalizing flows and flow matching.}
Normalizing flows model densities through invertible transformations \citep{Rezende2015Variational,Papamakarios2021NormalizingFlows}, while continuous normalizing flows transport samples through an ODE \citep{Chen2018NeuralODE}. Roughly speaking, a sample begins at \(x_0\sim \rho_0\), follows the velocity field \(v\) over time, and arrives at a distribution intended to approximate the observed data. In the original continuous-flow formulation, the velocity parameters are trained by maximum likelihood, which requires solving the ODE and estimating the divergence of the velocity field during optimization, which can be slow \citep{Chen2018NeuralODE,Grathwohl2019FFJORD}. Flow matching reformulates this training problem. Instead of maximizing likelihoods, one specifies a probability path between source samples and data samples, computes the target velocity along that path, and trains \(v\) by regressing onto those velocities \citep{Lipman2023FlowMatching,Albergo2023StochasticInterpolants}. Flow matching thus provides a practical way to learn a transport map from a source distribution to an empirical response distribution.

\section{A flow matching framework for computing representational distance}
\label{sec:framework}
Flow matching learns a time-dependent velocity field that transports samples from a simple \textit{source} distribution to a \textit{target} distribution \citep{Lipman2023FlowMatching}. Consequently, different constraints on the velocity field yield different approximations to the target distribution. Let $x\in\mathbb R^d$ denote a neural population response vector, where $d$ is the number of neurons, and let $p_c(x)$ denote the unknown target distribution under condition $c\in\mathcal C$. Starting from a source sample $X_0\sim\rho_0$, sampling follows the ODE
\begin{equation}
    \frac{d X_t}{dt}=v^{\mathcal V}_c(X_t,t), \qquad t\in[0,1],
    \label{eq:flow-ode}
\end{equation}
where the velocity field $v_c^{\mathcal V}(x,t)$ specifies the direction and speed of the sample at each flow time and $\mathcal V$ denotes the chosen velocity class. Integrating Eq.~\eqref{eq:flow-ode} from $t=0$ to $t=1$ defines a transformation $T_{c,1}^{\mathcal V}$ and the fitted distribution
\begin{equation}
    q_c^{\mathcal V}=(T_{c,1}^{\mathcal V})_{\#}\rho_0,
    \label{eq:fitted-distribution}
\end{equation}
so that $T_{c,1}^{\mathcal V}(X_0)\sim q_c^{\mathcal V}$ when $X_0\sim\rho_0$. The fitted distribution $q_c^{\mathcal V}$ approximates $p_c$. Here, $t$ is an artificial flow-matching coordinate, not physical time.

In general, our framework has three steps: (1) choose a source distribution and a velocity class $\mathcal V$; (2) learn the velocity field by flow matching to approximate the target distributions; and (3) use the learned model to compute the Jeffreys divergence between the approximated distributions as the representational distance.

\paragraph{Notation.}
We use $a,b\in\mathcal C$ for a pair of categorical conditions being compared and $\mu_c=\mathbb E_{X\sim p_c}[X]$ for the mean response under condition $c$. We use $\theta\in\Theta$ for a continuously varying condition, such as stimulus orientation, physical time, or behavior. For simplicity, we describe the framework below using the categorical condition $c$; the same framework applies to the continuous condition $\theta$.

\paragraph{1. Choosing a source distribution and velocity class.}
We first choose a source distribution $\rho_0$ and a velocity class $\mathcal V$. The source specifies the initial distribution, which is typically shared across conditions. Meanwhile, the velocity class specifies the transformations allowed from the source to each target. For example, an unconstrained velocity field, parameterized by a neural network, permits flexible nonlinear transformations, whereas a constant, state-independent velocity restricts the transformation to a translation. These choices determine which properties of the target distributions can be represented (see Fig.~\ref{fig:velocity-class-distance-correspondence}) and, consequently, which representational distance is induced. Section~\ref{sec:metric-correspondences} presents the specific choices used to recover different distance metrics.

\begin{figure}[!htbp]
  \centering
  \includegraphics[width=0.90\linewidth]{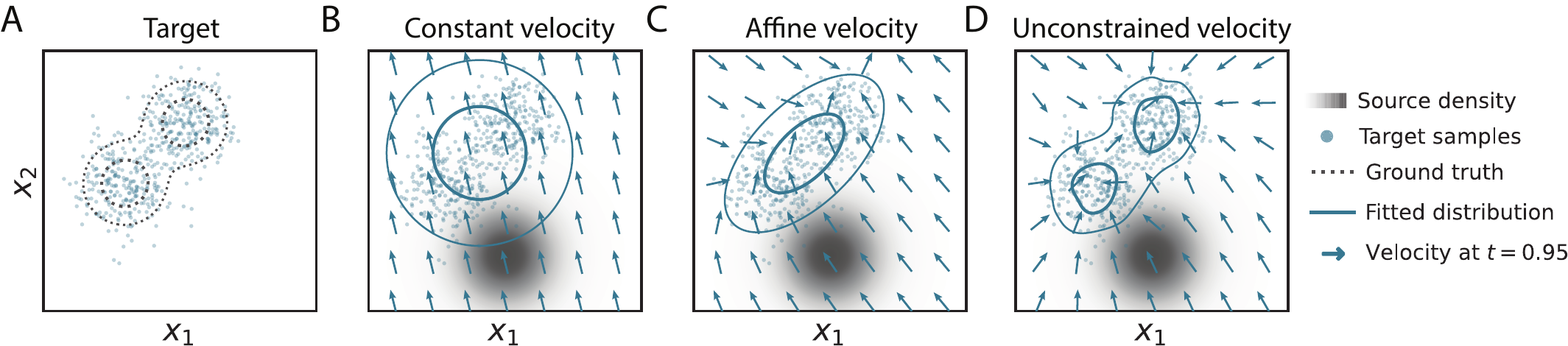}
  \caption{\textbf{Velocity classes $\mathcal V$ used in flow matching determine the fitted distributions $q_c^{\mathcal V}$} (also see Table~\ref{tab:constraints}). (A) A two-component Gaussian-mixture target, shown as samples and dotted density contours. (B) A constant velocity translates the Gaussian source (black region) without changing its shape. (C) An affine velocity allows an affine transformation of the source distribution, yielding a Gaussian approximation to the target. (D) An unconstrained velocity captures the two modes of the target. Arrows indicate velocity direction, not magnitude.}
  \label{fig:velocity-class-distance-correspondence}
\end{figure}

\paragraph{2. Flow-matching training.}
Given the source distribution and velocity class, flow matching learns the velocity field with a simple regression objective \citep{Lipman2023FlowMatching,Albergo2023StochasticInterpolants}. During training, we draw source samples $X_0\sim \rho_0$ and observed responses $X_1^{(c)}\sim p_c$ and connect them along a predefined path
\begin{equation}
    X_t^{(c)} = \alpha_t X_0 + \beta_t X_1^{(c)},
    \qquad
    (\alpha_0,\beta_0)=(1,0),\quad
    (\alpha_1,\beta_1)=(0,1),
    \label{eq:path}
\end{equation}
where $\alpha_t$ and $\beta_t$ are prescribed interpolation schedules controlling the contributions of the source and target samples at flow time $t$. The boundary conditions ensure that the path begins at the source sample and ends at the recorded response. Its target instantaneous velocity is
\begin{equation}
    U_t^{(c)}=\dot\alpha_t X_0+\dot\beta_t X_1^{(c)},
\end{equation}
where the dots denote derivatives with respect to $t$. A neural network receives $(X_t^{(c)},t,c)$ and is trained to predict this target velocity. The flow-matching objective is
\begin{equation}
    \mathcal L_{\mathrm{FM}}(v)
    =
    \mathbb E_{\substack{
        C\sim\pi_{\mathcal C},\;t\sim\operatorname{Unif}[0,1]\\
        X_0\sim \rho_0,\;X_1^{(C)}\sim p_C
    }}
    \left[
        \left\|v_C(X_t^{(C)},t)-U_t^{(C)}\right\|^2
    \right],
    \label{eq:fm-objective}
\end{equation}
where $C\sim\pi_{\mathcal C}$ denotes a randomly sampled categorical condition. For a continuous condition, we instead sample $\Theta\sim\pi_\Theta$ and replace $C$, $p_C$, $X_1^{(C)}$, and $v_C$ by $\Theta$, $p_\Theta$, $X_1^{(\Theta)}$, and $v_\Theta$, respectively. The population minimizer $v^{\mathcal V}=\arg\min_{v\in\mathcal V}\mathcal L_{\mathrm{FM}}(v)$ is learned by backpropagation through the velocity network.

Optionally, the learned velocity field can be fine-tuned by maximizing the log likelihood (Section~\ref{sec:nll-fine-tuning}). We use this fine-tuning only for the new distance metric introduced in Supplementary Section~\ref{sec:geometry}, where it is practically useful for optimization; all other results use flow-matching training alone for simplicity.

\paragraph{3. Jeffreys-divergence computation.} \label{sec:forward-sampling}
We define the representational distance between conditions $a$ and $b$ as the Jeffreys divergence between their \textit{fitted} distributions \citep{Kullback1951Information,Amari2016InformationGeometry},
\begin{equation}
    D_{\mathcal V}(a,b)
    =
    D_{\mathrm{KL}}\!\left(q_a^{\mathcal V}\middle\|q_b^{\mathcal V}\right)
    +
    D_{\mathrm{KL}}\!\left(q_b^{\mathcal V}\middle\|q_a^{\mathcal V}\right),
    \label{eq:flow-jeffreys}
\end{equation}
where each directed KL divergence is
\begin{equation}
    D_{\mathrm{KL}}\!\left(q_a^{\mathcal V}\middle\|q_b^{\mathcal V}\right)
    =
    \mathbb E_{x\sim q_a^{\mathcal V}}
    \left[\log q_a^{\mathcal V}(x)-\log q_b^{\mathcal V}(x)\right].
\end{equation}

When the source distribution is Gaussian and the velocity class $\mathcal V$ is affine
(linear in the state), the fitted distributions $q_c^{\mathcal V}$ can
be obtained by solving deterministic ODEs for their means and covariances (Supplementary Section~\ref{app:affine-gaussian-evaluation}), from which the Jeffreys divergence can be computed.

In other cases, we estimate the Jeffreys divergence using Monte Carlo. We draw samples $X_i^{(a)}\sim q_a^{\mathcal V}$ and $X_i^{(b)}\sim q_b^{\mathcal V}$ for $i=1,\ldots,M$ by integrating the sampling ODE in Eq.~\eqref{eq:flow-ode} and compute
\begin{equation}
    \widehat D_{\mathcal V}(a,b)
    =
    \frac{1}{M}\sum_{i=1}^M
    \bigl[
        \log q_a^{\mathcal V}(X_i^{(a)})
        -\log q_b^{\mathcal V}(X_i^{(a)})
    \bigr]
    +
    \frac{1}{M}\sum_{i=1}^M
    \bigl[
        \log q_b^{\mathcal V}(X_i^{(b)})
        -\log q_a^{\mathcal V}(X_i^{(b)})
    \bigr].
    \label{eq:mc-jeffreys}
\end{equation}
These log densities are computed using the continuous change-of-variables identity in Eq.~\eqref{eq:cnf-log-density}.

\section{Flow matching with different velocity constraints lead to different representational metrics}
\label{sec:metric-correspondences}

We now present our main theoretical results, \textit{i.e., }\textit{various commonly used representational distance metrics arise from our flow matching framework as Jeffreys divergences under different velocity classes. }
These correspondences are summarized in Table~\ref{tab:constraints}.
For each velocity class, the proof follows a similar general strategy. We substitute the analytical form of the constrained velocity field into the population flow-matching loss and solve for its minimizer, which determines the induced approximate distribution. Evaluating the Jeffreys divergence between the resulting condition-specific approximations then yields the corresponding distance metric.
Complete proofs for all rows of Table~\ref{tab:constraints} are provided in the Supplementary Section~\ref{app:velocity-distance-correspondence-proofs}.%

\begin{table}[!htbp]
\centering

\caption{\small{Distance metrics and their corresponding velocity classes. We use $c$ for a categorical condition and $\theta$ for a continuous condition. All rows use the shared standard Gaussian source $\rho_0=\mathcal N(0,I)$ except the starred geometry-template distance, which uses a source distribution concentrated around a low-dimensional manifold (Supplementary Section~\ref{sec:geometry}). The cosine and correlation rows equal $2r^2$ times their conventional definitions. In the Mahalanobis row, $A_t$ is shared across conditions. In the linear-Fisher row, $\bar A_{\theta,t}$ is shared locally across nearby conditions (see Supplementary Section~\ref{app:velocity-distance-correspondence-proofs}} %
).}
\label{tab:constraints}
\footnotesize
\setlength{\tabcolsep}{4pt}
\begin{tabular}{@{}p{0.62\linewidth}p{0.36\linewidth}@{}}
\toprule
Velocity field & Induced distance \\
\midrule
$v_c(x,t)=\dot\beta_t \learnedparam{b_c}$ & Euclidean distance \\
$v_c(x,t)=\dot\beta_t \learnedparam{b_c},\ \|\learnedparam{b_c}\|=r$ & Cosine distance \\
$v_c(x,t)=\dot\beta_t \learnedparam{b_c},\ \mathbf 1^\top \learnedparam{b_c}=0,\ \|\learnedparam{b_c}\|=r$ & Correlation distance \\
$v_c(x,t)=\dot\beta_t \learnedparam{b_c}+\learnedparam{A_t}(x-\beta_t \learnedparam{b_c})$ & Mahalanobis distance \\
$v_c(x,t)=\dot\beta_t \learnedparam{b_c}+\learnedparam{A_{c,t}}(x-\beta_t \learnedparam{b_c})$ & Gaussian Jeffreys divergence \\
$v_\theta(x,t)=\dot\beta_t \learnedparam{b_\theta}+\learnedparam{\bar A_{\theta,t}}(x-\beta_t \learnedparam{b_\theta})$ & Linear Fisher information \\
$\learnedparam{v_c(x,t)},\ \learnedparam{v_\theta(x,t)} \in\mathcal V_{\mathrm{unconstrained}}$ & Jeffreys divergence; full Fisher information \\
$\learnedparam{v_c(x,t; \tau)},\ \learnedparam{v_\theta(x,t; \tau)}$  & Auxiliary $\tau$-dependent distance \\
$v_c(x,t)=\learnedparam{a_{c,t}}+\learnedparam{\Omega_{c,t}}(x-\learnedparam{o_{c,t}})+\learnedparam{\lambda_{c,t}}(x-\learnedparam{o_{c,t}}),\ \learnedparam{\Omega_{c,t}}^\top=-\learnedparam{\Omega_{c,t}}$ & Geometry-template Jeffreys divergence$^\ast$ \\
\bottomrule
\end{tabular}
\par\hfill{\scriptsize $^\ast$: new distance introduced in this paper;\quad \textcolor{learnedparamred}{dark red}: outputs from trainable neural networks}
\end{table}

Fig.~\ref{fig:velocity-class-distance-correspondence} illustrates this correspondence for three representative velocity classes. A velocity field that is constant in $x$ can only translate the Gaussian source. At the population optimum, this translation shifts the source mean to the target-distribution mean while leaving its covariance unchanged, so the Jeffreys divergence reduces to the squared Euclidean distance between target means. An affine-in-$x$ velocity field can both translate and linearly transform the Gaussian source. Its population optimum therefore produces the moment-matched Gaussian approximation to the target distribution, and the resulting Jeffreys divergence is the Gaussian Jeffreys divergence. Finally, in the population limit, an unconstrained velocity field recovers the target distribution itself, yielding the full Jeffreys divergence.

Beyond a unification of existing distance metrics, the flow-matching framework enables principled designs of new metrics by specifying the source distribution as a distributional ``template'' and choosing a velocity class that constrains how the template can be transformed. The resulting distance is the Jeffreys divergence between the two fitted target distributions. As a demonstration, we design a new distance metric using a velocity class restricted to similarity transformations, thereby preserving the geometry of the source distribution when approximating the target distributions (Supplementary Section~\ref{sec:geometry}). This metric can be useful when the geometry of the target distributions is known a priori or when one wishes to emphasize a particular geometric property.

\FloatBarrier
\section{Applications}

\label{sec:synthetic}

We evaluate the flow-matching framework across a number of applications. First, we use a simple synthetic categorical dataset to test whether different velocity constraints recover multiple representational distance metrics (Section~\ref{sec:categorical-application}). The remaining four settings involve continuous variables: Fisher information in simulations and neural recordings (Section~\ref{sec:fisher}), and time-resolved RDMs in simulations and neural recordings (Section~\ref{sec:time-resolved}). The later four settings highlight an important advantage of flow matching for complex distributions that depend on continuous variables.

\subsection{Flow matching recovers multiple distance metrics on a synthetic categorical dataset}
\label{sec:categorical-application}

We first test whether practically flow matching with different velocity constraints enables the estimation of commonly used representational distance metrics, as predicted by our analytical results.
We construct a five-condition Gaussian dataset in $\mathbb R^3$, and fit flow matching with the corresponding velocity constraint for each metric in Table~\ref{tab:constraints}. Accuracy, measured by the Pearson correlation between the unique off-diagonal entries of the estimated and ground-truth RDMs, improves with total sample size across all six metrics (Fig.~\ref{fig:discrete}A). The estimated RDMs closely resemble the corresponding ground-truth RDMs defined by the six metrics (Fig.~\ref{fig:discrete}B), supporting our main theoretical claim that flow matching can be used as a unified estimator for these metrics. 
Notably, most of these metrics depend only on first- and second-order moments, for which robust and efficient estimators such as Ledoit--Wolf shrinkage are available \citep{LedoitWolf2004Covariance}. Indeed, when comparing flow matching with these specialized estimators on this simple Gaussian dataset, we did not observe advantages in estimated accuracy for flow matching (Fig.~\ref{fig:supp-categorical-distance-errors}). 
In later applications, we will show that for datasets involving complex distributions and continuous variables, flow matching exhibits various advantages.

\begin{figure}[H]
  \centering
  \includegraphics[width=\linewidth]{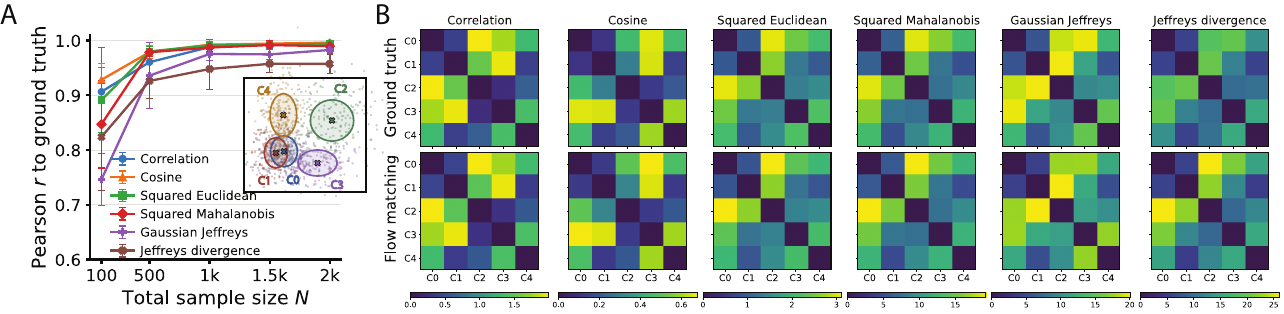}
  \caption{\textbf{Flow matching recovers distance metrics on a synthetic categorical dataset.} (\textbf{A}) Pearson correlation between the unique off-diagonal entries of the estimated and ground-truth RDMs as a function of total sample size $N$. Curves and error bars show means and sample standard deviations over five repetitions; the inset visualizes the five-condition dataset. (\textbf{B}) Ground-truth RDMs (top) and the corresponding flow-matching estimates using 2,000 total samples (bottom).}
  \label{fig:discrete}
\end{figure}

\subsection{Flow matching estimates Fisher information in simulations and neural data}
\label{sec:fisher}
\label{sec:fisher-synthetic}

Fisher information is a local measure of representational distance. \textit{Linear} Fisher information quantifies how precisely small stimulus changes can be estimated by an optimal, locally unbiased linear readout of neural responses \citep{Brunel1998PopulationCoding,Abbott1999CorrelatedVariability,Averbeck2006NeuralCorrelations,MorenoBote2014InformationLimiting,Kanitscheider2015MeasuringFisher,Le_2026}, whereas \textit{full} Fisher information measures the sensitivity of the entire response distribution through the log-likelihood score, including information accessible only through nonlinear readouts. Estimating Fisher information can be challenging, partly because it is difficult to model how response distributions depend on continuous conditions. In contrast, our flow-matching approach can, in principle, interpolate naturally across conditions by approximating the velocity field with a deep neural network, without requiring additional data binning or modeling assumptions. Below, we evaluate the performance of this flow-matching framework in estimating both quantities.

\paragraph{Simulations.}

We first estimate linear Fisher information using a toy dataset whose conditional response distribution is Gaussian given a continuous variable $\theta$. We fit a condition-dependent affine (linear-in-state) velocity field, $v_\theta(x,t)=\dot\beta_t b_\theta+\bar{A}_{\theta,t}(x-\beta_t b_\theta)$ (Table~\ref{tab:constraints}). As baselines, we consider several existing methods. Local linear estimator (OLE) estimates linear Fisher information by binning the data and performing linear classification on adjacent bins \citep{Kanitscheider2015MeasuringFisher}. Gaussian-process kernel regression (GKR; see Section~\ref{sec:gkr}) uses Gaussian-process regression to estimate the distributional mean as a function of $\theta$ and then uses kernel regression to estimate covariances \citep{YeWessel2025GridInformation}. The Wishart process jointly models the mean and covariance with Gaussian processes \citep{Nejatbakhsh2023Wishart} (Section~\ref{sec:wishart-process-upstream}).
On this Gaussian dataset, affine flow yields a smooth estimate that closely follows the ground-truth linear Fisher-information when varying the sample-size and number of neurons(Fig.~\ref{fig:independent-ou-fisher-jeffreys}B, C). GKR and the Wishart process perform fairly well on low-dimensional datasets, but their errors increase substantially with response dimension (Fig.~\ref{fig:independent-ou-fisher-jeffreys}C; also see Pearson correlations in Supplementary Figure~\ref{fig:supp-continuous-benchmark-correlations}A). 

\begin{figure}[!htbp]
    \centering
    \includegraphics[width=0.90\textwidth]{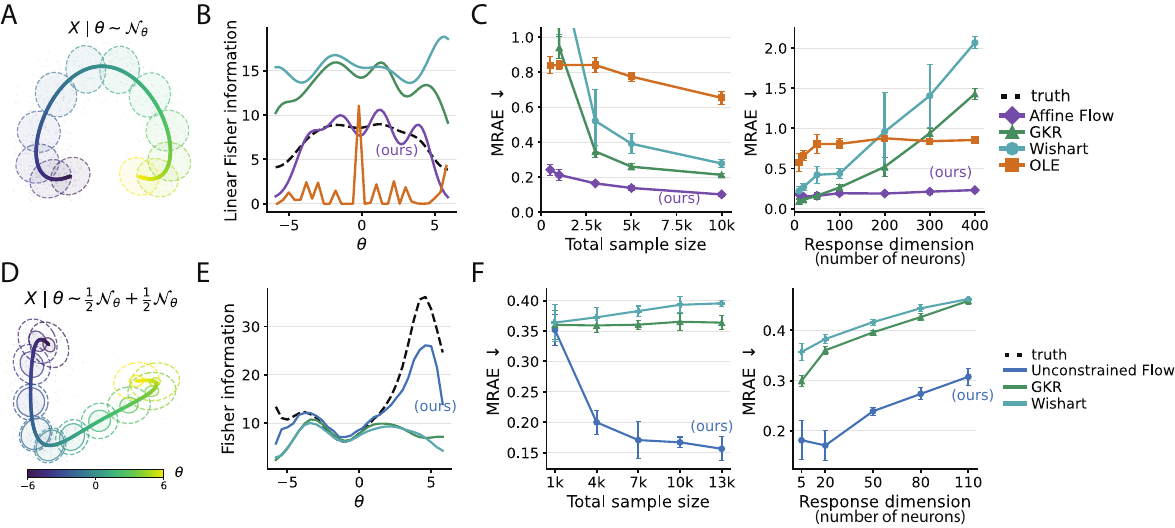}
    \caption{\textbf{Flow matching performs best in estimating Fisher information on toy datasets.} (\textbf{A}) Illustration of the Gaussian conditional-response dataset. (\textbf{B}) Representative linear Fisher-information estimates ($d = 300, N = 1000$). (\textbf{C}) Mean relative absolute error (MRAE) versus total sample size at $d=300$ (left) and versus response dimension at $N=1{,}000$ (right). (\textbf{D}) Equal-weight Gaussian scale-mixture dataset. (\textbf{E}) Representative full Fisher-information estimates. (\textbf{F}) MRAE versus total sample size at $d=20$ (left) and versus response dimension at $N=7{,}000$ (right). Points and error bars in (\textbf{C}) and (\textbf{F}) show means and standard deviations over five repetitions.}
    \label{fig:independent-ou-fisher-jeffreys}
\end{figure}

Furthermore, we evaluate full Fisher information on another Gaussian mixture dataset (Fig.~\ref{fig:independent-ou-fisher-jeffreys}D). GKR and the Wishart process fail in estimation because they assume Gaussian distributions, whereas unconstrained flow successfully targets the full Fisher information (Fig.~\ref{fig:independent-ou-fisher-jeffreys}E and F).

\paragraph{Neural data.}
\label{sec:fisher-neural-data}

Next, we apply flow matching to neural recording data. We analyze six mouse visual-cortex recording sessions from \citet{Stringer2021Geometry}, each containing responses from 10,000 to 25,000 neurons across about 4,000 static-grating trials (Fig.~\ref{fig:stringer}A). Direct estimation of Fisher information in this high-dimensional response space is statistically challenging and computationally demanding. Following common dimensionality-reduction practice \citep{Cunningham2014Dimensionality}, we performed PCA separately for each session using only the training trials and apply the fixed projection to the training, validation, and test responses (200 principal components, about 70\% of the variance). All models are fitted and evaluated in this dimensionality-reduced PCA space.

\begin{figure}[t]
  \centering
  \includegraphics[width=0.9\linewidth]{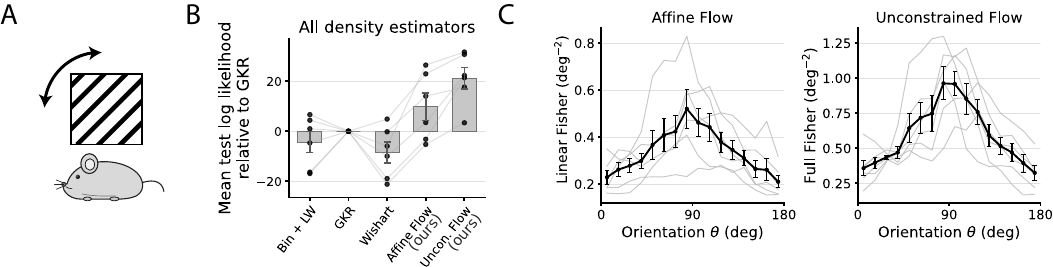}
  \caption{\textbf{Flow matching estimates Fisher information in mouse visual-cortex recordings.} (\textbf{A}) Experimental paradigm: visual-cortex responses are recorded while mice view static gratings. (\textbf{B}) Session-wise mean held-out log likelihood relative to GKR. Gray lines connect estimates from the same session. Bars and error bars show the across-session mean and standard error of the mean (SEM). (\textbf{C}) Linear Fisher information from affine flow and full Fisher information from unconstrained flow versus grating orientation. Gray lines show six individual sessions, each from a different mouse; black curves and error bars show the across-session mean and SEM.}
  \label{fig:stringer}
\end{figure}

For a given orientation range, visualizations of the PC1--PC2 subspace show that, while the Gaussian estimators (Bin + LW, GKR, the Wishart process, and affine flow) generally fit the held-out data well, some discrepancies remain (Supplementary Fig.~\ref{fig:supp-stringer-local-density}). By contrast, unconstrained flow captures a non-elliptical density contour and visually fits the data better. Quantitatively, across the six sessions, affine flow has the highest mean held-out log likelihood among the Gaussian-family estimators. Furthermore, unconstrained flow, which does not impose Gaussianity, has the highest held-out log-likelihood overall in every session (Fig.~\ref{fig:stringer}B). Finally, the affine-flow linear Fisher estimate and the unconstrained-flow full Fisher estimate both peak near $90^\circ$ (Fig.~\ref{fig:stringer}C; see similar Fisher curves estimated by the other methods in Supplementary Fig.~\ref{fig:supp-stringer-density-fisher}), suggesting greater discriminability of horizontal visual patterns. These results are different from the oblique effects reported in humans~\citep{Appelle1972,Furmanski2000ObliqueEffect} and macaques~\citep{Li2003oblique}, which show higher discriminability for both horizontal and vertical orientations, yet are consistent with a recent study in mice \citep{Dipoppa2024Adaptation}.

\subsection{Estimating time-resolved distances using flow matching}
\label{sec:time-resolved}
\label{sec:time-resolved-synthetic}

Neural population representations are not static, rather they evolve dynamically over time. Understanding such dynamics is important for understanding the computations performed by the brain. Time-resolved RDMs address this question by tracking how representational distances change over time. Conventional estimation typically involves dividing the data into discrete time bins. Our flow-matching framework does not require binning, as the velocity neural network learns to interpolate across time. In this section, we evaluate whether our method can estimate time-resolved RDMs.

\begin{figure}[t]
  \centering
  \includegraphics[width=0.9\linewidth]{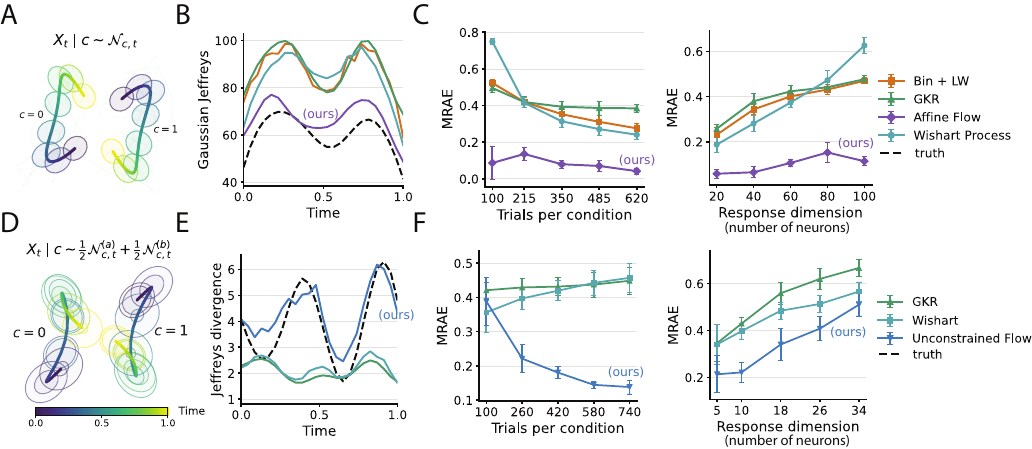}
  \caption{\textbf{Flow matching best estimates time-resolved distances on OU-trial toy datasets.} (\textbf{A}) Gaussian OU dataset with two conditions. (\textbf{B}) Representative time-resolved Gaussian Jeffreys-divergence estimates at $d=60$ and 260 trials per condition. (\textbf{C}) MRAE versus trials per condition at $d=60$ (left) and versus response dimension at 260 trials per condition (right). (\textbf{D}) Gaussian-mixture OU dataset. (\textbf{E}) Representative full Jeffreys-divergence estimates at $d=10$ and 260 trials per condition. (\textbf{F}) MRAE versus trials per condition at $d=10$ (left) and versus response dimension at 260 trials per condition (right). Points and error bars in (\textbf{C}) and (\textbf{F}) show means and standard deviations over five repetitions.}
  \label{fig:ou-time-resolved-jeffreys}
\end{figure}

\paragraph{Simulations.}

We construct two-condition toy datasets with temporally correlated Ornstein--Uhlenbeck (OU) trials \citep{UhlenbeckOrnstein1930Brownian} and either Gaussian or two-scale Gaussian-mixture marginals. 
We measure pairwise distances using either Gaussian Jeffreys divergence or Jeffreys divergence. Gaussian Jeffreys divergence generalizes Mahalanobis distance by allowing the two conditions to have different covariance matrices, whereas Jeffreys divergence makes no Gaussian assumption and can therefore capture differences between their full distributions.
We find that, on the Gaussian dataset, affine flow outperforms conventional methods in estimating Gaussian Jeffreys divergence throughout both the trial-count and response-dimension sweeps (Fig.~\ref{fig:ou-time-resolved-jeffreys}A--C)
On the mixture dataset, only the unconstrained flow-matching estimator captures the non-Gaussian structure and outperforms the other methods in estimating Jeffreys divergence (Fig.~\ref{fig:ou-time-resolved-jeffreys}D--F).

\paragraph{Neural data.}
\label{sec:time-resolved-neural-data}

We next apply time-resolved distance estimation to extracellular spiking activity from the Allen Visual Coding Neuropixels dataset \citep{Siegle2021Survey}. Each drifting-grating presentation is analyzed over four seconds: one second before stimulus onset, two seconds of stimulus presentation, and one second after stimulus offset (Fig.~\ref{fig:time-resolved-jeffreys}A). We treat each combination of motion direction and temporal frequency as a condition and examine how the pairwise representational distances evolve over time. We analyze 32 recording sessions, one for per mouse. Spikes are counted in nonoverlapping 100-ms bins, transformed elementwise using a variance-stablizing transformation as $k\mapsto\sqrt{k}$, and projected onto the top 70 principal components, which explains about 70\% variance.

Quantitatively, affine flow leads to the highest mean held-out log-likelihood among the Gaussian estimators, whereas unconstrained flow has the highest mean held-out log-likelihood overall (Fig.~\ref{fig:time-resolved-jeffreys}B; see example data visualizations in Supplementary Figs.~\ref{fig:supp-allen-ephys-density-trajectories} and~\ref{fig:supp-allen-ephys-density-fits}). Similar results hold when using 10 rather than 70 principal components (Supplementary Fig.~\ref{fig:supp-allen-pca10-five-session-likelihood}). Across all five temporal frequencies, both flow estimates of Jeffreys divergence rise sharply after stimulus onset and fall after stimulus offset. Higher temporal frequencies show smaller overall distances, suggesting poorer discrimination between grating directions and less directional information encoded at faster temporal frequencies (Fig.~\ref{fig:time-resolved-jeffreys}C; also see Supplementary Figs.~\ref{fig:supp-allen-ephys-jeffreys-timecourses} and~\ref{fig:supp-allen-ephys-all-condition-rdms}).
Individual RDMs exhibit semi-periodic structures, implying that therepresentational distances between drifting gratings of the opposite directions are closer compared to gratings with orthogonal directions (Fig.~\ref{fig:time-resolved-jeffreys}C bottom; also see Supplementary Fig.~\ref{fig:supp-allen-two-photon-summary} for a similar pattern from analyzing calcium imaging data from \citet{deVries2020VisualCoding}). %

Using the RDMs at different time points, we can also assess neural representational stability over time by computing time-to-time RDM correlations (Fig.~\ref{fig:time-resolved-jeffreys}D; see Supplementary Fig.~\ref{fig:supp-allen-rdm-time-correlations} for the other methods). The mean correlation is higher during stimulus presentation (0--2 s; mean = 0.73) than before ($-1$--0 s; mean = 0.055) or after (2--3 s; mean = 0.25). This suggests that the structure of neural representational geometry are relatively stable during stimulus presentation. Together with observations in Fig.~\ref{fig:time-resolved-jeffreys}C,
these results suggest that, while the neural manifolds expand and shrink over time, the shape of the manifolds is generally preserved.

We also find that methods such as GKR tend to oversmooth the distribution, perhaps because they use Gaussian processes and assume Gaussian response distributions, producing spuriously high correlations of RDMs even before stimulus presentation (Fig.~\ref{fig:time-resolved-jeffreys}D).

\begin{figure}[t]
  \centering
  \includegraphics[width=0.8\linewidth]{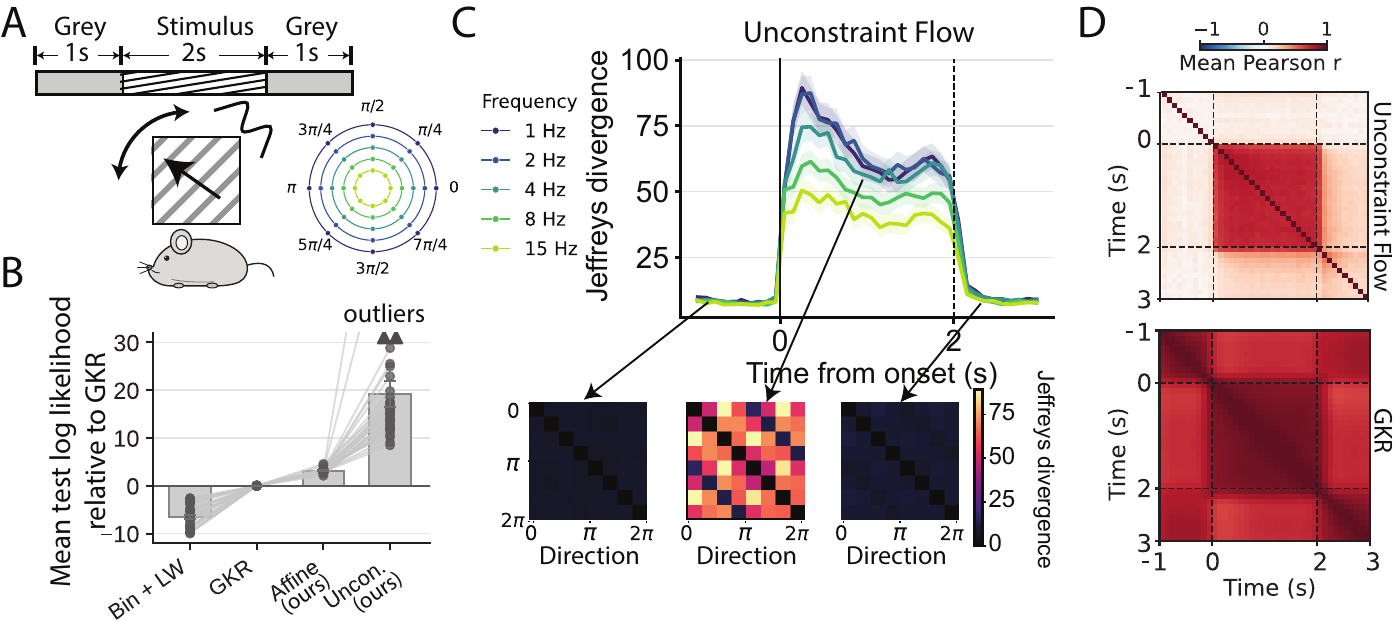}
  \caption{\textbf{Flow matching estimates time-resolved distances from Allen Neuropixels recordings.} (\textbf{A}) Four-second trial structure. (\textbf{B}) Mean test log-likelihood relative to the GKR method. Dots denote recording sessions, gray lines connect estimates from the same session, and bars show the mean across 32 sessions; triangles indicate two outliers above the displayed range. (\textbf{C}) Unconstrained-flow Jeffreys divergence averaged over the 28 unique pairs among the eight directions separately at each temporal frequency within each session (top), and 4-Hz direction RDMs averaged across sessions at the bin centers nearest $-0.5$, $1.0$, and $2.5$ seconds (bottom). Colors indicate temporal frequency. Curves and shaded bands show the across-session mean and SEM, respectively. (\textbf{D}) Time-to-time RDM correlations. Colors show mean Pearson correlations between RDMs at pairs of time points. 
  }
  \label{fig:time-resolved-jeffreys}
\end{figure}

\FloatBarrier
\section{Discussion and Conclusion}
\label{sec:discussion}

We have introduced a flow-matching framework that unifies neural representational dissimilarity metrics as Jeffreys divergences under different velocity-field constraints. It provides a common estimator with empirical advantages for high-dimensional data and complex distributions conditioned on continuous variables, and enables principled design of new metrics through source distributions and velocity constraints.
Our results has several limitations. Because the theoretical correspondences hold at global population optima, which neural-network training may not reach, the fitted estimates can therefore deviate from their intended metric interpretations. Also, our Validation has focused on visual-cortex recordings. Extending it to other brain systems and measurement techniques (such as fMRI and EEG), as well as deep neural network models, remains an important direction for future investigations. 

\subsection*{AI use statement}
AI-assisted tools were used for language editing and manuscript organization under author direction. The authors reviewed the scientific claims, derivations, citations, and conclusions and take responsibility for the final content.

\subsection*{Ethics statement}
This work analyzes synthetic data and previously collected animal recordings. No new animal or human-subject experiments were conducted for this study.

\subsection*{Reproducibility statement}
The Supplementary Material provides complete proofs, estimator specifications, dataset construction and preprocessing details, hyperparameter searches, and additional results. The real-data experiments use the visual-cortex recordings described by \citet{Stringer2021Geometry}, two-photon calcium-imaging sessions from the Allen Brain Observatory Visual Coding dataset, and Allen Visual Coding Neuropixels recordings \citep{Siegle2021Survey}. Code is available at \url{https://github.com/AgeYY/FlowRDM}.

\clearpage
\bibliographystyle{plainnat}
\bibliography{representational_flow_matching_arxiv}

\begin{thebibliography}{50}
\providecommand{\natexlab}[1]{#1}
\providecommand{\url}[1]{\texttt{#1}}
\expandafter\ifx\csname urlstyle\endcsname\relax
  \providecommand{\doi}[1]{doi: #1}\else
  \providecommand{\doi}{doi: \begingroup \urlstyle{rm}\Url}\fi

\bibitem[Abbott and Dayan(1999)]{Abbott1999CorrelatedVariability}
L.~F. Abbott and Peter Dayan.
\newblock The effect of correlated variability on the accuracy of a population code.
\newblock \emph{Neural Computation}, 11\penalty0 (1):\penalty0 91--101, 1999.
\newblock \doi{10.1162/089976699300016827}.

\bibitem[Albergo and Vanden-Eijnden(2023)]{Albergo2023StochasticInterpolants}
Michael~S. Albergo and Eric Vanden-Eijnden.
\newblock Building normalizing flows with stochastic interpolants.
\newblock In \emph{International Conference on Learning Representations}, 2023.

\bibitem[Amari(2016)]{Amari2016InformationGeometry}
Shun-ichi Amari.
\newblock \emph{Information Geometry and Its Applications}.
\newblock Springer Japan, 2016.
\newblock \doi{10.1007/978-4-431-55978-8}.

\bibitem[Appelle(1972)]{Appelle1972}
Stuart Appelle.
\newblock Perception and discrimination as a function of stimulus orientation: the" oblique effect" in man and animals.
\newblock \emph{Psychological bulletin}, 78\penalty0 (4):\penalty0 266, 1972.

\bibitem[Averbeck et~al.(2006)Averbeck, Latham, and Pouget]{Averbeck2006NeuralCorrelations}
Bruno~B. Averbeck, Peter~E. Latham, and Alexandre Pouget.
\newblock Neural correlations, population coding and computation.
\newblock \emph{Nature Reviews Neuroscience}, 7\penalty0 (5):\penalty0 358--366, 2006.
\newblock \doi{10.1038/nrn1888}.

\bibitem[Bruinsma et~al.(2020)Bruinsma, Perim, Tebbutt, Hosking, Solin, and Turner]{Bruinsma2020ScalableGP}
Wessel Bruinsma, Eric Perim, William Tebbutt, Scott Hosking, Arno Solin, and Richard Turner.
\newblock Scalable exact inference in multi-output gaussian processes.
\newblock In \emph{Proceedings of the 37th International Conference on Machine Learning}, volume 119 of \emph{Proceedings of Machine Learning Research}, pages 1190--1201, 2020.

\bibitem[Brunel and Nadal(1998)]{Brunel1998PopulationCoding}
Nicolas Brunel and Jean-Pierre Nadal.
\newblock Mutual information, fisher information, and population coding.
\newblock \emph{Neural Computation}, 10\penalty0 (7):\penalty0 1731--1757, 1998.
\newblock \doi{10.1162/089976698300017115}.

\bibitem[Chen et~al.(2018)Chen, Rubanova, Bettencourt, and Duvenaud]{Chen2018NeuralODE}
Ricky T.~Q. Chen, Yulia Rubanova, Jesse Bettencourt, and David~K. Duvenaud.
\newblock Neural ordinary differential equations.
\newblock In \emph{Advances in Neural Information Processing Systems}, volume~31, 2018.

\bibitem[Cichy et~al.(2014)Cichy, Pantazis, and Oliva]{Cichy2014Resolving}
Radoslaw~Martin Cichy, Dimitrios Pantazis, and Aude Oliva.
\newblock Resolving human object recognition in space and time.
\newblock \emph{Nature Neuroscience}, 17:\penalty0 455--462, 2014.
\newblock \doi{10.1038/nn.3635}.

\bibitem[Cunningham and Yu(2014)]{Cunningham2014Dimensionality}
John~P. Cunningham and Byron~M. Yu.
\newblock Dimensionality reduction for large-scale neural recordings.
\newblock \emph{Nature Neuroscience}, 17\penalty0 (11):\penalty0 1500--1509, 2014.
\newblock \doi{10.1038/nn.3776}.

\bibitem[Davtyan et~al.(2023)Davtyan, Sameni, and Favaro]{Davtyan2023VideoFlow}
Aram Davtyan, Sepehr Sameni, and Paolo Favaro.
\newblock Efficient video prediction via sparsely conditioned flow matching.
\newblock In \emph{Proceedings of the IEEE/CVF International Conference on Computer Vision}, pages 23263--23274, 2023.

\bibitem[de~Vries et~al.(2020)]{deVries2020VisualCoding}
Saskia E.~J. de~Vries et~al.
\newblock A large-scale standardized physiological survey reveals functional organization of the mouse visual cortex.
\newblock \emph{Nature Neuroscience}, 23:\penalty0 138--151, 2020.
\newblock \doi{10.1038/s41593-019-0550-9}.

\bibitem[Diedrichsen and Kriegeskorte(2017)]{Diedrichsen2017RepresentationalModels}
J{\"o}rn Diedrichsen and Nikolaus Kriegeskorte.
\newblock Representational models: A common framework for understanding encoding, pattern-component, and representational-similarity analysis.
\newblock \emph{PLOS Computational Biology}, 13\penalty0 (4):\penalty0 e1005508, 2017.
\newblock \doi{10.1371/journal.pcbi.1005508}.

\bibitem[Dipoppa et~al.(2024)Dipoppa, Nogueira, Bugeon, Friedman, Reddy, Harris, Ringach, Miller, Carandini, and Fusi]{Dipoppa2024Adaptation}
Mario Dipoppa, Ramon Nogueira, St{\'e}phane Bugeon, Yoni Friedman, Charu~B. Reddy, Kenneth~D. Harris, Dario~L. Ringach, Kenneth~D. Miller, Matteo Carandini, and Stefano Fusi.
\newblock Adaptation shapes the representational geometry in mouse {V1} to efficiently encode the environment.
\newblock \emph{bioRxiv}, 2024.
\newblock \doi{10.1101/2024.12.11.628035}.

\bibitem[Esser et~al.(2024)Esser, Kulal, Blattmann, Entezari, M{"u}ller, Saini, Levi, Lorenz, Sauer, Boesel, Podell, Dockhorn, English, and Rombach]{Esser2024RectifiedFlow}
Patrick Esser, Sumith Kulal, Andreas Blattmann, Rahim Entezari, Jonas M{"u}ller, Harry Saini, Yam Levi, Dominik Lorenz, Axel Sauer, Frederic Boesel, Dustin Podell, Tim Dockhorn, Zion English, and Robin Rombach.
\newblock Scaling rectified flow transformers for high-resolution image synthesis.
\newblock In \emph{Proceedings of the 41st International Conference on Machine Learning}, volume 235, pages 12606--12633, 2024.

\bibitem[Fisher(1922)]{Fisher1922Foundations}
Ronald~A. Fisher.
\newblock On the mathematical foundations of theoretical statistics.
\newblock \emph{Philosophical Transactions of the Royal Society of London. Series A}, 222:\penalty0 309--368, 1922.
\newblock \doi{10.1098/rsta.1922.0009}.

\bibitem[Furmanski and Engel(2000)]{Furmanski2000ObliqueEffect}
Christopher~S. Furmanski and Stephen~A. Engel.
\newblock An oblique effect in human primary visual cortex.
\newblock \emph{Nature Neuroscience}, 3\penalty0 (6):\penalty0 535--536, 2000.
\newblock \doi{10.1038/75702}.

\bibitem[Grathwohl et~al.(2019)Grathwohl, Chen, Bettencourt, Sutskever, and Duvenaud]{Grathwohl2019FFJORD}
Will Grathwohl, Ricky T.~Q. Chen, Jesse Bettencourt, Ilya Sutskever, and David~K. Duvenaud.
\newblock {FFJORD}: Free-form continuous dynamics for scalable reversible generative models.
\newblock In \emph{International Conference on Learning Representations}, 2019.

\bibitem[Hall(2015)]{Hall2015LieGroups}
Brian~C. Hall.
\newblock \emph{Lie Groups, Lie Algebras, and Representations: An Elementary Introduction}.
\newblock Springer, 2 edition, 2015.
\newblock \doi{10.1007/978-3-319-13467-3}.

\bibitem[Hu et~al.(2024)Hu, Wu, Asano, Mettes, Fernando, Ommer, and Snoek]{Hu2024TextFlow}
Vincent Hu, Di~Wu, Yuki Asano, Pascal Mettes, Basura Fernando, Bj{"o}rn Ommer, and Cees Snoek.
\newblock Flow matching for conditional text generation in a few sampling steps.
\newblock In \emph{Proceedings of the 18th Conference of the European Chapter of the Association for Computational Linguistics (Volume 2: Short Papers)}, pages 380--392. Association for Computational Linguistics, 2024.
\newblock \doi{10.18653/v1/2024.eacl-short.33}.

\bibitem[Kanitscheider et~al.(2015)Kanitscheider, Coen-Cagli, Kohn, and Pouget]{Kanitscheider2015MeasuringFisher}
Ingmar Kanitscheider, Ruben Coen-Cagli, Adam Kohn, and Alexandre Pouget.
\newblock Measuring fisher information accurately in correlated neural populations.
\newblock \emph{PLOS Computational Biology}, 11\penalty0 (6):\penalty0 e1004218, 2015.
\newblock \doi{10.1371/journal.pcbi.1004218}.

\bibitem[Kohn et~al.(2016)Kohn, Coen-Cagli, Kanitscheider, and Pouget]{Kohn2016CorrelationsInformation}
Adam Kohn, Ruben Coen-Cagli, Ingmar Kanitscheider, and Alexandre Pouget.
\newblock Correlations and neuronal population information.
\newblock \emph{Annual Review of Neuroscience}, 39:\penalty0 237--256, 2016.
\newblock \doi{10.1146/annurev-neuro-070815-013851}.

\bibitem[Kornblith et~al.(2019)Kornblith, Norouzi, Lee, and Hinton]{Kornblith2019Similarity}
Simon Kornblith, Mohammad Norouzi, Honglak Lee, and Geoffrey Hinton.
\newblock Similarity of neural network representations revisited.
\newblock In \emph{Proceedings of the 36th International Conference on Machine Learning}, pages 3519--3529, 2019.

\bibitem[Kriegeskorte(2009)]{Kriegeskorte2009PopulationCodes}
Nikolaus Kriegeskorte.
\newblock Relating population-code representations between man, monkey, and computational models.
\newblock \emph{Frontiers in Neuroscience}, 3\penalty0 (3):\penalty0 363--373, 2009.
\newblock \doi{10.3389/neuro.01.035.2009}.

\bibitem[Kriegeskorte and Kievit(2013)]{KriegeskorteKievit2013Geometry}
Nikolaus Kriegeskorte and Rogier~A. Kievit.
\newblock Representational geometry: Integrating cognition, computation, and the brain.
\newblock \emph{Trends in Cognitive Sciences}, 17\penalty0 (8):\penalty0 401--412, 2013.
\newblock \doi{10.1016/j.tics.2013.06.007}.

\bibitem[Kriegeskorte and Wei(2021)]{KriegeskorteWei2021TuningGeometry}
Nikolaus Kriegeskorte and Xue-Xin Wei.
\newblock Neural tuning and representational geometry.
\newblock \emph{Nature Reviews Neuroscience}, 22\penalty0 (11):\penalty0 703--718, 2021.
\newblock \doi{10.1038/s41583-021-00502-3}.

\bibitem[Kriegeskorte et~al.(2008{\natexlab{a}})Kriegeskorte, Mur, and Bandettini]{Kriegeskorte2008RSA}
Nikolaus Kriegeskorte, Marieke Mur, and Peter~A. Bandettini.
\newblock Representational similarity analysis: connecting the branches of systems neuroscience.
\newblock \emph{Frontiers in Systems Neuroscience}, 2:\penalty0 4, 2008{\natexlab{a}}.
\newblock \doi{10.3389/neuro.06.004.2008}.

\bibitem[Kriegeskorte et~al.(2008{\natexlab{b}})Kriegeskorte, Mur, Ruff, Kiani, Bodurka, Esteky, Tanaka, and Bandettini]{Kriegeskorte2008Categorical}
Nikolaus Kriegeskorte, Marieke Mur, Douglas~A. Ruff, Roozbeh Kiani, Jerzy Bodurka, Hossein Esteky, Keiji Tanaka, and Peter~A. Bandettini.
\newblock Matching categorical object representations in inferior temporal cortex of man and monkey.
\newblock \emph{Neuron}, 60\penalty0 (6):\penalty0 1126--1141, 2008{\natexlab{b}}.
\newblock \doi{10.1016/j.neuron.2008.10.043}.

\bibitem[Kullback and Leibler(1951)]{Kullback1951Information}
Solomon Kullback and Richard~A. Leibler.
\newblock On information and sufficiency.
\newblock \emph{The Annals of Mathematical Statistics}, 22\penalty0 (1):\penalty0 79--86, 1951.
\newblock \doi{10.1214/aoms/1177729694}.

\bibitem[Le and Wei(2026)]{Le_2026}
Dylan Le and Xue-Xin Wei.
\newblock Split-trial analysis reveals the information capacity of neural population codes.
\newblock \emph{eLife}, 2026.
\newblock \doi{10.7554/elife.111658.1}.

\bibitem[Ledoit and Wolf(2004)]{LedoitWolf2004Covariance}
Olivier Ledoit and Michael Wolf.
\newblock A well-conditioned estimator for large-dimensional covariance matrices.
\newblock \emph{Journal of Multivariate Analysis}, 88\penalty0 (2):\penalty0 365--411, 2004.
\newblock \doi{10.1016/S0047-259X(03)00096-4}.

\bibitem[Li et~al.(2003)Li, Peterson, and Freeman]{Li2003oblique}
Baowang Li, Matthew~R Peterson, and Ralph~D Freeman.
\newblock Oblique effect: a neural basis in the visual cortex.
\newblock \emph{Journal of neurophysiology}, 90\penalty0 (1):\penalty0 204--217, 2003.

\bibitem[Lipman et~al.(2023)Lipman, Chen, Ben-Hamu, Nickel, and Le]{Lipman2023FlowMatching}
Yaron Lipman, Ricky T.~Q. Chen, Heli Ben-Hamu, Maximilian Nickel, and Matt Le.
\newblock Flow matching for generative modeling.
\newblock In \emph{International Conference on Learning Representations}, 2023.

\bibitem[Loshchilov and Hutter(2019)]{LoshchilovHutter2019AdamW}
Ilya Loshchilov and Frank Hutter.
\newblock Decoupled weight decay regularization.
\newblock In \emph{International Conference on Learning Representations}, 2019.

\bibitem[Moreno-Bote et~al.(2014)Moreno-Bote, Beck, Kanitscheider, Pitkow, Latham, and Pouget]{MorenoBote2014InformationLimiting}
Rub{\'e}n Moreno-Bote, Jeffrey Beck, Ingmar Kanitscheider, Xaq Pitkow, Peter Latham, and Alexandre Pouget.
\newblock Information-limiting correlations.
\newblock \emph{Nature Neuroscience}, 17\penalty0 (10):\penalty0 1410--1417, 2014.
\newblock \doi{10.1038/nn.3807}.

\bibitem[Nejatbakhsh et~al.(2023)Nejatbakhsh, Garon, and Williams]{Nejatbakhsh2023Wishart}
Amin Nejatbakhsh, Isabel Garon, and Alex Williams.
\newblock Estimating noise correlations across continuous conditions with wishart processes.
\newblock In \emph{Advances in Neural Information Processing Systems}, volume~36, pages 54032--54045, 2023.

\bibitem[Nili et~al.(2014)Nili, Wingfield, Walther, Su, Marslen-Wilson, and Kriegeskorte]{Nili2014RSAToolbox}
Hamed Nili, Cai Wingfield, Alexander Walther, Li~Su, William Marslen-Wilson, and Nikolaus Kriegeskorte.
\newblock A toolbox for representational similarity analysis.
\newblock \emph{PLOS Computational Biology}, 10\penalty0 (4):\penalty0 e1003553, 2014.
\newblock \doi{10.1371/journal.pcbi.1003553}.

\bibitem[Papamakarios et~al.(2021)Papamakarios, Nalisnick, Rezende, Mohamed, and Lakshminarayanan]{Papamakarios2021NormalizingFlows}
George Papamakarios, Eric Nalisnick, Danilo~Jimenez Rezende, Shakir Mohamed, and Balaji Lakshminarayanan.
\newblock Normalizing flows for probabilistic modeling and inference.
\newblock \emph{Journal of Machine Learning Research}, 22\penalty0 (57):\penalty0 1--64, 2021.

\bibitem[Pouget et~al.(2000)Pouget, Dayan, and Zemel]{Pouget2000PopulationCodes}
Alexandre Pouget, Peter Dayan, and Richard Zemel.
\newblock Information processing with population codes.
\newblock \emph{Nature Reviews Neuroscience}, 1\penalty0 (2):\penalty0 125--132, 2000.
\newblock \doi{10.1038/35039062}.

\bibitem[Rao(1945)]{Rao1945Information}
C.~R. Rao.
\newblock Information and accuracy attainable in the estimation of statistical parameters.
\newblock \emph{Bulletin of the Calcutta Mathematical Society}, 37\penalty0 (3):\penalty0 81--91, 1945.

\bibitem[Rasmussen and Williams(2006)]{RasmussenWilliams2006GaussianProcesses}
Carl~Edward Rasmussen and Christopher K.~I. Williams.
\newblock \emph{Gaussian Processes for Machine Learning}.
\newblock MIT Press, 2006.
\newblock \doi{10.7551/mitpress/3206.001.0001}.

\bibitem[Rezende and Mohamed(2015)]{Rezende2015Variational}
Danilo~Jimenez Rezende and Shakir Mohamed.
\newblock Variational inference with normalizing flows.
\newblock In \emph{Proceedings of the 32nd International Conference on Machine Learning}, pages 1530--1538, 2015.

\bibitem[Rhodes et~al.(2020)Rhodes, Xu, and Gutmann]{Rhodes2020TRE}
Benjamin Rhodes, Kai Xu, and Michael~U. Gutmann.
\newblock Telescoping density-ratio estimation.
\newblock In \emph{Advances in Neural Information Processing Systems}, volume~33, 2020.

\bibitem[Seri{\`e}s et~al.(2009)Seri{\`e}s, Stocker, and Simoncelli]{Series2009Homunculus}
Peggy Seri{\`e}s, Alan~A. Stocker, and Eero~P. Simoncelli.
\newblock Is the homunculus ``aware'' of sensory adaptation?
\newblock \emph{Neural Computation}, 21\penalty0 (12):\penalty0 3271--3304, 2009.
\newblock \doi{10.1162/neco.2009.09-08-869}.

\bibitem[Seung and Sompolinsky(1993)]{SeungSompolinsky1993PopulationCodes}
H.~S. Seung and Haim Sompolinsky.
\newblock Simple models for reading neuronal population codes.
\newblock \emph{Proceedings of the National Academy of Sciences}, 90\penalty0 (22):\penalty0 10749--10753, 1993.
\newblock \doi{10.1073/pnas.90.22.10749}.

\bibitem[Siegle et~al.(2021)]{Siegle2021Survey}
Joshua~H. Siegle et~al.
\newblock Survey of spiking in the mouse visual system reveals functional hierarchy.
\newblock \emph{Nature}, 592:\penalty0 86--92, 2021.
\newblock \doi{10.1038/s41586-020-03171-x}.

\bibitem[Stringer et~al.(2021)Stringer, Michaelos, Tsyboulski, Lindo, and Pachitariu]{Stringer2021Geometry}
Carsen Stringer, Michalis Michaelos, Dmitri Tsyboulski, Sarah~E. Lindo, and Marius Pachitariu.
\newblock High-precision coding in visual cortex.
\newblock \emph{Cell}, 184\penalty0 (10):\penalty0 2767--2778.e15, 2021.
\newblock \doi{10.1016/j.cell.2021.03.042}.

\bibitem[Uhlenbeck and Ornstein(1930)]{UhlenbeckOrnstein1930Brownian}
George~E. Uhlenbeck and Leonard~S. Ornstein.
\newblock On the theory of the brownian motion.
\newblock \emph{Physical Review}, 36\penalty0 (5):\penalty0 823--841, 1930.
\newblock \doi{10.1103/PhysRev.36.823}.

\bibitem[Walther et~al.(2016)Walther, Nili, Ejaz, Alink, Kriegeskorte, and Diedrichsen]{Walther2016Reliability}
Alexander Walther, Hamed Nili, Naveed Ejaz, Arjen Alink, Nikolaus Kriegeskorte, and J{\"o}rn Diedrichsen.
\newblock Reliability of dissimilarity measures for multi-voxel pattern analysis.
\newblock \emph{NeuroImage}, 137:\penalty0 188--200, 2016.
\newblock \doi{10.1016/j.neuroimage.2015.12.012}.

\bibitem[Ye and Wessel(2025)]{YeWessel2025GridInformation}
Zeyuan Ye and Ralf Wessel.
\newblock Speed modulations in grid cell information geometry.
\newblock \emph{Nature Communications}, 16:\penalty0 7723, 2025.
\newblock \doi{10.1038/s41467-025-62856-x}.

\end{thebibliography}

\clearpage
\appendix
\begin{center}
  {\LARGE\bfseries Supplementary Material}
\end{center}
\vspace{1em}
\startcontents[appendix]
\section*{Supplementary Material Contents}
\printcontents[appendix]{}{1}{\setcounter{tocdepth}{3}}

\section{Mathematical Proofs of the Metric Correspondences under flow matching}
\label{app:derivations}

In this section, we show that flow-matching training under different velocity constraints yields Jeffreys divergences that correspond to different existing distance metrics. Optional log-likelihood fine-tuning recovers the same distance metrics as well, as shown in Supplementary Section~\ref{app:nll-derivations}.

\subsection{Assumptions and notation}

We use the term representational \emph{distance} in the sense common in neural data analysis: it is symmetric and nonnegative but need not obey the triangle inequality.

We prove the flow-matching population statements in Table~\ref{tab:constraints}. Unless stated otherwise, the source is $\rho_0=\mathcal N(0,I)$, $X_0\sim \rho_0$, $X_1^{(c)}\sim p_c$, and $X_0$ is independent of $X_1^{(c)}$. Each modeled condition has $\pi_{\mathcal C}(c)>0$, and each $p_c$ has finite second moments. Covariance matrices are invertible. The schedules are continuously differentiable and satisfy $\alpha_0=1$, $\alpha_1=0$, $\beta_0=0$, and $\beta_1=1$. For condition $c$, write
\begin{equation}
    X_t^{(c)}=\alpha_tX_0+\beta_tX_1^{(c)},\qquad
    U_t^{(c)}=\dot\alpha_tX_0+\dot\beta_tX_1^{(c)} .
\end{equation}
For a velocity class $\mathcal V$, let $v^{\mathcal V}=\arg\min_{v\in\mathcal V}\mathcal L_{\mathrm{FM}}(v)$, with $\mathcal L_{\mathrm{FM}}$ defined in Eq.~\eqref{eq:fm-objective}. Inserting $v^{\mathcal V}$ into the ODE produces $q_c^{\mathcal V}$ for each generic condition $c$. Pairwise results compare $q_a^{\mathcal V}$ and $q_b^{\mathcal V}$. The unknown response distributions $p_c$ need not be Gaussian.

Before proving the correspondences, we provide two formulas used repeatedly
below: the affine-flow endpoint distribution obtained by setting $t=1$ in
Eq.~\eqref{eq:affine-flow-gaussian-law}
(Subsection~\ref{app:affine-flow-gaussianity}) and the closed-form Jeffreys
divergence between Gaussian endpoints in
Eq.~\eqref{eq:affine-gaussian-jeffreys-evaluation}
(Subsection~\ref{app:affine-gaussian-evaluation}).

\subsection{Affine-flow solution}
\label{app:affine-flow-gaussianity}

\begin{lemma}[Affine flows preserve Gaussianity]
\label{lem:affine-flow-gaussian}
Let $A_t\in\mathbb R^{d\times d}$ and $r_t\in\mathbb R^d$ be continuous on $[0,1]$, and consider the affine ODE
\begin{equation}
    \dot X_t=A_tX_t+r_t,
    \qquad
    X_0\sim\mathcal N(m_0,S_0).
\end{equation}
Let $\Phi_t$ solve $\dot\Phi_t=A_t\Phi_t$ with $\Phi_0=I$, and let $\eta_t$ solve $\dot\eta_t=A_t\eta_t+r_t$ with $\eta_0=0$. Then
\begin{equation}
    X_t=\Phi_tX_0+\eta_t
    \label{eq:affine-flow-solution}
\end{equation}
and therefore
\begin{equation}
    X_t
    \sim
    \mathcal N\!\left(
        \Phi_tm_0+\eta_t,
        \Phi_tS_0\Phi_t^\top
    \right)
    \label{eq:affine-flow-gaussian-law}
\end{equation}
for every $t\in[0,1]$.
\end{lemma}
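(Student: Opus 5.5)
The plan is to solve the linear ODE explicitly and then read off the law of $X_t$ from the fact that affine images of Gaussians are Gaussian.

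\textbf{Existence and uniqueness.} Since $A_t$ and $r_t$ are continuous on $[0,1]$, the right-hand side $x\mapsto A_tx+r_t$ is globally Lipschitz in $x$, with constant $\sup_t\|A_t\|<\infty$. Standard linear ODE theory (Picard--Lindel\"of, with global existence because the growth is linear) therefore gives the following. The matrix ODE $\dot\Phi_t=A_t\Phi_t$, $\Phi_0=I$, has a unique $C^1$ solution on $[0,1]$. The inhomogeneous ODE for $\eta_t$ has a unique solution. For each initial value $x_0$, the ODE $\dot X_t=A_tX_t+r_t$ has a unique solution. I will also record that $\Phi_t$ is invertible: by Liouville's formula, $\det\Phi_t=\exp\int_0^t\operatorname{tr}A_s\,ds>0$. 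Invertibility is not strictly needed here, but it shows that a nondegenerate $S_0$ yields a nondegenerate covariance.

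\textbf{Verifying the solution formula.} Define $Y_t=\Phi_tx_0+\eta_t$ for a fixed $x_0$. Differentiating gives
\[
\dot Y_t=A_t\Phi_tx_0+A_t\eta_t+r_t=A_tY_t+r_t,
\]
and $Y_0=x_0$. By uniqueness, $Y_t=X_t$ pathwise, which is Eq.~\eqref{eq:affine-flow-solution}. Applying this realization by realization with $X_0$ random yields $X_t=\Phi_tX_0+\eta_t$ as random variables.

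\textbf{Distribution.} For fixed $t$, $X_t$ is the image of $X_0\sim\mathcal N(m_0,S_0)$ under the deterministic affine map $x\mapsto\Phi_tx+\eta_t$. Affine maps preserve Gaussianity, which can be seen through characteristic functions: $\mathbb E e^{iu^\top X_t}=e^{iu^\top\eta_t}\varphi_{X_0}(\Phi_t^\top u)$. From this, the mean is $\Phi_tm_0+\eta_t$ and the covariance is $\Phi_tS_0\Phi_t^\top$, which gives Eq.~\eqref{eq:affine-flow-gaussian-law}.

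The only step requiring any care is the uniqueness argument that identifies the flow with the explicit formula; continuity of the coefficients makes it routine.
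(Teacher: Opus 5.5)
Your proposal is correct and follows the same route as the paper: check that $\Phi_t x_0+\eta_t$ solves the affine ODE with initial value $x_0$, use uniqueness for linear ODEs to identify it with $X_t$, and conclude because an affine image of a Gaussian is Gaussian with mean $\Phi_t m_0+\eta_t$ and covariance $\Phi_t S_0\Phi_t^\top$. The extra details you supply (global Lipschitz existence and uniqueness, Liouville's formula, and the characteristic-function computation) only spell out what the paper's three-line proof leaves implicit.
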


\begin{proof}
Differentiating the right-hand side of Eq.~\eqref{eq:affine-flow-solution} shows that it satisfies the affine ODE and the initial condition $X_0$. Uniqueness of the linear ODE therefore gives Eq.~\eqref{eq:affine-flow-solution}. This is an affine transformation of the Gaussian random vector $X_0$, so Eq.~\eqref{eq:affine-flow-gaussian-law} follows.
\end{proof}

\subsection{Deterministic Jeffreys evaluation for affine Gaussian flows}
\label{app:affine-gaussian-evaluation}

Suppose the fitted velocity for condition $c$ is affine in the state,
\begin{equation}
    v_c^{\mathcal V}(x,t)=A_{c,t}x+r_{c,t},
\end{equation}
and the source is $\rho_0=\mathcal N(m_0,S_0)$. By
Lemma~\ref{lem:affine-flow-gaussian}, the distribution remains Gaussian along
the flow. Its mean $m_{c,t}=\mathbb E[X_t]$ and covariance
$S_{c,t}=\operatorname{Cov}(X_t)$ satisfy
\begin{align}
    \dot m_{c,t}
    &=
    A_{c,t}m_{c,t}+r_{c,t},
    &m_{c,0}&=m_0, \notag\\
    \dot S_{c,t}
    &=
    A_{c,t}S_{c,t}+S_{c,t}A_{c,t}^{\top},
    &S_{c,0}&=S_0.
    \label{eq:affine-moment-odes}
\end{align}
The first equation follows by taking the expectation of
$\dot X_t=A_{c,t}X_t+r_{c,t}$. For the second, write
$Y_t=X_t-m_{c,t}$, so that $\dot Y_t=A_{c,t}Y_t$, and differentiate
$S_{c,t}=\mathbb E[Y_tY_t^\top]$. Integrating
Eq.~\eqref{eq:affine-moment-odes} from $t=0$ to $t=1$ therefore gives the
endpoint distribution directly:
\begin{equation}
    q_c^{\mathcal V}
    =
    \mathcal N(m_c,S_c),
    \qquad
    m_c=m_{c,1},
    \quad
    S_c=S_{c,1}.
\end{equation}

For two conditions $a$ and $b$, let $\delta_{ab}=m_a-m_b$. Their Jeffreys
divergence is then available in closed form:
\begin{equation}
    D_{\mathcal V}(a,b)
    =
    \frac12\left[
        \operatorname{tr}(S_b^{-1}S_a)
        +
        \operatorname{tr}(S_a^{-1}S_b)
        -2d
        +
        \delta_{ab}^{\top}
        (S_a^{-1}+S_b^{-1})
        \delta_{ab}
    \right].
    \label{eq:affine-gaussian-jeffreys-evaluation}
\end{equation}
In practice, we solve Eq.~\eqref{eq:affine-moment-odes} once per condition,
symmetrize the numerical endpoint covariance as
$(S_c+S_c^\top)/2$, and evaluate
Eq.~\eqref{eq:affine-gaussian-jeffreys-evaluation} using Cholesky-based linear
solves rather than explicit matrix inverses. All entries of an RDM can then be
formed from the stored endpoint moments. Unlike
Eq.~\eqref{eq:mc-jeffreys}, this procedure of Jeffreys divergence computation requires no generated samples or
sample-wise log-density evaluations and has no Monte Carlo sampling error.

\subsection{Proofs of velocity-to-distance metric correspondences}
\label{app:velocity-distance-correspondence-proofs}

\begin{proposition}[Translation-constrained flow matching recovers squared Euclidean distance]
\label{prop:app-fm-euclidean}
Let $\rho_0=\mathcal N(0,I)$, let $p_c$ have finite mean $\mu_c$, assume $\pi_{\mathcal C}(c)>0$, and let $\beta_t$ be continuously differentiable with $\beta_0=0$, $\beta_1=1$, and $K_\beta=\int_0^1\dot\beta_t^2\,dt>0$. For the translation-only velocity class
\begin{equation}
    \mathcal V_{\mathrm{Euc}}
    =
    \{v_c(x,t)=\dot\beta_t b_c : b_c \in \mathbb R^d\}.
\end{equation}
Write $\mathbf b=(b_c)_{c\in\mathcal C}$ and let $v_{\mathbf b}$ denote the corresponding condition-dependent velocity field. The global minimizer of the population flow-matching loss in Eq.~\eqref{eq:fm-objective},
\begin{equation}
    \mathbf b^\star
    =
    \arg\min_{\mathbf b}
    \mathcal L_{\mathrm{FM}}(v_{\mathbf b}),
\end{equation}
has condition-specific components $b_c^\star=\mu_c$ and produces $q_c^{\mathcal V_{\mathrm{Euc}}}=\mathcal N(\mu_c,I)$. Consequently,
\begin{equation}
    D_{\mathcal V_{\mathrm{Euc}}}(a,b)
    =
    \|\mu_a-\mu_b\|_2^2.
\end{equation}
\end{proposition}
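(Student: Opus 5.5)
The plan is to decouple the population loss across conditions, minimize each piece in closed form, and then push the source through the resulting translation flow. Because the velocity $v_c(x,t)=\dot\beta_t b_c$ does not depend on $x$, substituting it into Eq.~\eqref{eq:fm-objective} gives
\begin{equation*}
    \mathcal L_{\mathrm{FM}}(v_{\mathbf b})=\sum_{c}\pi_{\mathcal C}(c)\int_0^1 \mathbb E\left[\left\|\dot\beta_t b_c-\dot\alpha_tX_0-\dot\beta_tX_1^{(c)}\right\|^2\right]dt .
\end{equation*}
Each summand depends only on its own $b_c$, and $\pi_{\mathcal C}(c)>0$. So $\mathbf b^\star$ minimizes the full loss exactly when each $b_c^\star$ minimizes its condition-specific term. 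This reduces the problem to minimizing a function of a single $b_c$.

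Next I expand the square. The map $b_c\mapsto\mathbb E\|\dot\beta_t b_c-U_t^{(c)}\|^2$ is a convex quadratic. Its $t$-integral therefore equals
\begin{equation*}
    K_\beta\|b_c\|^2-2\,b_c^\top\int_0^1\dot\beta_t\,\mathbb E[U_t^{(c)}]\,dt+\text{const},
\end{equation*}
and finite second moments of $p_c$ guarantee the constant is finite. Independence of $X_0$ and $X_1^{(c)}$, together with $\mathbb E X_0=0$, gives $\mathbb E[U_t^{(c)}]=\dot\beta_t\mu_c$. The linear term is thus $-2K_\beta\,b_c^\top\mu_c$. Since $K_\beta>0$, the quadratic is strictly convex with unique minimizer $b_c^\star=\mu_c$. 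Completing the square gives the same conclusion directly: the objective equals $K_\beta\|b_c-\mu_c\|^2+\text{const}$.

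With $b_c^\star$ in hand, the ODE in Eq.~\eqref{eq:flow-ode} becomes $\dot X_t=\dot\beta_t\mu_c$. Integrating and using $\beta_0=0$, $\beta_1=1$ gives $X_1=X_0+\mu_c$. Equivalently, Lemma~\ref{lem:affine-flow-gaussian} applies with $A_t=0$ and $r_t=\dot\beta_t\mu_c$, so $\Phi_t=I$ and $\eta_1=\mu_c$. Hence $q_c^{\mathcal V_{\mathrm{Euc}}}=\mathcal N(\mu_c,I)$.

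Finally I plug $S_a=S_b=I$ and $\delta_{ab}=\mu_a-\mu_b$ into Eq.~\eqref{eq:affine-gaussian-jeffreys-evaluation}. The trace terms give $d+d-2d=0$, and the quadratic term gives $\tfrac12\,\delta_{ab}^\top(2I)\delta_{ab}=\|\mu_a-\mu_b\|_2^2$.

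None of these steps is difficult. The point needing the most care is the expectation computation in the second step: cross terms vanish only because $X_0$ is independent of $X_1^{(c)}$ and has zero mean. That step also needs finiteness of the constant term, so that the quadratic-minimization argument is legitimate.
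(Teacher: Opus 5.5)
Your proposal is correct and follows essentially the same route as the paper: separate the loss over conditions using $\pi_{\mathcal C}(c)>0$, reduce each term to $K_\beta\|b_c-\mu_c\|^2+\text{const}$, integrate the translation ODE to get $\mathcal N(\mu_c,I)$, and evaluate the equal-covariance Gaussian Jeffreys divergence. One small remark: the $X_0$ contribution to the linear term vanishes by linearity of expectation and $\mathbb E X_0=0$ alone, without needing independence, and your appeal to finite second moments (from the paper's standing assumptions) is what makes the loss finite and the argmin meaningful.
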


\begin{proof}
Expanding the objective to the sampled condition $C$ gives
\begin{equation}
    \mathcal L_{\mathrm{FM}}(v_{\mathbf b})
    =
    \mathbb E_{C\sim\pi_{\mathcal C}}
    \left[
      \mathcal L_{\mathrm{FM},C}(b_C)
    \right]
    =
    \sum_{c\in\mathcal C}
    \pi_{\mathcal C}(c)\,
    \mathcal L_{\mathrm{FM},c}(b_c),
    \label{eq:fm-loss-condition-decomposition}
\end{equation}
where $\mathcal L_{\mathrm{FM},c}$ is the flow-matching loss conditional on $C=c$. Each $b_c$ appears in only one summand of Eq.~\eqref{eq:fm-loss-condition-decomposition}. Because $\pi_{\mathcal C}(c)>0$, the global minimization therefore separates as
\begin{equation}
    b_c^\star
    =
    \arg\min_{b_c\in\mathbb R^d}
    \mathcal L_{\mathrm{FM},c}(b_c)
    \qquad\text{for every }c\in\mathcal C.
    \label{eq:fm-conditionwise-minimization}
\end{equation}
We can now solve each condition-specific problem. Substituting $v_c(x,t)=\dot\beta_t b_c$ and $U_t^{(c)}=\dot\alpha_tX_0+\dot\beta_tX_1^{(c)}$ gives
\begin{align}
    \mathcal L_{\mathrm{FM},c}(b_c)
    &=
    \int_0^1
    \mathbb E\!\left[
      \left\|
        \dot\beta_t b_c
        -\dot\alpha_tX_0
        -\dot\beta_tX_1^{(c)}
      \right\|_2^2
      \middle|c
    \right]dt \\
    &=
    \int_0^1\Big[
      \dot\beta_t^2\|b_c\|_2^2
      -2\dot\alpha_t\dot\beta_t b_c^\top\mathbb E[X_0]
      -2\dot\beta_t^2 b_c^\top\mathbb E[X_1^{(c)}\mid c]
      \notag\\
    &\hspace{7em}
      +\mathbb E\!\left[
        \left\|\dot\alpha_tX_0+\dot\beta_tX_1^{(c)}\right\|_2^2
        \middle|c
      \right]
    \Big]dt.
    \label{eq:fm-euclidean-loss-expansion}
\end{align}
The final expectation in Eq.~\eqref{eq:fm-euclidean-loss-expansion} does not depend on $b_c$. Since $X_0\sim\mathcal N(0,I)$ and $X_1^{(c)}\sim p_c$, we have $\mathbb E[X_0]=0$ and $\mathbb E[X_1^{(c)}\mid c]=\mu_c$. Therefore
\begin{align}
    \mathcal L_{\mathrm{FM},c}(b_c)
    &=
    K_\beta \|b_c\|_2^2
    -2K_\beta b_c^\top\mu_c
    +\mathrm{const} \\
    &=
    K_\beta\|b_c-\mu_c\|_2^2
    +\mathrm{const},
    \label{eq:fm-translation-quadratic}
\end{align}
where $K_\beta=\int_0^1\dot\beta_t^2\,dt>0$. Hence $b_c^\star=\mu_c$. The time-one flow is the translation $x_1=x_0+\mu_c$, so Lemma~\ref{lem:affine-flow-gaussian} gives $q_c^{\mathcal V_{\mathrm{Euc}}}=\mathcal N(\mu_c,I)$. Applying this result to conditions $a$ and $b$, their Jeffreys divergence is
\begin{equation}
    D_{\mathcal V_{\mathrm{Euc}}}(a,b)
    =
    \|\mu_a-\mu_b\|_2^2 .
\end{equation}
Note that the response distribution $p_c$ need not be Gaussian: the Gaussian form of $q_c^{\mathcal V_{\mathrm{Euc}}}$ follows from translating the Gaussian source.
\end{proof}

\begin{proposition}[Fixed-norm translations give cosine distance]
\label{prop:app-fm-cosine}
Let $r>0$, suppose $K_\beta=\int_0^1\dot\beta_t^2\,dt>0$, and suppose $\mu_c\neq0$ for every condition. For the fixed-norm translation class
\begin{equation}
    \mathcal V_{\cos}
    =
    \{v_c(x,t)=\dot\beta_tb_c:\ \|b_c\|=r\}.
\end{equation}
Then the population flow-matching optimum is unique within each condition and satisfies $b_c^\star=r\mu_c/\|\mu_c\|$. It produces
\begin{equation}
    q_c^{\mathcal V_{\cos}}
    =
    \mathcal N\!\left(r\frac{\mu_c}{\|\mu_c\|},I\right),
\end{equation}
and the induced distance is
\begin{equation}
    D_{\mathcal V_{\cos}}(a,b)
    =
    2r^2
    \left(
    1-\frac{\mu_a^\top\mu_b}{\|\mu_a\|\|\mu_b\|}
    \right),
\end{equation}
which is $2r^2$ times cosine distance between the condition means.
\end{proposition}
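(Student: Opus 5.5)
We reuse the structure of the proof of Proposition~\ref{prop:app-fm-euclidean}. The velocity class $\mathcal V_{\cos}$ has the same translation form $v_c(x,t)=\dot\beta_t b_c$, so the loss decomposition in Eq.~\eqref{eq:fm-loss-condition-decomposition} still holds. Because each constraint $\|b_c\|=r$ involves only its own $b_c$ and $\pi_{\mathcal C}(c)>0$, the constrained global minimization separates into independent problems, one per condition. The expansion leading to Eq.~\eqref{eq:fm-translation-quadratic} does not use any constraint on $b_c$, so on the feasible set it still gives
\begin{equation*}
\mathcal L_{\mathrm{FM},c}(b_c)=K_\beta\|b_c-\mu_c\|_2^2+\mathrm{const}.
\end{equation*}

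Each per-condition problem is therefore the Euclidean projection of $\mu_c$ onto the sphere of radius $r$. On that sphere, $\|b_c-\mu_c\|^2=r^2-2b_c^\top\mu_c+\|\mu_c\|^2$, so minimizing the loss amounts to maximizing $b_c^\top\mu_c$ subject to $\|b_c\|=r$. By Cauchy--Schwarz, $b_c^\top\mu_c\le r\|\mu_c\|$. Equality holds if and only if $b_c$ is a nonnegative multiple of $\mu_c$, and the norm constraint then forces $b_c=r\mu_c/\|\mu_c\|$. The hypothesis $\mu_c\neq0$ is exactly what makes this maximizer unique: if $\mu_c=0$, every point of the sphere would be optimal. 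Uniqueness within each condition follows.

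The time-one flow is then the translation $x_1=x_0+r\mu_c/\|\mu_c\|$. By Lemma~\ref{lem:affine-flow-gaussian} (with $A_t=0$ and $\eta_1=\int_0^1\dot\beta_t\,dt\,b_c^\star=b_c^\star$, using $\beta_1-\beta_0=1$), this gives $q_c^{\mathcal V_{\cos}}=\mathcal N(r\mu_c/\|\mu_c\|,I)$.

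Finally, we apply Eq.~\eqref{eq:affine-gaussian-jeffreys-evaluation} with $S_a=S_b=I$. The trace terms cancel against $-2d$, so the divergence is $\|\delta_{ab}\|^2$ with $\delta_{ab}=r(u_a-u_b)$, where $u_c=\mu_c/\|\mu_c\|$ are unit vectors. Then $\|u_a-u_b\|^2=2-2u_a^\top u_b$, which gives
\begin{equation*}
D_{\mathcal V_{\cos}}(a,b)=2r^2\bigl(1-\mu_a^\top\mu_b/(\|\mu_a\|\|\mu_b\|)\bigr).
\end{equation*}

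The only step that needs care is the constrained minimization, because the feasible set is nonconvex, so a first-order stationarity argument alone would not justify global optimality. We avoid that issue by using the Cauchy--Schwarz argument rather than Lagrange multipliers. Stationarity on the sphere would also admit the antipodal point $-r\mu_c/\|\mu_c\|$, which is the maximizer of the loss, not the minimizer.
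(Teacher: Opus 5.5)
Your proposal is correct and follows the paper's proof essentially step for step. Like the paper, you separate the objective over conditions, restrict the quadratic $K_\beta\|b_c-\mu_c\|^2$ to the sphere $\|b_c\|=r$ to reduce to maximizing $b_c^\top\mu_c$, and apply Cauchy--Schwarz with its equality case; you then use Lemma~\ref{lem:affine-flow-gaussian} and the identity-covariance case of Eq.~\eqref{eq:affine-gaussian-jeffreys-evaluation} to obtain $\|ru_a-ru_b\|^2=2r^2(1-u_a^\top u_b)$, with your explicit checks (e.g., $\eta_1=b_c^\star$ via $\beta_1-\beta_0=1$) only filling in details the paper leaves implicit.
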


\begin{proof}
As in Proposition~\ref{prop:app-fm-euclidean}, the global objective separates over conditions. Restricting Eq.~\eqref{eq:fm-translation-quadratic} to $\|b_c\|=r$ makes $K_\beta\|b_c\|^2$ constant, so minimizing the condition-specific loss is equivalent to
\begin{equation}
    \max_{\|b_c\|=r} b_c^\top\mu_c.
\end{equation}
By Cauchy--Schwarz, $b_c^\top\mu_c\le r\|\mu_c\|$, with equality only when $b_c$ points in the direction of $\mu_c$. Hence
\begin{equation}
    b_c^\star
    =
    r\frac{\mu_c}{\|\mu_c\|}.
\end{equation}
The endpoint ODE integrates to $x_1=x_0+b_c^\star$. By Lemma~\ref{lem:affine-flow-gaussian},
\begin{equation}
    q_c^{\mathcal V_{\cos}}
    =
    \mathcal N\!\left(r\frac{\mu_c}{\|\mu_c\|},I\right).
\end{equation}
Writing $u_c=\mu_c/\|\mu_c\|$, Jeffreys divergence between the fitted distributions for $a$ and $b$ is
\begin{align}
    D_{\mathcal V_{\cos}}(a,b)
    &=
    \|ru_a-ru_b\|^2
    =
    2r^2(1-u_a^\top u_b) \\
    &=
    2r^2
    \left(
    1-\frac{\mu_a^\top\mu_b}{\|\mu_a\|\|\mu_b\|}
    \right).
\end{align}
Thus the constraint removes mean-magnitude information and retains only direction. If $\mu_c=0$, every point on the radius-$r$ sphere is optimal, so neither the fitted direction nor cosine distance is defined uniquely.
\end{proof}

\begin{proposition}[Centered fixed-norm translations give correlation distance]
\label{prop:app-fm-correlation}
Let $H=I-\mathbf 1\mathbf 1^\top/d$, let $r>0$, suppose $K_\beta>0$, and suppose $H\mu_c\neq0$ for every condition. For the centered fixed-norm class
\begin{equation}
    \mathcal V_{\mathrm{corr}}
    =
    \{v_c(x,t)=\dot\beta_tb_c:\ \mathbf 1^\top b_c=0,\ \|b_c\|=r\}.
    \label{eq:corr-fixed-norm-class}
\end{equation}
Then the population flow-matching optimum is unique within each condition and satisfies $b_c^\star=rH\mu_c/\|H\mu_c\|$. It produces
\begin{equation}
    q_c^{\mathcal V_{\mathrm{corr}}}
    =
    \mathcal N\!\left(r\frac{H\mu_c}{\|H\mu_c\|},I\right),
\end{equation}
and the induced distance is
\begin{equation}
    D_{\mathcal V_{\mathrm{corr}}}(a,b)
    =
    2r^2
    \left(
    1-\frac{(H\mu_a)^\top(H\mu_b)}
    {\|H\mu_a\|\|H\mu_b\|}
    \right),
\end{equation}
which is $2r^2$ times Pearson correlation distance between the condition means.
\end{proposition}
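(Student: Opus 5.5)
We follow the route of Propositions~\ref{prop:app-fm-euclidean} and~\ref{prop:app-fm-cosine}. First, as in Eq.~\eqref{eq:fm-loss-condition-decomposition}, the population loss is a $\pi_{\mathcal C}$-weighted sum of condition-specific losses, each depending only on its own $b_c$. Since $\pi_{\mathcal C}(c)>0$ and the constraints in Eq.~\eqref{eq:corr-fixed-norm-class} act on each $b_c$ separately, the constrained global minimization separates into independent problems, one per condition.

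For each condition, the expansion leading to Eq.~\eqref{eq:fm-translation-quadratic} does not use the constraint, so the condition loss is still $K_\beta\|b_c\|^2-2K_\beta b_c^\top\mu_c+\mathrm{const}$. On the feasible set $\|b_c\|=r$ the first term is constant. Because $K_\beta>0$, minimizing the loss is equivalent to maximizing $b_c^\top\mu_c$ subject to $\mathbf 1^\top b_c=0$ and $\|b_c\|=r$.

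The one genuinely new step is handling the centering constraint. The condition $\mathbf 1^\top b_c=0$ says exactly that $b_c$ lies in the range of the orthogonal projector $H$, so $Hb_c=b_c$. Since $H$ is symmetric, $b_c^\top\mu_c=(Hb_c)^\top\mu_c=b_c^\top H\mu_c$. Cauchy--Schwarz then gives $b_c^\top H\mu_c\le r\|H\mu_c\|$, with equality iff $b_c$ is a nonnegative multiple of $H\mu_c$. Because $H\mu_c\neq0$, this forces the unique maximizer $b_c^\star=rH\mu_c/\|H\mu_c\|$. We must also check feasibility: $H\mu_c$ lies in the range of $H$, so $\mathbf 1^\top H\mu_c=0$, and the normalization gives norm $r$.

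The remaining steps are routine. The velocity $\dot\beta_t b_c^\star$ integrates to the translation $x_1=x_0+(\beta_1-\beta_0)b_c^\star=x_0+b_c^\star$. Lemma~\ref{lem:affine-flow-gaussian} (with $A_t=0$) then gives $q_c^{\mathcal V_{\mathrm{corr}}}=\mathcal N(b_c^\star,I)$. Eq.~\eqref{eq:affine-gaussian-jeffreys-evaluation} with $S_a=S_b=I$ reduces the Jeffreys divergence to $\|b_a^\star-b_b^\star\|^2$. Writing $w_c=H\mu_c/\|H\mu_c\|$, this equals $r^2\|w_a-w_b\|^2=2r^2(1-w_a^\top w_b)$, which is the claimed expression.

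Finally, we identify the result with correlation distance. $H\mu_c$ is the mean pattern with its across-neuron average removed, so $w_a^\top w_b$ is the Pearson correlation of $\mu_a$ and $\mu_b$ across neurons. This last step is interpretive rather than technical; the only real obstacle is the projection identity $b_c^\top\mu_c=b_c^\top H\mu_c$ on the constraint set.
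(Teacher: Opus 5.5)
Your proposal is correct and follows the paper's proof essentially step for step: separate the loss over conditions, reduce (as in Proposition~\ref{prop:app-fm-cosine}) to maximizing $b_c^\top\mu_c$ on the sphere, use the projector identity $b_c^\top\mu_c=b_c^\top H\mu_c$ on the centered subspace with Cauchy--Schwarz, then apply Lemma~\ref{lem:affine-flow-gaussian} to get identity-covariance Gaussian endpoints whose Jeffreys divergence is the squared mean difference. Your explicit check that $H\mu_c/\|H\mu_c\|$ is feasible, i.e.\ that $\mathbf 1^\top H\mu_c=0$, is a small detail the paper leaves implicit.
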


\begin{proof}
The matrix $H$ is the orthogonal projector onto the centered subspace $\{z:\mathbf 1^\top z=0\}$: $H^\top=H$, $H^2=H$, and $H\mathbf 1=0$. Proposition~\ref{prop:app-fm-cosine} shows that, under a fixed-norm translation constraint, minimizing the flow-matching loss is equivalent to maximizing $b_c^\top\mu_c$. The additional constraint in Eq.~\eqref{eq:corr-fixed-norm-class} restricts this maximization to the centered sphere. Every feasible $b_c$ satisfies $b_c=Hb_c$, so
\begin{equation}
    b_c^\top\mu_c=b_c^\top H\mu_c .
\end{equation}
Therefore
\begin{equation}
    b_c^\top\mu_c
    =
    b_c^\top H\mu_c
    \le
    r\|H\mu_c\|,
\end{equation}
with equality only when $b_c$ is aligned with $H\mu_c$. Thus
\begin{equation}
    b_c^\star
    =
    r\frac{H\mu_c}{\|H\mu_c\|}.
\end{equation}
Applying Lemma~\ref{lem:affine-flow-gaussian} gives
\begin{equation}
    q_c^{\mathcal V_{\mathrm{corr}}}
    =
    \mathcal N\!\left(r\frac{H\mu_c}{\|H\mu_c\|},I\right).
\end{equation}
Writing $\widetilde u_c=H\mu_c/\|H\mu_c\|$, the Jeffreys divergence is
\begin{align}
    D_{\mathcal V_{\mathrm{corr}}}(a,b)
    &=
    \|r\widetilde u_a-r\widetilde u_b\|^2 \\
    &=
    2r^2
    \left(
    1-\frac{(H\mu_a)^\top(H\mu_b)}
    {\|H\mu_a\|\|H\mu_b\|}
    \right).
\end{align}
The inner product is Pearson correlation after feature-wise centering and normalization. If $H\mu_c=0$, the constrained optimum is nonunique and correlation distance is undefined for that condition.
\end{proof}

\begin{proposition}[A shared centered affine flow gives Mahalanobis distance]
\label{prop:app-fm-mahalanobis}
Let $\Sigma_c=\operatorname{Cov}_{p_c}(X)$ and $\Sigma_{\mathrm{pool}}=\mathbb E_{C\sim\pi_{\mathcal C}}[\Sigma_C]$. Consider
\begin{equation}
    \mathcal V_{\mathrm{Mah}}
    =
    \{v_c(x,t)=\dot\beta_tb_c+A_t(x-\beta_tb_c)\},
\end{equation}
where $b_c$ is condition specific and $A_t$ is shared across conditions. Define
\begin{equation}
    G(A)
    =
    \int_0^1
    (\dot\beta_tI-\beta_tA_t)^\top
    (\dot\beta_tI-\beta_tA_t)\,dt.
\end{equation}
Assume the pooled path covariance $\bar S_t=\alpha_t^2I+\beta_t^2\Sigma_{\mathrm{pool}}$ is invertible for every $t$, and assume $G(A^\star)$ is positive definite at a population optimum. Then every population optimum has $b_c^\star=\mu_c$ and induces
\begin{equation}
    q_c^{\mathcal V_{\mathrm{Mah}}}
    =
    \mathcal N(\mu_c,\Sigma_{\mathrm{pool}}).
\end{equation}
Consequently,
\begin{equation}
    D_{\mathcal V_{\mathrm{Mah}}}(a,b)
    =
    (\mu_a-\mu_b)^\top
    \Sigma_{\mathrm{pool}}^{-1}
    (\mu_a-\mu_b),
\end{equation}
which is the squared Mahalanobis distance based on pooled within-condition covariance.
\end{proposition}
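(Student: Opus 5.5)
The plan is to reduce the population loss to two decoupled pieces: a quadratic in $b_c-\mu_c$ weighted by $G(A)$, and a pooled covariance term that involves only $A$. Minimizing each piece in turn and then integrating the affine moment ODEs of Supplementary Section~\ref{app:affine-gaussian-evaluation} should give the endpoint $\mathcal N(\mu_c,\Sigma_{\mathrm{pool}})$.

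\textbf{Step 1 (decompose the residual).} Fix a condition $c$ and write $X_1^{(c)}=\mu_c+Y_c$ with $\mathbb E[Y_c]=0$ and $\operatorname{Cov}(Y_c)=\Sigma_c$. Substituting $X_t^{(c)}=\alpha_tX_0+\beta_tX_1^{(c)}$ into $v_c(x,t)=\dot\beta_tb_c+A_t(x-\beta_tb_c)$ and subtracting $U_t^{(c)}$ gives the residual
\begin{equation*}
(\dot\beta_tI-\beta_tA_t)(b_c-\mu_c)+(\alpha_tA_t-\dot\alpha_tI)X_0+(\beta_tA_t-\dot\beta_tI)Y_c .
\end{equation*}
$X_0$ and $Y_c$ are independent and mean zero, so every cross term vanishes in expectation. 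Summing over conditions with weights $\pi_{\mathcal C}(c)$ then gives
\begin{equation*}
\mathcal L_{\mathrm{FM}}=\sum_c\pi_{\mathcal C}(c)\,(b_c-\mu_c)^\top G(A)(b_c-\mu_c)+\int_0^1 \operatorname{tr}\!\Big[(\alpha_tA_t-\dot\alpha_tI)(\alpha_tA_t-\dot\alpha_tI)^\top+(\beta_tA_t-\dot\beta_tI)\Sigma_{\mathrm{pool}}(\beta_tA_t-\dot\beta_tI)^\top\Big]dt .
\end{equation*}
The $\Sigma_c$ enter the second term linearly, which is why only $\Sigma_{\mathrm{pool}}$ survives there.

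\textbf{Step 2 (optimize).} The second term does not involve $\mathbf b$. At any optimum $(A^\star,\mathbf b^\star)$, $\mathbf b^\star$ must therefore minimize the first term with $A=A^\star$ held fixed. Since $G(A^\star)\succ0$ by assumption and $\pi_{\mathcal C}(c)>0$, this forces $b_c^\star=\mu_c$ for every $c$.

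The first term then vanishes, so $A^\star$ minimizes the second term. That term is an integral of a strictly convex quadratic in $A_t$, with Hessian built from $\bar S_t=\alpha_t^2I+\beta_t^2\Sigma_{\mathrm{pool}}\succ0$, so I will minimize pointwise in $t$. The first-order condition is $A_t\bar S_t=\alpha_t\dot\alpha_tI+\beta_t\dot\beta_t\Sigma_{\mathrm{pool}}$, that is,
\begin{equation*}
A_t^\star=\tfrac12\dot{\bar S}_t\,\bar S_t^{-1}.
\end{equation*}
This is continuous in $t$ because $\bar S_t$ is invertible and the schedules are $C^1$, so it lies in the class. Strict convexity makes $A^\star$ unique for almost every $t$, and hence unique as a continuous function.

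\textbf{Step 3 (integrate the moment ODEs).} By Lemma~\ref{lem:affine-flow-gaussian}, the endpoint is Gaussian, with mean and covariance given by Eq.~\eqref{eq:affine-moment-odes}, where $r_{c,t}=\dot\beta_t\mu_c-\beta_tA_t^\star\mu_c$.

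For the mean, $m_{c,t}=\beta_t\mu_c$ satisfies $m_{c,0}=0$, and
\begin{equation*}
\dot m_{c,t}=\dot\beta_t\mu_c=A_t^\star\beta_t\mu_c+r_{c,t},
\end{equation*}
so uniqueness gives $m_{c,1}=\mu_c$.

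For the covariance, $A_t^\star$ and $\bar S_t$ are both functions of $\Sigma_{\mathrm{pool}}$, so they are symmetric and commute. Hence $A_t^\star\bar S_t+\bar S_t A_t^{\star\top}=\dot{\bar S}_t$, and $\bar S_0=I$. Uniqueness again gives $S_{c,1}=\bar S_1=\Sigma_{\mathrm{pool}}$.

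Plugging $S_a=S_b=\Sigma_{\mathrm{pool}}$ into Eq.~\eqref{eq:affine-gaussian-jeffreys-evaluation}, the trace terms cancel against $-2d$. This leaves $\tfrac12\delta_{ab}^\top(2\Sigma_{\mathrm{pool}}^{-1})\delta_{ab}$, which is the squared Mahalanobis distance.

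\textbf{Main obstacle.} The delicate part is the joint optimization: showing that every global optimum has this form, not just that this point is stationary. The argument above handles it by (i) the separation in Step 1, (ii) the assumed positivity of $G(A^\star)$ to pin down $\mathbf b^\star$, and (iii) pointwise strict convexity in $A_t$ to pin down $A^\star$. The commutation identity in Step 3 also needs care, because $A^\star_t$ must be symmetric for $A_t^{\star\top}=A_t^\star$ to hold.
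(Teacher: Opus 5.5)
Your proposal is correct and follows essentially the same route as the paper. Both arguments split the residual into a $G(A)$-weighted mean-error term plus a centered term that depends only on $\Sigma_{\mathrm{pool}}$, set $b_c^\star=\mu_c$, solve the normal equation $A_t^\star\bar S_t=\tfrac12\dot{\bar S}_t$, and check that $\beta_t\mu_c$ and $\bar S_t$ solve the mean and covariance ODEs. Your explicit use of $G(A^\star)\succ0$ to force $b_c^\star=\mu_c$ at every optimum, and your continuity check on $A_t^\star$, are slightly tighter than the paper's write-up. The symmetry worry is unnecessary, since $\bar S_tA_t^{\star\top}=(A_t^\star\bar S_t)^\top=\tfrac12\dot{\bar S}_t$ holds directly.
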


\begin{proof}
Write
\begin{equation}
    X_1^{(c)}=\mu_c+\varepsilon_c,\qquad
    \mathbb E[\varepsilon_c\mid c]=0,\qquad
    \operatorname{Cov}(\varepsilon_c\mid c)=\Sigma_c .
\end{equation}
where $\varepsilon_c$ does not need to be Gaussian. Define the centered path variables
\begin{equation}
    Y_t^{(c)}=\alpha_tX_0+\beta_t\varepsilon_c,\qquad
    W_t^{(c)}=\dot\alpha_tX_0+\dot\beta_t\varepsilon_c .
\end{equation}
Then $X_t^{(c)}=\beta_t\mu_c+Y_t^{(c)}$ and $U_t^{(c)}=\dot\beta_t\mu_c+W_t^{(c)}$, with both $Y_t^{(c)}$ and $W_t^{(c)}$ centered. Substitution gives
\begin{align}
    v_c(X_t^{(c)},t)-U_t^{(c)}
    =
    A_tY_t^{(c)}-W_t^{(c)}
    +
    (\dot\beta_tI-\beta_tA_t)(b_c-\mu_c).
    \label{eq:mahalanobis-velocity-error-decomposition}
\end{align}
Expanding the squared velocity error using Eq.~\eqref{eq:mahalanobis-velocity-error-decomposition} produces the cross term
\begin{equation}
    2\big(A_tY_t^{(c)}-W_t^{(c)}\big)^\top
    (\dot\beta_tI-\beta_tA_t)(b_c-\mu_c).
\end{equation}
Conditioned on $c$ and $t$, the second factor is deterministic, whereas $\mathbb E[A_tY_t^{(c)}-W_t^{(c)}\mid c,t]=0$ because both $Y_t^{(c)}$ and $W_t^{(c)}$ are centered. The expected cross term therefore vanishes, and the population loss decomposes into
\begin{align}
    \mathcal L_{\mathrm{FM}}(A,\mathbf b)
    &=
    \mathbb E_{\substack{C\sim\pi_{\mathcal C}\\t\sim\operatorname{Unif}[0,1]}}
    \Big[
    \mathbb E\!\left[
      \|A_tY_t^{(C)}-W_t^{(C)}\|^2
      \mid C,t
    \right] \notag\\
    &\qquad+
    \|(\dot\beta_tI-\beta_tA_t)(b_C-\mu_C)\|^2
    \Big].
\end{align}
For fixed $A$, the first term does not depend on $b_c$. The second term is nonnegative; choosing $b_c=\mu_c$ makes it exactly zero. Thus $b_c^\star=\mu_c$ minimizes the loss over $b_c$ for every fixed $A$.

It remains to identify the shared covariance learned by $A_t$. Let
\begin{equation}
    \Sigma_{\mathrm{pool}}=\mathbb E_{C\sim\pi_{\mathcal C}}[\Sigma_C],\qquad
    \bar S_t=\mathbb E_{C\sim\pi_{\mathcal C}}
    [\operatorname{Cov}(Y_t^{(C)}\mid C)]
    =
    \alpha_t^2I+\beta_t^2\Sigma_{\mathrm{pool}},
\end{equation}
and
\begin{equation}
    \bar R_t=\mathbb E_{C\sim\pi_{\mathcal C}}
    [\operatorname{Cov}(W_t^{(C)},Y_t^{(C)}\mid C)]
    =
    \alpha_t\dot\alpha_tI+\beta_t\dot\beta_t\Sigma_{\mathrm{pool}}.
\end{equation}
For each fixed $t$, differentiating the centered least-squares objective with respect to $A_t$ gives the normal equation
\begin{equation}
    A_t^\star\bar S_t=\bar R_t .
\end{equation}
Because $\bar S_t$ is invertible, $A_t^\star=\bar R_t\bar S_t^{-1}$ is unique.

Let $m_{c,t}=\mathbb E[x_t\mid c]$ denote the mean generated by the learned affine flow. Substituting $b_c^\star=\mu_c$ and $A_t^\star$ into Eq.~\eqref{eq:flow-ode} and taking the conditional expectation of both sides gives
\begin{equation}
    \dot m_{c,t}
    =
    \dot\beta_t\mu_c+A_t^\star(m_{c,t}-\beta_t\mu_c),
    \qquad m_{c,0}=0.
\end{equation}
Here we used $\mathbb E[A_t^\star(x_t-\beta_t\mu_c)\mid c]=A_t^\star(m_{c,t}-\beta_t\mu_c)$ because $A_t^\star$ and $\mu_c$ are fixed for a given $t$ and condition $c$.
The function $m_{c,t}=\beta_t\mu_c$ solves this ODE, so uniqueness gives $m_{c,1}=\mu_c$. The generated covariance $Q_t$ satisfies
\begin{equation}
    \frac{dQ_t}{dt}=A_t^\star Q_t+Q_tA_t^{\star\top},\qquad Q_0=I.
\end{equation}
But
\begin{equation}
    \frac{d\bar S_t}{dt}
    =
    2\alpha_t\dot\alpha_tI+2\beta_t\dot\beta_t\Sigma_{\mathrm{pool}}
    =
    \bar R_t+\bar R_t^\top
    =
    A_t^\star\bar S_t+\bar S_tA_t^{\star\top}.
\end{equation}
Thus $\bar S_t$ solves the same covariance ODE with the same initial condition. By the uniqueness theorem for linear ODEs, $Q_t=\bar S_t$ and $Q_1=\Sigma_{\mathrm{pool}}$. Lemma~\ref{lem:affine-flow-gaussian} shows that the endpoint distribution is Gaussian; together with the mean and covariance derived above, this gives
\begin{equation}
    q_c^{\mathcal V_{\mathrm{Mah}}}
    =
    \mathcal N(\mu_c,\Sigma_{\mathrm{pool}}).
\end{equation}
Applying the Gaussian KL formula to conditions $a$ and $b$ gives
\begin{equation}
    D_{\mathcal V_{\mathrm{Mah}}}(a,b)
    =
    (\mu_a-\mu_b)^\top
    \Sigma_{\mathrm{pool}}^{-1}
    (\mu_a-\mu_b).
\end{equation}
This conclusion depends only on the first two moments of each $p_c$; the true response distributions need not be Gaussian.
\end{proof}

\begin{proposition}[Condition-specific affine flows give Gaussian Jeffreys divergence]
\label{prop:app-fm-gaussian-jeffreys}
Let $\Sigma_c=\operatorname{Cov}_{p_c}(X)$ be positive definite. For the condition-specific centered affine class
\begin{equation}
    \mathcal V_{\mathrm{Aff}}
    =
    \{v_c(x,t)=\dot\beta_tb_c+A_{c,t}(x-\beta_tb_c)\},
\end{equation}
define
\begin{equation}
    S_{c,t}=\alpha_t^2I+\beta_t^2\Sigma_c,
    \qquad
    G_c(A_c)=\int_0^1
    (\dot\beta_tI-\beta_tA_{c,t})^\top
    (\dot\beta_tI-\beta_tA_{c,t})\,dt.
\end{equation}
Assume $S_{c,t}$ is invertible for every $(c,t)$ and $G_c(A_c^\star)$ is positive definite at a population optimum. Then $b_c^\star=\mu_c$ and
\begin{equation}
    q_c^{\mathcal V_{\mathrm{Aff}}}
    =
    \mathcal N(\mu_c,\Sigma_c).
\end{equation}
For $\delta_{ab}=\mu_a-\mu_b$, the induced distance is
\begin{equation}
    D_{\mathcal V_{\mathrm{Aff}}}(a,b)
    =
    \frac12\Big[
    \operatorname{tr}(\Sigma_b^{-1}\Sigma_a)
    +
    \operatorname{tr}(\Sigma_a^{-1}\Sigma_b)
    -2d
    +\delta_{ab}^\top(\Sigma_a^{-1}+\Sigma_b^{-1})\delta_{ab}
    \Big].
\end{equation}
\end{proposition}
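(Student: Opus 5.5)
The argument mirrors the proof of Proposition~\ref{prop:app-fm-mahalanobis}, but with the shared matrix $A_t$ replaced by a condition-specific $A_{c,t}$, so every step can be carried out condition by condition.

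\emph{Step 1: separate the loss over conditions.} Because each $(b_c,A_c)$ appears only in the summand weighted by $\pi_{\mathcal C}(c)>0$, the decomposition in Eq.~\eqref{eq:fm-loss-condition-decomposition} splits the population loss into independent condition-wise problems. It therefore suffices to minimize $\mathcal L_{\mathrm{FM},c}(b_c,A_c)$ for a single fixed $c$.

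\emph{Step 2: identify $b_c^\star$.} Write $X_1^{(c)}=\mu_c+\varepsilon_c$ and define the centered variables $Y_t^{(c)}=\alpha_tX_0+\beta_t\varepsilon_c$ and $W_t^{(c)}=\dot\alpha_tX_0+\dot\beta_t\varepsilon_c$. Exactly as in Eq.~\eqref{eq:mahalanobis-velocity-error-decomposition}, the velocity error becomes $A_{c,t}Y_t^{(c)}-W_t^{(c)}+(\dot\beta_tI-\beta_tA_{c,t})(b_c-\mu_c)$. The cross term vanishes in expectation because $Y_t^{(c)}$ and $W_t^{(c)}$ are centered, so the loss equals a term independent of $b_c$ plus $(b_c-\mu_c)^\top G_c(A_c)(b_c-\mu_c)$. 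Positive definiteness of $G_c(A_c^\star)$ then forces $b_c^\star=\mu_c$ at any optimum, with uniqueness.

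\emph{Step 3: identify $A_{c,t}^\star$.} For each $t$, the remaining term is a least-squares regression of $W_t^{(c)}$ on $Y_t^{(c)}$. Its normal equation is $A_{c,t}^\star S_{c,t}=R_{c,t}$, where
\[
S_{c,t}=\operatorname{Cov}(Y_t^{(c)})=\alpha_t^2I+\beta_t^2\Sigma_c,
\qquad
R_{c,t}=\operatorname{Cov}(W_t^{(c)},Y_t^{(c)})=\alpha_t\dot\alpha_tI+\beta_t\dot\beta_t\Sigma_c .
\]
These use the independence of $X_0$ and $\varepsilon_c$. Since $S_{c,t}$ is invertible, $A_{c,t}^\star=R_{c,t}S_{c,t}^{-1}$ is unique.

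\emph{Step 4: the endpoint law and the distance.} Apply the moment ODEs in Eq.~\eqref{eq:affine-moment-odes} with $m_0=0$ and $S_0=I$.
\begin{itemize}
\item \emph{Mean.} The mean satisfies $\dot m_{c,t}=\dot\beta_t\mu_c+A_{c,t}^\star(m_{c,t}-\beta_t\mu_c)$. The function $m_{c,t}=\beta_t\mu_c$ solves it, so uniqueness gives $m_{c,1}=\mu_c$.
\item \emph{Covariance.} Because $R_{c,t}$ is symmetric, $\dot S_{c,t}=R_{c,t}+R_{c,t}^\top=A_{c,t}^\star S_{c,t}+S_{c,t}A_{c,t}^{\star\top}$. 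Hence $S_{c,t}$ satisfies the covariance ODE with the correct initial value $S_{c,0}=I$. Uniqueness for linear ODEs then gives an endpoint covariance of $S_{c,1}=\Sigma_c$.
\end{itemize}
Lemma~\ref{lem:affine-flow-gaussian} shows the endpoint is Gaussian, so $q_c^{\mathcal V_{\mathrm{Aff}}}=\mathcal N(\mu_c,\Sigma_c)$. The stated distance follows by substituting these moments into Eq.~\eqref{eq:affine-gaussian-jeffreys-evaluation}. The log-determinant terms cancel in the symmetric sum.

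The only delicate point is the covariance identification in Step 4: verifying that $\dot S_{c,t}$ matches the Lyapunov form. This relies on $R_{c,t}$ being symmetric and on $A_{c,t}^\star S_{c,t}=R_{c,t}$, so that $S_{c,t}A_{c,t}^{\star\top}=R_{c,t}^\top=R_{c,t}$.
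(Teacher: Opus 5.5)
Your proposal is correct and follows the paper's proof essentially step for step: you separate the loss over conditions, use the centered decomposition to obtain the quadratic $(b_c-\mu_c)^\top G_c(A_c)(b_c-\mu_c)$, solve the normal equation $A_{c,t}^\star S_{c,t}=R_{c,t}$, and note that $S_{c,t}$ solves the same Lyapunov ODE as the generated covariance from $S_{c,0}=I$, so uniqueness gives $\mathcal N(\mu_c,\Sigma_c)$ and the closed-form Gaussian Jeffreys formula. Your closing remark, that $\dot S_{c,t}=A_{c,t}^\star S_{c,t}+S_{c,t}A_{c,t}^{\star\top}$ holds because $\dot S_{c,t}=2R_{c,t}$ with $R_{c,t}$ symmetric, spells out a step the paper states without comment.
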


\begin{proof}
Because both $b_c$ and $A_{c,t}$ are condition specific, Eq.~\eqref{eq:fm-loss-condition-decomposition} separates the global optimization into independent condition-specific problems. Write $X_1^{(c)}=\mu_c+\varepsilon_c$ and define $Y_t^{(c)}$ and $W_t^{(c)}$ as in the Mahalanobis proof. Then
\begin{equation}
    v_c(X_t^{(c)},t)-U_t^{(c)}
    =
    A_{c,t}Y_t^{(c)}-W_t^{(c)}
    +(\dot\beta_tI-\beta_tA_{c,t})(b_c-\mu_c).
\end{equation}
The condition-specific loss decomposes into
\begin{equation}
    \mathcal L_{\mathrm{FM},c}(A_c,b_c)
    =
    \int_0^1
    \mathbb E\!\left[
      \|A_{c,t}Y_t^{(c)}-W_t^{(c)}\|^2
      \mid c
    \right]dt
    +
    (b_c-\mu_c)^\top G_c(A_c)(b_c-\mu_c).
\end{equation}
For fixed $A_c$, the first term is independent of $b_c$, whereas the second term is nonnegative and vanishes at $b_c=\mu_c$. Hence $b_c=\mu_c$ minimizes the condition-specific loss for every fixed $A_c$. Positive definiteness of $G_c(A_c^\star)$ makes this minimizer unique at the population optimum, so $b_c^\star=\mu_c$.

For the centered problem, let
\begin{equation}
    R_{c,t}
    =
    \operatorname{Cov}(W_t^{(c)},Y_t^{(c)}\mid c)
    =
    \alpha_t\dot\alpha_tI
    +\beta_t\dot\beta_t\Sigma_c.
\end{equation}
Differentiating the least-squares loss with respect to $A_{c,t}$ gives
\begin{equation}
    A_{c,t}^\star S_{c,t}=R_{c,t},
    \qquad
    A_{c,t}^\star=R_{c,t}S_{c,t}^{-1}.
\end{equation}
As above, the generated mean is $m_{c,t}=\beta_t\mu_c$. Its covariance $Q_{c,t}$ solves
\begin{equation}
    \dot Q_{c,t}
    =
    A_{c,t}^\star Q_{c,t}+Q_{c,t}A_{c,t}^{\star\top},
    \qquad Q_{c,0}=I.
    \label{eq:affine-covariance-ode}
\end{equation}
Meanwhile,
\begin{equation}
    \dot S_{c,t}
    =
    R_{c,t}+R_{c,t}^\top
    =
    A_{c,t}^\star S_{c,t}+S_{c,t}A_{c,t}^{\star\top},
    \qquad S_{c,0}=I.
\end{equation}
Uniqueness of the covariance ODE gives $Q_{c,t}=S_{c,t}$, and hence $Q_{c,1}=\Sigma_c$. By Lemma~\ref{lem:affine-flow-gaussian}, the endpoint distribution is Gaussian, so the affine flow maps $\rho_0$ to
\begin{equation}
    q_c^{\mathcal V_{\mathrm{Aff}}}
    =
    \mathcal N(\mu_c,\Sigma_c).
\end{equation}
Therefore,
\begin{equation}
    D_{\mathcal V_{\mathrm{Aff}}}(a,b)
    =
    \frac12
    \big[
    \operatorname{tr}(\Sigma_b^{-1}\Sigma_a)
    +
    \operatorname{tr}(\Sigma_a^{-1}\Sigma_b)
    -2d
    +
    \delta_{ab}^\top(\Sigma_a^{-1}+\Sigma_b^{-1})\delta_{ab}
    \big].
\end{equation}
When $\Sigma_a=\Sigma_b=\Sigma$, the trace terms cancel and the result reduces to $(\mu_a-\mu_b)^\top\Sigma^{-1}(\mu_a-\mu_b)$.
\end{proof}

\begin{proposition}[Stimulus-dependent affine flows give Gaussian Fisher information]
\label{prop:app-fm-gaussian-fisher}
Let $\theta$ be a scalar condition and suppose
\begin{equation}
    q_\theta^{\mathcal V_{\mathrm{Aff}}}=\mathcal N(\mu_\theta,\Sigma_\theta).
\end{equation}
Assume $\mu_\theta$ and $\Sigma_\theta$ are twice continuously differentiable in a neighborhood of $\theta$ and $\Sigma_\theta$ is positive definite there. Then, as $h\to0$,
\begin{equation}
    D_{\mathrm J}\!\left(q_\theta^{\mathcal V_{\mathrm{Aff}}},q_{\theta+h}^{\mathcal V_{\mathrm{Aff}}}\right)
    =
    J_{\mathrm{Gauss}}(\theta)h^2+O(h^3),
\end{equation}
where
\begin{equation}
    J_{\mathrm{Gauss}}(\theta)
    =
    \mu_\theta'{}^\top\Sigma_\theta^{-1}\mu_\theta'
    +\frac12\operatorname{tr}\!\left(
      \Sigma_\theta^{-1}\Sigma_\theta'
      \Sigma_\theta^{-1}\Sigma_\theta'
    \right).
\end{equation}
\end{proposition}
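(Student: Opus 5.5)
We will work directly from the closed-form Gaussian Jeffreys divergence in Eq.~\eqref{eq:affine-gaussian-jeffreys-evaluation}, taking $S_a=\Sigma_\theta$, $S_b=\Sigma_{\theta+h}$ and $\delta=\mu_\theta-\mu_{\theta+h}$. The plan is to Taylor expand every ingredient in $h$ to second order, using twice continuous differentiability, and then collect terms. Write $\Sigma=\Sigma_\theta$, $\Delta\Sigma=\Sigma_{\theta+h}-\Sigma=h\Sigma'+O(h^2)$ and $\delta=-h\mu'+O(h^2)$. Since $\Sigma$ is positive definite and continuous, the inverse $\Sigma_{\theta+h}^{-1}$ exists and is uniformly bounded for small $h$, so all remainders are controlled.

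\emph{Mean term.} We have
\[
\delta^\top(\Sigma_\theta^{-1}+\Sigma_{\theta+h}^{-1})\delta = 2h^2\mu'^\top\Sigma^{-1}\mu' + O(h^3).
\]
This holds because $\delta=O(h)$ and $\Sigma_{\theta+h}^{-1}=\Sigma^{-1}+O(h)$. After the prefactor $\tfrac12$, this gives the first part of $J_{\mathrm{Gauss}}$.

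\emph{Trace terms.} Set $E=\Sigma^{-1}\Delta\Sigma=O(h)$. Then $\operatorname{tr}(\Sigma^{-1}\Sigma_{\theta+h})=d+\operatorname{tr}E$. Using the Neumann series $(I+E)^{-1}=I-E+E^2+O(\|E\|^3)$,
\[
\operatorname{tr}(\Sigma_{\theta+h}^{-1}\Sigma)=\operatorname{tr}\big((I+E)^{-1}\big)=d-\operatorname{tr}E+\operatorname{tr}(E^2)+O(h^3).
\]
In the sum, the linear terms $\operatorname{tr}E$ cancel exactly, including their $O(h^2)$ parts, and the $d$'s cancel the $-2d$. What remains is $\operatorname{tr}(E^2)+O(h^3)$, and
\[
\operatorname{tr}(E^2)=h^2\operatorname{tr}(\Sigma^{-1}\Sigma'\Sigma^{-1}\Sigma')+O(h^3).
\]
After the factor $\tfrac12$, this gives the second part of $J_{\mathrm{Gauss}}$.

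The main point needing care is the remainder bookkeeping. The exact cancellation of $\operatorname{tr}E$ means second derivatives of $\Sigma$ never enter at order $h^2$. Likewise, $\mu''$ enters only at order $h^3$ in the mean term, since $\delta=-h\mu'-\tfrac{h^2}{2}\mu''+o(h^2)$ is multiplied by another $O(h)$ factor. To justify $O(h^3)$ rather than $o(h^2)$, we would use Taylor's theorem with $C^2$ bounds on a compact neighborhood. Adding the two pieces yields $J_{\mathrm{Gauss}}(\theta)h^2+O(h^3)$.
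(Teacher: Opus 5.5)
Your proposal is correct and follows essentially the same route as the paper. Both split the closed-form Gaussian Jeffreys divergence into mean and covariance contributions, expand the mean term using $\Sigma_{\theta+h}^{-1}=\Sigma_\theta^{-1}+O(h)$, and expand the trace terms with a Neumann series so that the first-order terms cancel. The only cosmetic difference is that the paper symmetrizes via $B=\Sigma_\theta^{-1/2}(\Sigma_{\theta+h}-\Sigma_\theta)\Sigma_\theta^{-1/2}$ and invokes similarity invariance of the trace, whereas your $E=\Sigma_\theta^{-1}\Delta\Sigma$ gives $\Sigma_\theta^{-1}\Sigma_{\theta+h}=I+E$ and $\Sigma_{\theta+h}^{-1}\Sigma_\theta=(I+E)^{-1}$ exactly, which is slightly more direct.
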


\begin{proof}
Applying Proposition~\ref{prop:app-fm-gaussian-jeffreys} to $\theta$ and
$\theta+h$, write
$D_h=D_{\mathrm J}(q_\theta^{\mathcal V_{\mathrm{Aff}}},
q_{\theta+h}^{\mathcal V_{\mathrm{Aff}}})$ and
$\Delta\mu_h=\mu_\theta-\mu_{\theta+h}$. Then
\begin{equation}
    D_h=
    \underbrace{
      \frac12\left[
        \operatorname{tr}(\Sigma_{\theta+h}^{-1}\Sigma_\theta)
        +\operatorname{tr}(\Sigma_\theta^{-1}\Sigma_{\theta+h})
        -2d
      \right]
    }_{\text{covariance contribution}}
    +
    \underbrace{
      \frac12
      \Delta\mu_h^\top
      (\Sigma_\theta^{-1}+\Sigma_{\theta+h}^{-1})
      \Delta\mu_h
    }_{\text{mean-difference contribution}}.
\end{equation}
Taylor expansion gives
\begin{equation}
    \mu_{\theta+h}
    =
    \mu_\theta+\mu_\theta'h+O(h^2),\qquad
    \Sigma_{\theta+h}
    =
    \Sigma_\theta+\Sigma_\theta'h+O(h^2).
\end{equation}
Because $\Sigma_{\theta+h}^{-1}=\Sigma_\theta^{-1}+O(h)$, the
mean-difference contribution is
\begin{equation}
    \mu_\theta'{}^\top\Sigma_\theta^{-1}\mu_\theta'\,h^2
    +
    O(h^3).
\end{equation}
For the covariance contribution, define
\begin{equation}
    B=
    \Sigma_\theta^{-1/2}
    (\Sigma_{\theta+h}-\Sigma_\theta)
    \Sigma_\theta^{-1/2}
    =
    \Sigma_\theta^{-1/2}\Sigma_\theta'\Sigma_\theta^{-1/2}h
    +O(h^2).
\end{equation}
From the definition of $B$, we have $\Sigma_{\theta+h}=\Sigma_\theta^{1/2}(I+B)\Sigma_\theta^{1/2}$. Hence $\Sigma_\theta^{-1}\Sigma_{\theta+h}=\Sigma_\theta^{-1/2}(I+B)\Sigma_\theta^{1/2}$ is similar to $I+B$, while $\Sigma_{\theta+h}^{-1}\Sigma_\theta=\Sigma_\theta^{-1/2}(I+B)^{-1}\Sigma_\theta^{1/2}$ is similar to $(I+B)^{-1}$. Similarity preserves trace, so the covariance trace can be evaluated using $I+B$ and $(I+B)^{-1}$. Since $(I+B)^{-1}=I-B+B^2+O(\|B\|^3)$, the first-order terms cancel and
\begin{align}
    \frac12
    \operatorname{tr}\left(
    \Sigma_{\theta+h}^{-1}\Sigma_\theta
    +
    \Sigma_\theta^{-1}\Sigma_{\theta+h}
    -2I
    \right)
    =
    \frac12
    \operatorname{tr}\left(
    \Sigma_\theta^{-1}\Sigma_\theta'
    \Sigma_\theta^{-1}\Sigma_\theta'
    \right)h^2
    +
    O(h^3).
\end{align}
Adding the mean-difference and covariance contributions gives the stated $J_{\mathrm{Gauss}}(\theta)$. For a vector-valued condition, replacing $h$ by a vector increment gives the Fisher information matrix
\begin{equation}
    [J_{\mathrm{Gauss}}(\theta)]_{ij}
    =
    (\partial_i\mu_\theta)^\top\Sigma_\theta^{-1}(\partial_j\mu_\theta)
    +\frac12\operatorname{tr}\!\left(
      \Sigma_\theta^{-1}(\partial_i\Sigma_\theta)
      \Sigma_\theta^{-1}(\partial_j\Sigma_\theta)
    \right).
\end{equation}
\end{proof}

\begin{proposition}[Averaging local affine coefficients gives linear Fisher information]
\label{prop:app-fm-linear-fisher}
Under the setup of Proposition~\ref{prop:app-fm-gaussian-fisher}, suppose the fitted conditional affine velocity is
\begin{equation}
    v_\xi(x,t)
    =
    \dot\beta_t\mu_\xi
    +A_{\xi,t}(x-\beta_t\mu_\xi),
    \qquad
    \xi\in\{\theta,\theta+h\},
\end{equation}
and assume $A_{\xi,t}$ is jointly continuous in $(\xi,t)$ and continuously differentiable in $\xi$ on a neighborhood of $\theta$ times $[0,1]$. Average the two fitted linear coefficients,
\begin{equation}
    \bar A_{\theta,h,t}
    =
    \frac12\left(A_{\theta,t}+A_{\theta+h,t}\right),
\end{equation}
and construct two new velocities
\begin{align}
    \widetilde v_\theta^{(h)}(x,t)
    &=
    \dot\beta_t\mu_\theta
    +\bar A_{\theta,h,t}(x-\beta_t\mu_\theta),\\
    \widetilde v_{\theta+h}^{(h)}(x,t)
    &=
    \dot\beta_t\mu_{\theta+h}
    +\bar A_{\theta,h,t}(x-\beta_t\mu_{\theta+h}).
\end{align}
Let $\widetilde q_\theta^{(h)}$ and $\widetilde q_{\theta+h}^{(h)}$ denote their endpoint distributions from the common source $\rho_0=\mathcal N(0,I)$. Then
\begin{equation}
    D_{\mathrm J}\!\left(
        \widetilde q_\theta^{(h)},
        \widetilde q_{\theta+h}^{(h)}
    \right)
    =
    \mu_\theta'{}^\top\Sigma_\theta^{-1}\mu_\theta'\,h^2
    +O(h^3),
\end{equation}
and therefore
\begin{equation}
    \lim_{h\to0}
    \frac{D_{\mathrm J}\!\left(
        \widetilde q_\theta^{(h)},
        \widetilde q_{\theta+h}^{(h)}
    \right)}{h^2}
    =
    \mu_\theta'{}^\top\Sigma_\theta^{-1}\mu_\theta',
\end{equation}
which is linear Fisher information.
\end{proposition}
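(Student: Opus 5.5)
The strategy is to observe that both averaged velocities share the same linear part $\bar A_{\theta,h,t}$, so their endpoint distributions share one covariance. The Jeffreys divergence then reduces to a Mahalanobis-type quadratic form in the mean difference.

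First, apply Lemma~\ref{lem:affine-flow-gaussian} to each $\widetilde v_\xi^{(h)}$, writing it as $\bar A_{\theta,h,t}x+(\dot\beta_tI-\beta_t\bar A_{\theta,h,t})\mu_\xi$. Both flows have the same fundamental matrix $\bar\Phi_{h,t}$, which solves $\dot{\bar\Phi}=\bar A_{\theta,h,t}\bar\Phi$ with $\bar\Phi_0=I$. Hence both endpoints have covariance $\bar S_h=\bar\Phi_{h,1}\bar\Phi_{h,1}^\top$.

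For the means, use the moment ODE of Eq.~\eqref{eq:affine-moment-odes}: $\dot m=\bar A_{\theta,h,t}(m-\beta_t\mu_\xi)+\dot\beta_t\mu_\xi$ with $m_0=0$. Exactly as in the Mahalanobis proof, $m_t=\beta_t\mu_\xi$ solves this, so uniqueness gives $m_1=\mu_\xi$.

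Eq.~\eqref{eq:affine-gaussian-jeffreys-evaluation} with $S_a=S_b=\bar S_h$ then gives the exact identity
\begin{equation}
D_{\mathrm J}\bigl(\widetilde q_\theta^{(h)},\widetilde q_{\theta+h}^{(h)}\bigr)=\Delta\mu_h^\top\bar S_h^{-1}\Delta\mu_h,\qquad \Delta\mu_h=\mu_{\theta+h}-\mu_\theta .
\end{equation}

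Next, control $\bar S_h$. At $h=0$ the averaged coefficient is $A_{\theta,t}$, the fitted coefficient for condition $\theta$. By hypothesis (and Proposition~\ref{prop:app-fm-gaussian-jeffreys}), the covariance ODE driven by $A_{\theta,t}$ from $I$ produces $\Sigma_\theta$ at $t=1$, so $\bar S_0=\Sigma_\theta$. Since $A_{\xi,t}$ is $C^1$ in $\xi$ uniformly in $t$, we have $\sup_t\|\bar A_{\theta,h,t}-A_{\theta,t}\|=O(h)$.

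This is the main obstacle: turning $O(h)$ closeness of the coefficients into $O(h)$ closeness of the flow maps. I would use a Gr\"onwall estimate on $E_t=\bar\Phi_{h,t}-\Phi_{\theta,t}$, which satisfies $\dot E=\bar A E+(\bar A-A_\theta)\Phi_\theta$ with $E_0=0$. This gives $\sup_t\|E_t\|\le C h\,e^{\int\|\bar A\|}$, where the constants are uniform by continuity on the compact set. Therefore $\bar S_h=\Sigma_\theta+O(h)$. Because $\Sigma_\theta$ is positive definite, inversion is smooth near it, so $\bar S_h^{-1}=\Sigma_\theta^{-1}+O(h)$.

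Finally, expand: $\Delta\mu_h=\mu'_\theta h+O(h^2)$ by the $C^2$ assumption. Substituting into the quadratic form gives $\mu_\theta'^\top\Sigma_\theta^{-1}\mu_\theta' h^2$, and every cross term carries an extra factor of $h$, so the remainder is $O(h^3)$. Dividing by $h^2$ and letting $h\to0$ yields the stated limit. The key feature to emphasize is that sharing $\bar A$ removes the trace (covariance-change) terms of Proposition~\ref{prop:app-fm-gaussian-fisher} exactly, not merely to leading order.
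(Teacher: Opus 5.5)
Your proposal is correct and follows the paper's proof. The shared coefficient $\bar A_{\theta,h,t}$ gives a common endpoint covariance, so the Jeffreys divergence collapses exactly to $\Delta\mu_h^\top\bar S_h^{-1}\Delta\mu_h$, and a Gr\"onwall argument gives $\bar S_h=\Sigma_\theta+O(h)$. The differences are minor: you run Gr\"onwall on the fundamental-matrix error $\bar\Phi_{h,t}-\Phi_{\theta,t}$ and then pass to $\bar\Phi\bar\Phi^\top$, whereas the paper applies it directly to the covariance error ODE, and you derive the endpoint means $m_1=\mu_\xi$ explicitly, which the paper only asserts.
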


\begin{proof}
Applying Lemma~\ref{lem:affine-flow-gaussian} to the two constructed velocities shows that their endpoint distributions are Gaussian. Because the velocities have the same linear coefficient $\bar A_{\theta,h,t}$ and start from the same Gaussian source, their endpoint covariance is identical; denote it by $\bar\Sigma_{\theta,h}$. Their respective endpoint means are $\mu_\theta$ and $\mu_{\theta+h}$, so
\begin{equation}
    \widetilde q_\theta^{(h)}
    =
    \mathcal N(\mu_\theta,\bar\Sigma_{\theta,h}),
    \qquad
    \widetilde q_{\theta+h}^{(h)}
    =
    \mathcal N(\mu_{\theta+h},\bar\Sigma_{\theta,h}).
\end{equation}
For Gaussians with a common covariance, Jeffreys divergence is exactly
\begin{equation}
    D_{\mathrm J}\!\left(
        \widetilde q_\theta^{(h)},
        \widetilde q_{\theta+h}^{(h)}
    \right)
    =
    (\mu_{\theta+h}-\mu_\theta)^\top
    \bar\Sigma_{\theta,h}^{-1}
    (\mu_{\theta+h}-\mu_\theta).
    \label{eq:linear-fisher-common-covariance-jeffreys}
\end{equation}
Since $\mu_{\theta+h}-\mu_\theta=\mu_\theta'h+O(h^2)$, Eq.~\eqref{eq:linear-fisher-common-covariance-jeffreys} has the claimed expansion provided that
\begin{equation}
    \bar\Sigma_{\theta,h}
    =
    \Sigma_\theta+O(h),
    \label{eq:linear-fisher-covariance-stability}
\end{equation}
because matrix inversion is smooth in a neighborhood of the positive-definite matrix $\Sigma_\theta$. Thus, it remains only to prove Eq.~\eqref{eq:linear-fisher-covariance-stability}.

Because $A_{\theta+h,t}=A_{\theta,t}+O(h)$ uniformly on $t\in[0,1]$, we have $\bar A_{\theta,h,t}=A_{\theta,t}+O(h)$. Let $\bar Q_t$ and $Q_{\theta,t}$ be the covariance trajectories generated by $\bar A_{\theta,h,t}$ and $A_{\theta,t}$, respectively. Both satisfy the affine covariance ODE in Eq.~\eqref{eq:affine-covariance-ode}, with the corresponding linear coefficient. Define $E_t=\bar Q_t-Q_{\theta,t}$ and $\Delta A_t=\bar A_{\theta,h,t}-A_{\theta,t}$. Subtracting these two instances of Eq.~\eqref{eq:affine-covariance-ode} gives
\begin{equation}
    \dot E_t
    =
    \bar A_{\theta,h,t}E_t
    +E_t\bar A_{\theta,h,t}^\top
    +\Delta A_tQ_{\theta,t}
    +Q_{\theta,t}\Delta A_t^\top,
    \qquad E_0=0.
    \label{eq:linear-fisher-covariance-error}
\end{equation}
The coefficients and $Q_{\theta,t}$ are bounded on $t \in [0,1]$, while $\sup_t\|\Delta A_t\|=O(h)$. Therefore Eq.~\eqref{eq:linear-fisher-covariance-error} implies $\|E_t\|\le C_1\int_0^t\|E_s\|\,ds+C_2h$, and Gronwall's inequality gives $\sup_{t\in[0,1]}\|E_t\|=O(h)$. At $t=1$, $\bar Q_1=\bar\Sigma_{\theta,h}$ and $Q_{\theta,1}=\Sigma_\theta$, so
\begin{equation}
    \bar\Sigma_{\theta,h}-\Sigma_\theta
    =
    E_1
    =
    O(h),
\end{equation}
which proves Eq.~\eqref{eq:linear-fisher-covariance-stability}. Positive definiteness of $\Sigma_\theta$ then gives $\bar\Sigma_{\theta,h}^{-1}=\Sigma_\theta^{-1}+O(h)$. Substituting this relation and $\mu_{\theta+h}-\mu_\theta=\mu_\theta'h+O(h^2)$ into Eq.~\eqref{eq:linear-fisher-common-covariance-jeffreys} yields
\begin{equation}
    D_{\mathrm J}\!\left(
        \widetilde q_\theta^{(h)},
        \widetilde q_{\theta+h}^{(h)}
    \right)
    =
    \mu_\theta'{}^\top\Sigma_\theta^{-1}\mu_\theta'\,h^2
    +O(h^3),
\end{equation}
\end{proof}

\begin{proposition}[Unconstrained flows give Jeffreys divergence and full Fisher information]
\label{prop:app-fm-unconstrained}
Consider an unconstrained flow trained by flow matching. At the population optimum, the standard flow-matching result \citep{Lipman2023FlowMatching} gives
\begin{equation}
    q_c^{\mathrm{uncon}}=p_c.
\end{equation}
Hence the induced distance is the true Jeffreys divergence:
\begin{equation}
    D_{\mathrm{uncon}}(a,b)
    =
    D_{\mathrm J}(p_a,p_b).
\end{equation}
For a continuous parameter $\theta\in\mathbb R^k$, further assume $p_\theta$ is positive on a common support, $\log p_\theta(x)$ is three times continuously differentiable in $\theta$, and differentiation can be exchanged with integration. Then, for $h\in\mathbb R^k$,
\begin{equation}
    D_{\mathrm J}(p_\theta,p_{\theta+h})
    =
    h^\top J(\theta)h
    +o(\|h\|^2),
\end{equation}
where $J(\theta)$ is the full Fisher information matrix.
\end{proposition}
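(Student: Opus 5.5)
The proof splits into two parts: the population identity $q_c^{\mathrm{uncon}}=p_c$, and a local Taylor expansion of the Jeffreys divergence in $\theta$.

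\textbf{Part 1: the population identity.} I will follow the standard marginalization argument of \citet{Lipman2023FlowMatching}.

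First, use Eq.~\eqref{eq:fm-loss-condition-decomposition} to split the loss across conditions. For each $c$, the conditional loss is a least-squares regression of $U_t^{(c)}$ on $(X_t^{(c)},t)$ over all measurable $v_c$. Its minimizer is therefore the conditional expectation
\[
u_c(x,t)=\mathbb E[U_t^{(c)}\mid X_t^{(c)}=x].
\]

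Next, show that the marginal path density $p_{c,t}$ of $X_t^{(c)}=\alpha_tX_0+\beta_tX_1^{(c)}$ satisfies the continuity equation
\[
\partial_tp_{c,t}+\nabla\cdot(p_{c,t}u_c)=0.
\]
To do this, test against a smooth compactly supported $\varphi$:
\[
\frac{d}{dt}\mathbb E[\varphi(X_t^{(c)})]=\mathbb E[\nabla\varphi(X_t^{(c)})^\top U_t^{(c)}]=\mathbb E[\nabla\varphi(X_t^{(c)})^\top u_c(X_t^{(c)},t)].
\]
The second equality uses the tower property.

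Since $p_{c,0}=\rho_0$ and $p_{c,1}=p_c$, uniqueness of solutions of the continuity equation for a regular enough velocity shows that the ODE flow of $u_c$ pushes $\rho_0$ to $p_c$. Hence $q_c^{\mathrm{uncon}}=p_c$, and plugging into Eq.~\eqref{eq:flow-jeffreys} gives $D_{\mathrm{uncon}}(a,b)=D_{\mathrm J}(p_a,p_b)$.

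\textbf{Main obstacle.} The delicate point is regularity. One needs $u_c$ locally Lipschitz, with enough integrability, for the ODE to be well posed and for the continuity equation to have a unique solution. The Gaussian source smooths $p_{c,t}$ for $t<1$, but the endpoint $t=1$ can be singular. I would state this as a standing assumption inherited from the cited flow-matching result, rather than prove it.

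\textbf{Part 2: the Fisher expansion.} Write
\[
D_{\mathrm J}(p_\theta,p_{\theta+h})=\int (p_\theta-p_{\theta+h})\,(\log p_\theta-\log p_{\theta+h})\,dx .
\]
Set $\ell_\theta=\log p_\theta$ and Taylor expand to second order in $h$:
\[
\ell_{\theta+h}-\ell_\theta=h^\top\nabla\ell_\theta+\tfrac12h^\top\nabla^2\ell_\theta h+R_3 .
\]
Similarly,
\[
p_{\theta+h}-p_\theta=p_\theta\bigl(h^\top\nabla\ell_\theta+O(\|h\|^2)\bigr).
\]
The product of the two leading terms, integrated, gives $\mathbb E_{p_\theta}[(h^\top\nabla\ell_\theta)^2]=h^\top J(\theta)h$. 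Every remaining term in the product is $O(\|h\|^3)$.

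To justify exchanging the expansion with the integral, I would apply dominated convergence using the assumed $C^3$ regularity and the interchangeability of differentiation and integration. These also give the standard identities $\mathbb E[\nabla\ell_\theta]=0$ and $J=-\mathbb E[\nabla^2\ell_\theta]$, which is enough to conclude the $o(\|h\|^2)$ remainder.

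An equivalent cross-check: each directed KL divergence is $\tfrac12h^\top J h+o(\|h\|^2)$, and their sum gives the claim.
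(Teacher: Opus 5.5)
Your proposal is correct, but the Fisher-information half takes a different route from the paper.

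\textbf{Endpoint identity.} For $q_c^{\mathrm{uncon}}=p_c$ the paper gives no argument and simply cites \citet{Lipman2023FlowMatching}. Your sketch (conditional-expectation minimizer, weak continuity equation via the tower property, uniqueness) is the argument behind that citation. Since you defer the regularity of $u_c$ near $t=1$ to the same source, both versions rest on the same imported fact.

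\textbf{Fisher expansion.} The paper treats the two directed KL divergences separately. It Taylor-expands $\ell_{\theta+h}$ at $\theta$ and uses $\mathbb E_{p_\theta}[\nabla_\theta\ell_\theta]=0$ and $J(\theta)=-\mathbb E_{p_\theta}[\nabla_\theta^2\ell_\theta]$ to get $\tfrac12h^\top J(\theta)h$. It then repeats the expansion at the shifted base point $\theta+h$ with increment $-h$, and invokes continuity of $J$ to replace $J(\theta+h)$ by $J(\theta)$.

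You instead expand the symmetrized integrand $(p_{\theta+h}-p_\theta)(\ell_{\theta+h}-\ell_\theta)$ at the single point $\theta$. Because it is a product of two $O(\|h\|)$ factors, the leading term $\mathbb E_{p_\theta}[(h^\top\nabla_\theta\ell_\theta)^2]=h^\top J(\theta)h$ appears directly, with no first-order term to cancel. This needs neither the zero-mean score nor the information identity. It also avoids moving the base point to $\theta+h$ and the appeal to continuity of $J$ used to undo that shift.

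Your closing sentence, which credits those identities with concluding the remainder, is therefore misplaced for your own route. They are needed only in your directed-KL cross-check, which is exactly the paper's proof. What the paper's route buys in return is the finer statement that each direction separately contributes $\tfrac12h^\top J(\theta)h$.

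In both versions, moving the pointwise Taylor remainder inside the integral really requires an integrable envelope for the second and third $\theta$-derivatives on a neighborhood of $\theta$. The stated hypothesis that differentiation and integration can be exchanged is a stand-in for this, in the paper as well as in your sketch.
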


\begin{proof}
The population endpoint identity $q_c^{\mathrm{uncon}}=p_c$ is the standard flow-matching result stated above and is not reproved here. Substitution into Eq.~\eqref{eq:flow-jeffreys} gives $D_{\mathrm{uncon}}(a,b)=D_{\mathrm J}(p_a,p_b)$.

For the continuous family, define the score $s_i(x;\theta)=\partial_{\theta_i}\log p_\theta(x)$ and the Fisher information matrix
\begin{equation}
    J_{ij}(\theta)=
    \mathbb E_{p_\theta}[s_i(X;\theta)s_j(X;\theta)].
\end{equation}
The stated regularity conditions imply $\mathbb E_{p_\theta}[s_i(X;\theta)]=0$ and the information identity
\begin{equation}
    J_{ij}(\theta)
    =
    -\mathbb E_{p_\theta}
    [\partial_{\theta_i}\partial_{\theta_j}\log p_\theta(X)].
\end{equation}
Expand $\ell_{\theta+h}(x)=\log p_{\theta+h}(x)$ around $\theta$:
\begin{equation}
    \ell_{\theta+h}(x)
    =
    \ell_\theta(x)
    +
    h^\top\nabla_\theta\ell_\theta(x)
    +
    \frac12h^\top\nabla_\theta^2\ell_\theta(x)h
    +
    o(\|h\|^2).
\end{equation}
Taking expectation under $p_\theta$ yields
\begin{equation}
    D_{\mathrm{KL}}(p_\theta\|p_{\theta+h})
    =
    \frac12h^\top J(\theta)h+o(\|h\|^2).
\end{equation}
Applying the same expansion at $\theta+h$ with increment $-h$ gives
\begin{equation}
    D_{\mathrm{KL}}(p_{\theta+h}\|p_\theta)
    =
    \frac12h^\top J(\theta+h)h+o(\|h\|^2)
    =
    \frac12h^\top J(\theta)h+o(\|h\|^2),
\end{equation}
where the last equality uses continuity of $J$. Adding both directions yields
\begin{equation}
    D_{\mathrm J}(p_\theta,p_{\theta+h})
    =
    h^\top J(\theta)h
    +
    o(\|h\|^2).
\end{equation}
\end{proof}

Proposition~\ref{prop:app-fm-gaussian-fisher} should not be interpreted as a special case of Proposition~\ref{prop:app-fm-unconstrained}. The former constrains the velocity to be affine and can be applied even when the target distributions are non-Gaussian. The latter instead assumes an unconstrained flow that recovers the target distributions themselves.

\begin{proposition}[Similarity-constrained velocities generate similarity transformations]
\label{prop:app-fm-geometric}
Let $T$ be the time-one map of
\begin{equation}
    \frac{dx}{dt}
    =
    a_t+\Omega_t(x-o_t)+\lambda_t(x-o_t),
    \qquad
    \Omega_t^\top=-\Omega_t.
\end{equation}
If the coefficients are continuous on $[0,1]$, then
\begin{equation}
    T(z)=sRz+u,
    \qquad
    s>0,\quad R\in\mathrm{SO}(d),\quad u\in\mathbb R^d.
\end{equation}
Thus the flow transforms every data point by an isotropic scaling, a rotation, and a translation, without shear or nonlinear deformation.
\end{proposition}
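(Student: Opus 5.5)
The velocity is affine in $x$, so I would apply Lemma~\ref{lem:affine-flow-gaussian}. Rewrite the field as
\begin{equation*}
\dot x=(\Omega_t+\lambda_tI)x+r_t,\qquad r_t=a_t-(\Omega_t+\lambda_tI)o_t .
\end{equation*}
The coefficients $A_t=\Omega_t+\lambda_tI$ and $r_t$ are continuous on $[0,1]$. The lemma therefore gives the time-one map $T(z)=\Phi_1z+\eta_1$, where $\dot\Phi_t=A_t\Phi_t$, $\Phi_0=I$. This identifies $u=\eta_1$. It remains to show $\Phi_1=sR$ with $s>0$ and $R\in\mathrm{SO}(d)$.

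I would split off the scalar part. Set $\sigma_t=\exp\!\bigl(\int_0^t\lambda_\tau\,d\tau\bigr)>0$ and let $\Psi_t$ solve $\dot\Psi_t=\Omega_t\Psi_t$ with $\Psi_0=I$. Since $\lambda_tI$ commutes with every matrix, differentiating $\sigma_t\Psi_t$ gives $\lambda_t\sigma_t\Psi_t+\sigma_t\Omega_t\Psi_t=A_t(\sigma_t\Psi_t)$. The initial value is $I$, so uniqueness of linear ODEs yields $\Phi_t=\sigma_t\Psi_t$. I then take $s=\sigma_1$.

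Next I would show $\Psi_t$ is orthogonal. Using $\Omega_t^\top=-\Omega_t$,
\begin{equation*}
\tfrac{d}{dt}\bigl(\Psi_t^\top\Psi_t\bigr)=\Psi_t^\top(\Omega_t^\top+\Omega_t)\Psi_t=0,
\end{equation*}
so $\Psi_t^\top\Psi_t=I$ for all $t$. For the determinant, Liouville's formula gives $\det\Psi_t=\exp\!\bigl(\int_0^t\operatorname{tr}\Omega_\tau\,d\tau\bigr)=1$, because skew-symmetric matrices have zero trace. Alternatively, $\det\Psi_t$ is continuous, lies in $\{\pm1\}$, and starts at $1$. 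Hence $R=\Psi_1\in\mathrm{SO}(d)$, and $T(z)=sRz+u$.

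The only subtle point is that $\Omega_t$ varies in time, so $\Psi_1$ is not simply $\exp\int\Omega$. The orthogonality argument above avoids any such formula, and the scalar factorization works only because $\lambda_tI$ is central. For the final sentence of the statement, $T$ is affine with linear part $sR$, which is isotropic scaling times rotation. It therefore has no shear and no nonlinear deformation.
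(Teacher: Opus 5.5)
Your proof is correct and follows essentially the same route as the paper. Both rewrite the field as $\dot x=A_tx+r_t$ with $A_t=\Omega_t+\lambda_tI$ and use skew-symmetry of $\Omega_t$ to control the Gram matrix of the fundamental solution. The only cosmetic difference is that you factor out $\sigma_t=\exp(\int_0^t\lambda_\tau\,d\tau)$ first and show $\Psi_t^\top\Psi_t$ is constant, whereas the paper shows $M_t=\Phi_t^\top\Phi_t$ solves $\dot M_t=2\lambda_tM_t$ and then normalizes $R_t=s_t^{-1}\Phi_t$. Your Liouville-formula argument for $\det R=1$ is a valid alternative to the continuity argument, which you also give and which is the one the paper uses.
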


\begin{proof}
Rewrite the flow as $\dot x=A_tx+r_t$, where
\begin{equation}
    A_t=\Omega_t+\lambda_tI,
    \qquad
    r_t=a_t-(\Omega_t+\lambda_tI)o_t.
\end{equation}
Let $\Phi_t$ solve $\dot\Phi_t=A_t\Phi_t$ with $\Phi_0=I$. Since $\Omega_t$ is skew-symmetric,
\begin{align}
    \frac{d}{dt}(\Phi_t^\top\Phi_t)
    &=
    \Phi_t^\top(A_t^\top+A_t)\Phi_t \\
    &=
    2\lambda_t\Phi_t^\top\Phi_t.
\end{align}
Let $M_t=\Phi_t^\top\Phi_t$. Because $M_0=I$, the matrix ODE $\dot M_t=2\lambda_tM_t$ has the solution
\begin{equation}
    M_t
    =
    \exp\!\left(2\int_0^t\lambda_\tau\,d\tau\right)I
    =
    s_t^2I,
    \qquad
    s_t=\exp\!\left(\int_0^t\lambda_\tau\,d\tau\right)>0.
\end{equation}
Define $R_t=s_t^{-1}\Phi_t$. Then
\begin{equation}
    R_t^\top R_t
    =
    s_t^{-2}\Phi_t^\top\Phi_t
    =I,
\end{equation}
so $R_t$ is orthogonal. Moreover, $R_0=I$ and $R_t$ varies continuously with $t$. Since an orthogonal matrix has determinant $+1$ or $-1$, continuity prevents the determinant from changing sign; hence $\det R_t=1$ and $R_t\in\mathrm{SO}(d)$.

Finally, variation of constants gives the trajectory starting from $z$ as
\begin{equation}
    T_t(z)
    =
    \Phi_tz
    +
    \Phi_t\int_0^t\Phi_\tau^{-1}r_\tau\,d\tau
    =
    s_tR_tz+u_t,
\end{equation}
where $u_t=\Phi_t\int_0^t\Phi_\tau^{-1}r_\tau\,d\tau$ is independent of $z$. Evaluating at $t=1$ and writing $s=s_1$, $R=R_1$, and $u=u_1$ yields $T(z)=sRz+u$, as claimed.
\end{proof}

\section{Likelihood Fine-Tuning and Metric Correspondences}
\label{app:nll-derivations}

\subsection{Optional negative-log-likelihood fine-tuning}
\label{sec:nll-fine-tuning}

Evaluating the Jeffreys divergence requires log densities of
$q_c^{\mathcal V}$. Flow matching optimizes a regression objective that is
fast to train but does not directly maximize likelihood. Many previous works emploied negative log likelihood (NLL) to improve fitting. In this manuscript, we applied fine-tuning for our defined geometric-preserved distance (Supplementary Figure~\ref{fig:geom}).

Below we describe the fine-tuning procedure, and later prove that optimizing NLL also recovers the distance correspondences in Table~\ref{tab:constraints} at their population optima.

The log density is evaluated using the continuous change-of-variables identity
\begin{equation}
    \log q_c^{\mathcal V}(x_1)
    =
    \log \rho_0(x_0)
    -
    \int_0^1 \nabla_x\cdot v_c^{\mathcal V}(x_t,t)\,dt,
    \qquad
    x_0=(T_{c,1}^{\mathcal V})^{-1}(x_1).
    \label{eq:cnf-log-density}
\end{equation}
Here, $\nabla_x\cdot v_c^{\mathcal V}$ measures how the flow locally expands
or contracts volume. For an observed response $x_1$, we integrate backward to
recover $x_0$ and evaluate $\log q_c^{\mathcal V}(x_1)$. We fine-tune the constrained velocity field by minimizing the NLL
\begin{equation}
    \mathcal L_{\mathrm{NLL}}
    =
    -\frac{1}{N}\sum_c\sum_{i=1}^{n_c}
    \log q_c^{\mathcal V}\!\left(x_i^{(c)}\right),
    \qquad N=\sum_c n_c.
    \label{eq:nll}
\end{equation}
This stage encourages $q_c^{\mathcal V}$ to assign higher density to the
recorded responses, improving its finite-sample approximation to $p_c$ while
keeping the velocity constrained to $\mathcal V$.

\subsection{Likelihood-based proofs}

This section proves the counterparts of Table~\ref{tab:constraints} when $q_c^{\mathcal V}$ is fitted by population negative log likelihood. These statements describe the global optimum of Eq.~\eqref{eq:nll}. By Lemma~\ref{lem:affine-flow-gaussian}, affine velocity classes transport the Gaussian source within a Gaussian family. We assume finite second moments and positive-definite covariances wherever an affine Gaussian density is used.

For the translation families below, let
$\mathbf b=(b_c)_{c\in\mathcal C}$ and define the population objective
\begin{align}
    \mathcal L_{\mathrm{NLL}}(\mathbf b)
    &=
    \mathbb E_{C\sim\pi_{\mathcal C}}
    \mathbb E_{X\sim p_C}
    [-\log q_C^{\mathcal V}(X)] \notag\\
    &=
    \sum_{c\in\mathcal C}
    \pi_{\mathcal C}(c)\,
    \mathcal L_{\mathrm{NLL},c}(b_c),
    \label{eq:nll-condition-decomposition}\\
    \mathcal L_{\mathrm{NLL},c}(b_c)
    &=
    -\mathbb E_{X\sim p_c}
    [\log q_c^{\mathcal V}(X)].
    \notag
\end{align}

\begin{proposition}[Likelihood-trained translations give Euclidean, cosine, and correlation distances]
\label{prop:app-nll-translations}
\label{prop:nll-euclidean}
Let $\rho_0=\mathcal N(0,I)$ and
$v_c(x,t)=\dot\beta_t b_c$, with $\beta_0=0$ and $\beta_1=1$. Then the
time-one map is $T_{c,1}(x)=x+b_c$, and
Lemma~\ref{lem:affine-flow-gaussian} gives
$q_c^{\mathcal V}=\mathcal N(b_c,I)$. Assume
$\pi_{\mathcal C}(c)>0$ and that each $p_c$ has finite second moments. Let
$r>0$ and $H=I-\mathbf 1\mathbf 1^\top/d$. Population likelihood training
yields:
\begin{align}
    b_c^\star&=\mu_c
    &&\text{without an additional constraint},\label{eq:nll-trans-euc-opt}\\
    b_c^\star&=r\frac{\mu_c}{\|\mu_c\|}
    &&\text{under }\|b_c\|=r,\quad \mu_c\ne0,\label{eq:nll-trans-cos-opt}\\
    b_c^\star&=r\frac{H\mu_c}{\|H\mu_c\|}
    &&\text{under }\|b_c\|=r,\ \mathbf1^\top b_c=0,\quad H\mu_c\ne0.
    \label{eq:nll-trans-corr-opt}
\end{align}
The corresponding Jeffreys divergences are squared Euclidean distance, $2r^2$ times cosine distance, and $2r^2$ times Pearson correlation distance, respectively.
\end{proposition}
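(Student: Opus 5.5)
The plan is to reduce likelihood training to the same quadratic problem that appeared in the flow-matching proofs. First, the time-one map and the fitted density. Integrating $\dot x_t=\dot\beta_t b_c$ from $0$ to $1$ gives $x_1=x_0+(\beta_1-\beta_0)b_c=x_0+b_c$. Lemma~\ref{lem:affine-flow-gaussian} (with $A_t=0$, $r_t=\dot\beta_t b_c$) then gives $q_c^{\mathcal V}=\mathcal N(b_c,I)$. Equivalently, in Eq.~\eqref{eq:cnf-log-density} the divergence term vanishes, so $\log q_c^{\mathcal V}(x)=\log\rho_0(x-b_c)$.

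Second, the objective. Eq.~\eqref{eq:nll-condition-decomposition} separates over conditions, and $\pi_{\mathcal C}(c)>0$. Each constraint set (none, the sphere, the centered sphere) is imposed on each $b_c$ individually. So, as in Proposition~\ref{prop:app-fm-euclidean}, the global minimizer is obtained by minimizing each $\mathcal L_{\mathrm{NLL},c}$ separately. Using finite second moments, compute
\begin{equation}
\mathcal L_{\mathrm{NLL},c}(b_c)=\tfrac d2\log(2\pi)+\tfrac12\mathbb E_{p_c}\|X-b_c\|^2=\tfrac12\|b_c-\mu_c\|^2+\tfrac12\operatorname{tr}\Sigma_c+\tfrac d2\log(2\pi).
\end{equation}
This has the same form as Eq.~\eqref{eq:fm-translation-quadratic}, with $K_\beta$ replaced by $1/2$.

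Third, the three optimization cases.
\begin{itemize}
\item Unconstrained: the minimizer is $b_c^\star=\mu_c$.
\item Under $\|b_c\|=r$: the term $\|b_c\|^2$ is constant, so the problem becomes maximizing $b_c^\top\mu_c$. Cauchy--Schwarz, as in Proposition~\ref{prop:app-fm-cosine}, gives the unique maximizer $r\mu_c/\|\mu_c\|$ when $\mu_c\neq0$.
\item Under the additional constraint $\mathbf 1^\top b_c=0$: feasible points satisfy $b_c=Hb_c$, so $b_c^\top\mu_c=b_c^\top H\mu_c$. The argument of Proposition~\ref{prop:app-fm-correlation} then gives $rH\mu_c/\|H\mu_c\|$.
\end{itemize}

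Finally, the divergences. The fitted laws are $\mathcal N(b_a^\star,I)$ and $\mathcal N(b_b^\star,I)$, and the Jeffreys divergence between two unit-covariance Gaussians is $\|b_a^\star-b_b^\star\|^2$. This follows from Eq.~\eqref{eq:affine-gaussian-jeffreys-evaluation} with $S_a=S_b=I$.
\begin{itemize}
\item Euclidean case: this is $\|\mu_a-\mu_b\|^2$.
\item Cosine case: with unit vectors $u_c$, $\|ru_a-ru_b\|^2=2r^2(1-u_a^\top u_b)$, which is $2r^2$ times cosine distance.
\item Correlation case: the same identity with $\widetilde u_c=H\mu_c/\|H\mu_c\|$ gives $2r^2$ times Pearson correlation distance.
\end{itemize}

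There is no real obstacle. The only point needing care is the separability and existence of the global optimum on the constraint sets. Existence holds because each objective is a continuous quadratic on a closed set, and on the spheres the set is compact. Separability holds because the constraints are imposed per condition.
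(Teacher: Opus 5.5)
Your proposal is correct and follows essentially the same route as the paper's proof. Both separate the population NLL over conditions, reduce each term to $\frac12\|b_c-\mu_c\|^2$ plus constants, solve the three cases by Cauchy--Schwarz (using $b_c=Hb_c$ on the centered sphere), and read off the Jeffreys divergence as the squared distance between the means of unit-covariance Gaussians.
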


\begin{proof}
Equation~\eqref{eq:nll-condition-decomposition} shows that the global likelihood objective is a positive weighted sum of condition-specific losses. Because $b_c$ is condition specific, its minimization separates over $c$. Direct evaluation of the Gaussian negative log likelihood gives
\begin{align}
    \mathcal L_{\mathrm{NLL},c}(b_c)
    &=
    \frac12\mathbb E_{X\sim p_c}\!\left[\|X-b_c\|_2^2\right]
    +\mathrm{const} \\
    &=
    \frac12\|b_c-\mu_c\|_2^2
    +\frac12\operatorname{tr}(\Sigma_c)
    +\mathrm{const} \\
    &=
    \frac12\|b_c\|^2-b_c^\top\mu_c+\mathrm{const},
    \label{eq:nll-translation-projection}
\end{align}
where terms independent of $b_c$ are absorbed into the constant. The unconstrained quadratic $\frac12\|b_c-\mu_c\|_2^2$ is uniquely minimized at $b_c=\mu_c$, proving Eq.~\eqref{eq:nll-trans-euc-opt}.

On the sphere $\|b_c\|=r$, the norm term is constant, so Cauchy--Schwarz reduces the problem to maximizing $b_c^\top\mu_c$ and gives Eq.~\eqref{eq:nll-trans-cos-opt}. On the centered sphere, $b_c=Hb_c$, so $b_c^\top\mu_c=b_c^\top H\mu_c$; applying Cauchy--Schwarz within this subspace gives Eq.~\eqref{eq:nll-trans-corr-opt}.

All three fitted distributions have identity covariance, so Jeffreys divergence equals the squared distance between their means. Substitution gives
\begin{align}
    D_{\mathcal V_{\mathrm{Euc}}}(a,b)
    &=\|\mu_a-\mu_b\|^2,\\
    D_{\mathcal V_{\cos}}(a,b)
    &=2r^2\left(1-\frac{\mu_a^\top\mu_b}{\|\mu_a\|\|\mu_b\|}\right),\\
    D_{\mathcal V_{\mathrm{corr}}}(a,b)
    &=2r^2\left(1-\frac{(H\mu_a)^\top(H\mu_b)}{\|H\mu_a\|\|H\mu_b\|}\right).
\end{align}
If a required mean vector is zero, the corresponding constrained optimum is nonunique, exactly matching the usual domain restriction of cosine or correlation distance.
The response distributions $p_c$ need not be Gaussian.
\end{proof}

\begin{proposition}[Likelihood-trained affine flows give Mahalanobis and Gaussian Jeffreys divergence]
\label{prop:app-nll-affine}
Assume $\pi_{\mathcal C}(c)>0$ and each $p_c$ has mean $\mu_c$ and positive-definite covariance $\Sigma_c$. Let $\Sigma_{\mathrm{pool}}=\mathbb E_{C\sim\pi_{\mathcal C}}[\Sigma_C]$.

If the reachable Gaussian family has condition-specific means and one shared covariance,
\begin{equation}
    q_c^{\mathcal V}=\mathcal N(m_c,S),
\end{equation}
then the population likelihood optimum is $m_c^\star=\mu_c$, $S^\star=\Sigma_{\mathrm{pool}}$, and
\begin{equation}
    D_{\mathcal V}(a,b)
    =
    (\mu_a-\mu_b)^\top\Sigma_{\mathrm{pool}}^{-1}(\mu_a-\mu_b).
\end{equation}
If both means and covariances are condition specific, the optimum is $q_c^{\mathcal V}=\mathcal N(\mu_c,\Sigma_c)$ and the induced distance is the Gaussian Jeffreys divergence in Proposition~\ref{prop:app-fm-gaussian-jeffreys}.
\end{proposition}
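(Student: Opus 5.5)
The argument follows the template of Proposition~\ref{prop:app-nll-translations}. First use the decomposition in Eq.~\eqref{eq:nll-condition-decomposition} to write the population objective as $\sum_c \pi_{\mathcal C}(c)\,\mathbb E_{p_c}[-\log q_c^{\mathcal V}(X)]$. For a Gaussian model $q_c^{\mathcal V}=\mathcal N(m_c,S_c)$, each summand equals
\begin{equation}
    \frac12\log\det S_c+\frac12\operatorname{tr}\!\big(S_c^{-1}\Sigma_c\big)+\frac12(m_c-\mu_c)^\top S_c^{-1}(m_c-\mu_c)+\mathrm{const}.
\end{equation}
This uses $\mathbb E_{p_c}[(X-m_c)(X-m_c)^\top]=\Sigma_c+(\mu_c-m_c)(\mu_c-m_c)^\top$ and requires only finite second moments, not Gaussianity of $p_c$. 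The mean term is nonnegative for any positive-definite covariance and vanishes exactly at $m_c=\mu_c$. The means therefore decouple: for every fixed covariance, $m_c^\star=\mu_c$ is the unique minimizer.

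For the covariance in the shared case, substitute $m_c=\mu_c$ and sum over conditions. Because $\pi_{\mathcal C}$ sums to one, this gives
\begin{equation}
    \tfrac12\log\det S+\tfrac12\operatorname{tr}\!\big(S^{-1}\Sigma_{\mathrm{pool}}\big).
\end{equation}
Reparametrize by $P=S^{-1}$ to get $-\tfrac12\log\det P+\tfrac12\operatorname{tr}(P\Sigma_{\mathrm{pool}})$, which is strictly convex on positive-definite matrices. Its stationarity condition $P^{-1}=\Sigma_{\mathrm{pool}}$ then gives the unique global minimizer $S^\star=\Sigma_{\mathrm{pool}}$. As a cross-check, the objective equals $\tfrac12\sum_i(\lambda_i-\log\lambda_i)+\mathrm{const}$, where $\lambda_i$ are the eigenvalues of $S^{-1}\Sigma_{\mathrm{pool}}$; since $\lambda-\log\lambda\ge1$ with equality only at $\lambda=1$, the minimum forces all $\lambda_i=1$. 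In the condition-specific case the objective separates over $c$, and the same argument applied to each $\Sigma_c$ gives $S_c^\star=\Sigma_c$.

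The distances then come from Eq.~\eqref{eq:affine-gaussian-jeffreys-evaluation}. With a shared covariance $S=\Sigma_{\mathrm{pool}}$, the trace terms cancel to $2d$, and the mean term $\tfrac12\delta^\top(2\Sigma_{\mathrm{pool}}^{-1})\delta$ is the stated squared Mahalanobis distance. With condition-specific covariances, substituting $\mu_c$ and $\Sigma_c$ reproduces exactly the formula of Proposition~\ref{prop:app-fm-gaussian-jeffreys}.

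The main obstacle is reachability, i.e.\ confirming that the optimum over the parametrized Gaussian family is attained by an actual velocity in the class. Lemma~\ref{lem:affine-flow-gaussian} shows only that affine flows stay Gaussian. To show that $\mathcal N(\mu_c,\Sigma_{\mathrm{pool}})$ or $\mathcal N(\mu_c,\Sigma_c)$ is realized, I would point to the explicit coefficients from Propositions~\ref{prop:app-fm-mahalanobis} and~\ref{prop:app-fm-gaussian-jeffreys}: $b_c=\mu_c$ and $A_t=\bar R_t\bar S_t^{-1}$, or $A_{c,t}=R_{c,t}S_{c,t}^{-1}$, which produce exactly these endpoints. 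Because the statement is phrased in terms of the reachable family, this realization is all that is needed, and the likelihood optimum over the family is achieved.
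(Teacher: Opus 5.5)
Your proposal is correct and follows the paper's proof essentially step for step. You write the population NLL as a $\pi_{\mathcal C}$-weighted Gaussian cross-entropy, eliminate the means for each fixed covariance, minimize the remaining $\log\det S+\operatorname{tr}(S^{-1}\Sigma_{\mathrm{pool}})$ (or its per-condition analogue), and read the distances off the Gaussian Jeffreys formula. Your strict-convexity/eigenvalue argument for uniqueness of $S^\star$ and your reachability check via the flow-matching coefficients only make explicit what the paper either asserts directly or builds into the hypothesis.
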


\begin{proof}
For the shared-covariance family, direct evaluation of the Gaussian negative log likelihood and averaging over $C\sim\pi_{\mathcal C}$ gives
\begin{align}
    \mathcal L_{\mathrm{NLL}}(\mathbf m,S)
    =
    \frac12\left[
      \log\det S
      +\mathbb E_C\operatorname{tr}(S^{-1}\Sigma_C)
      +\mathbb E_C(\mu_C-m_C)^\top S^{-1}(\mu_C-m_C)
    \right]
    +\mathrm{const}.
\end{align}
For every fixed $S\succ0$, positivity of $\pi_{\mathcal C}(c)$ and positive definiteness of $S^{-1}$ give the unique mean optimum $m_c^\star=\mu_c$. The remaining covariance objective is
\begin{equation}
    \frac12\left[
      \log\det S
      +\operatorname{tr}(S^{-1}\Sigma_{\mathrm{pool}})
    \right]
    +\mathrm{const}.
\end{equation}
The Gaussian cross-entropy is uniquely minimized by matching the pooled covariance, so $S^\star=\Sigma_{\mathrm{pool}}$. Hence
\begin{equation}
    q_c^{\mathcal V}=\mathcal N(\mu_c,\Sigma_{\mathrm{pool}}),
    \qquad
    D_{\mathcal V}(a,b)
    =
    (\mu_a-\mu_b)^\top\Sigma_{\mathrm{pool}}^{-1}(\mu_a-\mu_b),
\end{equation}
where the distance follows from the common-covariance Gaussian KL formula.

If the covariance is condition specific, the global loss separates over $c$. Minimizing the Gaussian cross-entropy conditionwise gives $(m_c^\star,S_c^\star)=(\mu_c,\Sigma_c)$ for every condition. Adding the two directed Gaussian KL divergences cancels the log determinants and yields the formula in Proposition~\ref{prop:app-fm-gaussian-jeffreys}.
\end{proof}

\begin{proposition}[Likelihood-trained affine flows give Gaussian and linear Fisher information]
\label{prop:app-nll-fisher}
Let $\theta$ be scalar, and suppose likelihood optimization is pointwise-separable in $\theta$ and can realize $q_\theta^{\mathcal V}=\mathcal N(m_\theta,S_\theta)$ at every condition. Assume $\mu_\theta$ and $\Sigma_\theta$ are twice continuously differentiable and $\Sigma_\theta$ is positive definite. At the population likelihood optimum, $m_\theta^\star=\mu_\theta$ and $S_\theta^\star=\Sigma_\theta$, and
\begin{equation}
    \lim_{h\to0}
    \frac{D_{\mathcal V}(\theta,\theta+h)}{h^2}
    =
    \mu_\theta'{}^\top\Sigma_\theta^{-1}\mu_\theta'
    +\frac12\operatorname{tr}\!\left(
      \Sigma_\theta^{-1}\Sigma_\theta'
      \Sigma_\theta^{-1}\Sigma_\theta'
    \right).
\end{equation}
If the two nearby fitted distributions are instead constrained to share a covariance $\bar\Sigma_{\theta,h}=\Sigma_\theta+O(h)$ while retaining their separate means, the limit is $\mu_\theta'{}^\top\Sigma_\theta^{-1}\mu_\theta'$, the linear Fisher information.
\end{proposition}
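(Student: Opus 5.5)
The plan has two parts: identify the pointwise likelihood optimum, then reduce the Fisher limits to Propositions~\ref{prop:app-fm-gaussian-fisher} and~\ref{prop:app-fm-linear-fisher}.

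\emph{The optimum.} By the separability assumption, the population objective at each $\theta$ is the Gaussian cross-entropy
\begin{equation}
    \frac12\left[\log\det S_\theta+\operatorname{tr}(S_\theta^{-1}\Sigma_\theta)+(\mu_\theta-m_\theta)^\top S_\theta^{-1}(\mu_\theta-m_\theta)\right]+\mathrm{const}.
\end{equation}
This is the condition-specific computation already done in Proposition~\ref{prop:app-nll-affine}. For fixed $S_\theta\succ0$, the quadratic term is minimized uniquely at $m_\theta^\star=\mu_\theta$. The remaining objective $\log\det S+\operatorname{tr}(S^{-1}\Sigma_\theta)$ is minimized uniquely at $S_\theta^\star=\Sigma_\theta$. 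To see this, write it as $2D_{\mathrm{KL}}(\mathcal N(0,\Sigma_\theta)\|\mathcal N(0,S))$ plus a constant, or substitute $P=\Sigma_\theta^{1/2}S^{-1}\Sigma_\theta^{1/2}$ and use $\operatorname{tr}P-\log\det P\ge d$, with equality iff $P=I$. Hence $q_\theta^{\mathcal V}=\mathcal N(\mu_\theta,\Sigma_\theta)$ for every $\theta$ in the neighbourhood.

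\emph{The full Gaussian Fisher limit.} The fitted family is now exactly the one assumed in Proposition~\ref{prop:app-fm-gaussian-fisher}, with the stated $C^2$ and positive-definiteness hypotheses. That proposition gives $D_{\mathcal V}(\theta,\theta+h)=J_{\mathrm{Gauss}}(\theta)h^2+O(h^3)$. Dividing by $h^2$ and letting $h\to0$ yields the displayed limit. Its proof used only the Gaussian form of the endpoints, not how they were fitted, so it applies unchanged here.

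\emph{The shared-covariance case.} Both fitted distributions are $\mathcal N(\mu_\theta,\bar\Sigma_{\theta,h})$ and $\mathcal N(\mu_{\theta+h},\bar\Sigma_{\theta,h})$. The common-covariance Jeffreys formula, Eq.~\eqref{eq:linear-fisher-common-covariance-jeffreys}, gives $(\mu_{\theta+h}-\mu_\theta)^\top\bar\Sigma_{\theta,h}^{-1}(\mu_{\theta+h}-\mu_\theta)$. I would then repeat the final step of Proposition~\ref{prop:app-fm-linear-fisher}. Matrix inversion is smooth near the positive-definite $\Sigma_\theta$, so the assumed $\bar\Sigma_{\theta,h}=\Sigma_\theta+O(h)$ gives $\bar\Sigma_{\theta,h}^{-1}=\Sigma_\theta^{-1}+O(h)$. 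Together with $\mu_{\theta+h}-\mu_\theta=\mu_\theta'h+O(h^2)$, this yields $\mu_\theta'{}^\top\Sigma_\theta^{-1}\mu_\theta'h^2+O(h^3)$, and the limit is the linear Fisher information.

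\emph{Main obstacle.} The only delicate point is the meaning of ``pointwise-separable'' for a continuous $\theta$. A population objective integrated over $\pi_\Theta$ determines the optimum only up to null sets. I would resolve this by taking the assumption to mean a unique minimizer at each $\theta$; otherwise one can invoke continuity of the realizable family to pin it down everywhere. This yields a $C^2$ fitted family, and the rest of the argument reduces to the earlier propositions.
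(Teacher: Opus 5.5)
Your proposal is correct and follows the paper's proof. Like the paper, you identify the pointwise Gaussian cross-entropy optimum $\mathcal N(\mu_\theta,\Sigma_\theta)$, invoke Proposition~\ref{prop:app-fm-gaussian-fisher} for the full Gaussian limit, and use the common-covariance Jeffreys formula from Proposition~\ref{prop:app-fm-linear-fisher} for the linear case. Your explicit $\operatorname{tr}P-\log\det P\ge d$ argument, your remark that only the final step of Proposition~\ref{prop:app-fm-linear-fisher} is needed (since $\bar\Sigma_{\theta,h}=\Sigma_\theta+O(h)$ is assumed directly), and your note on null sets spell out points the paper leaves implicit.
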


\begin{proof}
At each fixed $\theta$, minimizing the Gaussian cross-entropy gives the unique pointwise optimum
\begin{equation}
    q_\theta^{\mathcal V}=\mathcal N(\mu_\theta,\Sigma_\theta).
\end{equation}
Proposition~\ref{prop:app-fm-gaussian-fisher} then applies directly and gives
\begin{equation}
    D_{\mathcal V}(\theta,\theta+h)
    =
    \left[
      \mu_\theta'{}^\top\Sigma_\theta^{-1}\mu_\theta'
      +\frac12\operatorname{tr}\!\left(
        \Sigma_\theta^{-1}\Sigma_\theta'
        \Sigma_\theta^{-1}\Sigma_\theta'
      \right)
    \right]h^2
    +O(h^3).
\end{equation}
Under the shared-local-covariance constraint, Proposition~\ref{prop:app-fm-linear-fisher} gives
\begin{equation}
    D_{\mathcal V}(\theta,\theta+h)
    =
    \mu_\theta'{}^\top\Sigma_\theta^{-1}\mu_\theta'\,h^2
    +O(h^3).
\end{equation}
Dividing by $h^2$ and taking $h\to0$ proves both limits. The pointwise-realizability assumption is essential: if a finite neural parameterization couples distant values of $\theta$, its population optimum need not equal the separate Gaussian moment projection at every stimulus.
\end{proof}

\section{A new distance metric that preserves the geometry of a template}
\label{sec:geometry}

A framework for distance metrics makes it possible to design new metrics in a principled way. A design consists of two choices: a source distribution, which specifies the basic structure used to approximate each response distribution; and a velocity class, which specifies the permitted transformations from source to target.

As a demonstration, we introduce a metric that estimates the distance between two distributional approximations constrained to preserve the geometry of a prescribed template. Such a metric can be useful when the geometry of the target distribution is known a priori, or when one wishes to emphasize a particular geometric property of the distribution.

\paragraph{Geometric-template source distribution.}
Let $\mathcal U\subseteq\mathbb R^m$ be a bounded parameter domain, typically with $m<d$, and let $f:\mathcal U\to\mathbb R^d$ be an a priori specified template embedding. Thus, $u\in\mathcal U$ indexes a location on the template; for example, $\mathcal U=[0,2\pi)$ and $f(u)=(\cos u,\sin u)$ trace a circle. Define the condition-indexed source distribution $\rho_{\mathrm{geom},c}$ by
\begin{equation}
    {
    Z_c=f(U)+\varepsilon_c,
    \qquad
    U\sim\operatorname{Unif}(\mathcal U),
    \qquad
    \varepsilon_c\sim\mathcal N(0,\sigma_c^2I).
    }
    \label{eq:geom-source}
\end{equation}
The additive Gaussian noise makes the source full dimensional, so it has a well-defined density in $\mathbb R^d$, while $\sigma_c$ controls the thickness of the distribution around the template. The template geometry $f$ is fixed across conditions; only its admissible transformation and its noise thickness are fitted.

\paragraph{Similarity-constrained velocity field.}
We want each fitted transformation to preserve the template's shape while allowing its position, orientation, and overall scale to vary. We therefore restrict the time-one map Eq.~\eqref{eq:fitted-distribution} to the orientation-preserving similarity group \citep{Hall2015LieGroups},
\begin{equation}
    {
    T_c(z)=s_cR_cz+t_c,
    \qquad
    s_c>0,\quad R_c\in\mathrm{SO}(d).
    }
\end{equation}
Here $t_c$ translates the template, $R_c$ rotates it, and $s_c$ changes its scale uniformly in every direction. These similarity transformations exclude reflection, shear, and nonlinear deformation.

To parameterize these transformations continuously, we constrain the velocity to the Lie algebra of this group:
\begin{equation}
    {
    \frac{dx}{dt}
    =
    a_{c,t}
    +\Omega_{c,t}(x-o_{c,t})
    +\lambda_{c,t}(x-o_{c,t}),
    \qquad
    \Omega_{c,t}^\top=-\Omega_{c,t}.
    }
    \label{eq:similarity}
\end{equation}
The quantities $a_{c,t}$, $\Omega_{c,t}$, $o_{c,t}$, and $\lambda_{c,t}$ are learnable parameters produced by a neural network. The vector $a_{c,t}$ is the infinitesimal generator of translation, the matrix $\Omega_{c,t}$ generates rotation about $o_{c,t}$, and the scalar $\lambda_{c,t}$ generates isotropic expansion or contraction. Integrating these infinitesimal generators over time produces a similarity transformation of the form above, as proved in Proposition~\ref{prop:app-fm-geometric}.

\paragraph{Training and induced geometry-template dissimilarity.}
Training proceeds in two stages. First, we fix each noise scale $\sigma_c$ at its initial value and optimize the neural-network parameters of the similarity-constrained velocity field using the flow-matching objective in Eq.~\eqref{eq:fm-objective}. Second, we make the positive scale $\sigma_c$ learnable and jointly fine-tune $\sigma_c$ and the velocity parameters by minimizing the NLL objective in Eq.~\eqref{eq:nll}. This stage adapts the template noiseless to the observations. After training, we define the geometry-template dissimilarity between conditions $a$ and $b$ as their Jeffreys divergence as before.

\subsection{Demonstration of a new distance metric that preserves the geometry of a template}
\label{sec:geometry-experiment}
Besides estimation, having a framework allows one to design new distance metrics. In this section, we demonstrate the geometry-preserving metric from Section~\ref{sec:geometry} on a toy dataset. This metric fits each target by a distribution that preserves the geometry of a chosen source template.

The synthetic dataset contains two conditions, each represented by a noisy square with a different orientation. The source distribution is a unit square with a small amount of noise. We fit each condition using either a similarity-constrained velocity field, which permits only translation, rotation, and isotropic scaling of the source distribution, or an unconstrained nonlinear velocity field. The similarity-constrained model better recovers the ground truth and preserves the square geometry, whereas the unconstrained model distorts it (Figure~\ref{fig:geom}). This combination of a geometry-informed source template and a similarity-constrained velocity field is therefore useful when the data geometry is known or when a particular geometric structure should be preserved.

\section{Experimental Details}
\label{app:experiments}

We organize the experimental details into four groups: estimator specifications, simulation datasets, real neural datasets, and the joint estimator--dataset specification for the geometry-template Jeffreys distance.

\subsection{Estimators}

\subsubsection{Flow matching training}
\label{sec:flow-matching}

For affine flow which contains a velocity with $A$ matrix, for example, $v_c(x,t)=\dot\beta_t b_c+A_{c,t}(x-\beta_t b_c)$, the symmetric matrix $A_{c,t}$ contains $d(d+1)/2=O(d^2)$ learned entries. This parameter growth can make high-dimensional fits unstable with finite samples, so we use the regularized loss:
{
\begin{equation*}
\mathcal L_{\lambda}
=\mathcal L_{\mathrm{FM}}
+\lambda\,\mathbb E_{c,t}\!\left[\frac{1}{d}\lVert A_{c,t}\rVert_F^2\right].
\end{equation*}
}
We compare six regularization strengths, $\lambda\in\{0,0.005,0.05,0.5,5,50\}$. We first fit an unregularized model and save its checkpoint. Starting from this shared checkpoint, we then continue fitting separately for each value of $\lambda$. All resulting candidate checkpoints are evaluated on the same validation set, and we select the candidate with the minimum unpenalized validation Flow Matching loss (Section~\ref{sec:flow-network-architecture}).

For unconstrained velocity, we don't use regularization. We instead divide the fitting data into training and validation sets, apply early stopping with a patience of 1000 epochs, and retain the checkpoint with the lowest validation loss.

\paragraph{Additional training details.} Unless otherwise specified, source samples $X_0$ are drawn independently from $\mathcal N(0,I_d)$ and independently of the observed responses. At each optimization step, we pair source samples with observed responses, draw the flow time uniformly as $t\sim\operatorname{Unif}[5\times10^{-4},1-5\times10^{-4}]$, and minimize the loss along a cosine probability path. We optimize using AdamW \citep{LoshchilovHutter2019AdamW} and clip the gradient norm at $10$. The network architecture, batch size, and learning rate are experiment-specific and summarized in Table~\ref{tab:flow-implementation-settings}.

\subsubsection{Velocity network architecture}
\label{sec:flow-network-architecture}

\paragraph{Condition embedding.}
Raw conditions are converted to fixed feature vectors before they enter the velocity model. A categorical condition $c\in\{0,\ldots,K-1\}$ is represented by the one-hot vector $\phi_{\mathrm{cat}}(c)=e_c\in\mathbb R^K$. 

Throughout this subsection, $J=8$ denotes the number of Gaussian radial-basis centers used for both nonperiodic and periodic scalar embeddings, as detailed below.

For a nonperiodic scalar condition $s$, let $\bar s_{\mathrm{tr}}$ and $\sigma_{\mathrm{tr}}$ denote the mean and root-mean-square scale of the training conditions. The standardized coordinate $z(s)$ and the linear-plus-RBF embedding are
\begin{equation*}
\begin{aligned}
z(s)&=\frac{s-\bar s_{\mathrm{tr}}}{\sigma_{\mathrm{tr}}},\\
m_j&=z_{\min}+\frac{j-1}{J-1}(z_{\max}-z_{\min}),
\qquad j=1,\ldots,J,\\
h&=\frac{z_{\max}-z_{\min}}{J-1},
\qquad
\psi_j(s)=\exp\!\left[-\frac{1}{2}
\left(\frac{z(s)-m_j}{h}\right)^2\right],\\
\phi_{\mathrm{np}}(s)&=
\bigl(z(s),\psi_1(s),\ldots,\psi_J(s)\bigr)
\in\mathbb R^{J+1},
\qquad J=8.
\end{aligned}
\end{equation*}
Here $z_{\min}$ and $z_{\max}$ are the extrema of the standardized training conditions. Each Gaussian feature is largest near its center $m_j$ and decays smoothly with distance, while retaining $z(s)$ preserves the global linear ordering. The continuous Fisher benchmarks therefore use a nine-dimensional condition vector.

For a period-$P$ scalar, centers instead lie uniformly on a circle. With origin $o$, the periodic embedding uses the shortest wrapped displacement $\delta_P$:
\nopagebreak[4]
\begin{equation*}
\begin{aligned}
m_j&=o+\frac{j-1}{J}P,
\qquad h=\frac{P}{J},
\qquad j=1,\ldots,J,\\
\delta_P(s,m_j)&=\operatorname{mod}\!\left(s-m_j+\frac{P}{2},P\right)-\frac{P}{2},\\
\psi_j^{\mathrm{per}}(s)&=
\exp\!\left[-\frac{1}{2}
\left(\frac{\delta_P(s,m_j)}{h}\right)^2\right],\\
\phi_{\mathrm{per}}(s)&=
\bigl(\psi_1^{\mathrm{per}}(s),\ldots,\psi_J^{\mathrm{per}}(s)\bigr)
\in\mathbb R^J,
\qquad J=8.
\end{aligned}
\end{equation*}
Consequently $\phi_{\mathrm{per}}(s+P)=\phi_{\mathrm{per}}(s)$, so nearby orientations across the periodic boundary remain nearby in feature space.

In the Stringer experiment, both affine and unconstrained flow use the eight-dimensional periodic embedding $\phi_{\mathrm{per}}(s)$ with $P=\pi$. For a mixed condition consisting of a categorical variable $c$ and a scalar $s$, which may be periodic or non-periodic, we concatenate the two representations as $\phi_{\mathrm{mix}}(c,s)=[e_c,\phi(s)]$. We use this representation for the time-resolved OU experiments, where $c$ denotes category and $s$ denotes physical time. For both the Allen two-photon calcium-imaging and Neuropixels experiments, $c$ indexes the 40 joint drifting-direction--temporal-frequency conditions, while $s$ denotes physical time and is represented by $\phi_{\mathrm{np}}(s)$.

\paragraph{Network architecture.}
Responses are standardized using statistics fitted on the fitting responses. The scalar flow time is used directly without any embeddings. Writing $h_t(u)=[t,\phi(u)]$, where $\phi(u)$ denotes the appropriate categorical, scalar, periodic, or mixed embedding of the experimental condition described above. Neural architectures are feed-forward multilayer perceptrons with SiLU activation with the width and number of hidden layers listed in Table~\ref{tab:flow-implementation-settings}. Neural networks output velocity parameters:

\begin{equation*}
\begin{aligned}
b(c)&=\operatorname{MLP}_b\!\left(\phi(c)\right)\in\mathbb R^d
\qquad\text{(endpoint mean)},\\
a_{c,t}&=\operatorname{MLP}_A\!\left(h_t(c)\right)
\in\mathbb R^{d(d+1)/2},
\qquad A_{c,t}=\operatorname{unpack}_{\mathrm{sym}}(a_{c,t})=A_{c,t}^{\top},\\
v_c^{\mathrm{aff}}(x_t,t)
&=\dot\beta_t b(c)+A_{c,t}\bigl(x_t-\beta_t b(c)\bigr),\\
v_c^{\mathrm{uncon}}(x_t,t)
&=\operatorname{MLP}_v\!\left([x_t,t,\phi(c)]\right)\in\mathbb R^d.
\end{aligned}
\end{equation*}
Translation velocity sets $A_{c,t}=0$, and shared affine replaces $A_{c,t}$ by a condition-independent $A_t$. The square-geometry models use the analogous parameterizations.

\begin{table}[H]
\centering

\caption{Flow Matching training settings. Width $\times$ depth gives the hidden width and number of hidden layers.}
\label{tab:flow-implementation-settings}
\setlength{\tabcolsep}{3pt}
\scriptsize
\begin{tabular}{@{}p{0.22\linewidth}p{0.20\linewidth}p{0.15\linewidth}p{0.10\linewidth}p{0.25\linewidth}@{}}
\toprule
Experiment & Flow family & Width $\times$ depth & Batch & Learning rate \\
\midrule
Categorical RDM & constrained and nonlinear & $128\times3$ & $3000$ & $10^{-4}$ for correlation/cosine; $10^{-3}$ otherwise \\
Independent Gaussian Fisher & affine & $256\times3$ & $256$ & $10^{-4}$ \\
Independent mixture Fisher & affine and nonlinear & $256\times3$ & $256$ & $10^{-4}$ \\
Gaussian OU & affine & $256\times3$ & $256$ & $10^{-4}$ \\
Mixture OU & affine and nonlinear & $256\times3$ & $256$ & $10^{-4}$ \\
Stringer & affine and nonlinear & $256\times3$ & $256$ & $10^{-4}$ \\
Allen & affine and nonlinear & $256\times3$ & $2048$ & $10^{-4}$ \\
Square geometry & similarity-constrained and nonlinear & $64\times2$ & $256$ & $10^{-3}$ FM; $10^{-4}$ NLL \\
\bottomrule
\end{tabular}
\end{table}

\paragraph{Jeffreys divergence estimation.} When the velocity has the affine (linear-in-state) form $v_c^{\mathcal V}(x,t)=A_{c,t}x+r_{c,t}$, we integrate the mean and covariance ODEs in Eq.~\eqref{eq:affine-moment-odes} and evaluate the corresponding Gaussian distance from the resulting moments; Gaussian Jeffreys divergence uses Eq.~\eqref{eq:affine-gaussian-jeffreys-evaluation}. For an unconstrained velocity---used to estimate Jeffreys divergence and full Fisher information---we instead generate endpoint samples by solving Eq.~\eqref{eq:flow-ode} and evaluate $\log q_c^{\mathcal V}(x)$ through the continuous change-of-variables identity in Eq.~\eqref{eq:cnf-log-density}. Jeffreys divergence is then estimated with Eq.~\eqref{eq:mc-jeffreys}.

\subsubsection{Gaussian-process with kernel regression}
\label{sec:gkr}

Our implementation follows the GKR code of \citet{YeWessel2025GridInformation}, available at \url{https://github.com/AgeYY/speed_grid_cell_information}.

GKR models the smooth conditional response distribution as
\begin{equation}
    q_{\mathrm{GKR}}(x\mid u)
    =
    \mathcal N\!\left(x\mid\widehat\mu(u),\widehat\Sigma(u)\right),
\end{equation}
Here $u$ denotes the conditioning input to each fitted GKR model: $u=\theta$ for the Fisher datasets, $u=(c,t)$ for the synthetic OU datasets, and $u$ is grating orientation for Stringer. For the Allen two-photon calcium-imaging and Allen Neuropixels ephys datasets, we fit a separate GKR model for each joint drifting-direction--temporal-frequency condition, with $u=t$ within each model. We use an RBF kernel for nonperiodic inputs and a periodic kernel for grating orientation.

Let $r_i=x_i-\widehat\mu(u_i)$ be the fitted residuals. GKR estimates the conditional covariance by normalized kernel regression of their outer products,
\begin{equation}
    \widehat\Sigma(u)
    =
    \frac{\sum_{i=1}^{n}k_{\lambda}(u_i,u)r_i r_i^\top}
         {\sum_{i=1}^{n}k_{\lambda}(u_i,u)}
    +\epsilon_{\Sigma}I.
    \label{eq:gkr-kernel-covariance}
\end{equation}
For the nonperiodic scalar conditions, inputs are standardized and
\begin{equation}
    k_{\lambda}(u_i,u)
    =
    \exp\!\left[-\frac{(\widetilde u_i-\widetilde u)^2}
                           {2(\lambda+\epsilon_{\lambda})}\right].
\end{equation}
For an orientation with period $P$, the periodic covariance kernel is instead
\begin{equation}
    k_{\lambda}^{\mathrm{per}}(u_i,u)
    =
    \exp\!\left[-\frac{\sin^2\!\left(\pi(u_i-u)/P\right)}
                           {\lambda+\epsilon_{\lambda}}\right].
\end{equation}
The default configuration uses $200$ inducing points; $300$ mean-GP iterations at learning rate $0.05$; and $30$ covariance-kernel epochs at learning rate $0.1$, with a validation fraction of $1/3$.

\subsubsection{Wishart-process estimator}
\label{sec:wishart-process-upstream}

Our Wishart process implementation reuses the original JAX/NumPyro model, kernels, variational inference, and posterior-mode prediction from \url{https://github.com/neurostatslab/wishart-process}, pinned to commit \href{https://github.com/neurostatslab/wishart-process/commit/5607d144ce8d27f0f5ef12f905abce7e868c6259}{\texttt{5607d14}}.

The Wishart process models both the mean and covariance of a smoothly condition-dependent Gaussian distribution \citep{Nejatbakhsh2023Wishart}. For a conditioning input $u$ and response $X\in\mathbb R^d$,
\begin{equation}
    {
    X\mid u
    \sim\mathcal N\!\left(\mu(u),\Sigma(u)\right),
    \qquad
    \mu_j(\cdot)\sim\operatorname{GP}(0,k_\mu),
    \qquad
    F_{jp}(\cdot)\sim\operatorname{GP}(0,k_\Sigma).
    }
    \label{eq:wishart-process-upstream-observation}
\end{equation}
The latent functions $\mu_j$ model the conditional mean, while the factors $F_{jp}$ model the conditional covariance; all mean coordinates and covariance factors have independent GP priors. Let $F(u)\in\mathbb R^{d\times d}$ collect the active covariance factors used by the upstream implementation, which defines
\begin{equation}
    {
    \Sigma(u)
    =L\left[F(u)F(u)^\top+\delta I_d\right]L^\top,
    \qquad \delta=0.1.
    }
    \label{eq:wishart-process-upstream-covariance}
\end{equation}
The Gram matrix $F(u)F(u)^\top$ is positive semidefinite, and adding $\delta I_d$ makes it positive definite before scaling by $L$. This Gram construction motivates the name \emph{Wishart process}: at each condition, outer products of Gaussian latent factors form a covariance matrix, while the GP correlations across $u$ make that covariance vary smoothly with the condition.
The shared matrix $L$ is initialized as the Cholesky factor of $(\widehat\Sigma_{\mathrm{pool}}+\delta I_d)/(d+1)$, where $\widehat\Sigma_{\mathrm{pool}}$ is the pooled within-bin empirical covariance. It is then optimized during SVI.

The mean and covariance-factor GPs use separate product kernels. For $r\in\{\mu,\Sigma\}$ and condition coordinate $q$,
\begin{align}
    k_r(u,u')
    &=\prod_q\exp\!\left[-\frac{d_q(u_q,u_q')^2}{\lambda_{r,q}}\right]
      +10^{-3}\mathbf 1\{u=u'\},
    \\
    d_q(a,b)
    &=\begin{cases}
        a-b, & \text{for a nonperiodic coordinate},\\
        \sin\!\left(\pi(a-b)/P_q\right), & \text{for a coordinate with period $P_q$}.
      \end{cases}
    \label{eq:wishart-process-upstream-kernel}
\end{align}
Here $\lambda_{\mu,q}$ and $\lambda_{\Sigma,q}$ separately control how smoothly the mean and covariance factors vary along condition coordinate $q$: larger widths produce stronger sharing across nearby conditions. The Fisher datsets have one nonperiodic coordinate $u=\theta$, whereas the Stringer recordings have one periodic orientation coordinate $u=\theta$ with period $P=\pi$.

The authors' code interface requires an equal-trial tensor with shape $K\times C\times d$. After fixing the train, validation, and test splits, our wrapper bins only the training observations, sets $K=\min_c n_c$, and subsamples every occupied bin to $K$ responses. All validation and test observations are retained. Mean and covariance kernel widths ($\lambda_{\mu,q}$ and $\lambda_{\Sigma, q}$) are selected by held-out Gaussian log likelihood.

\begin{table}[H]
    \centering

    \small
    \setlength{\tabcolsep}{5pt}
    \begin{tabular}{@{}p{0.29\linewidth}p{0.27\linewidth}p{0.34\linewidth}@{}}
        \toprule
        Benchmark & Mean-kernel widths & Covariance-kernel widths \\
        \midrule
        Linear Fisher
        & $\lambda_\mu\in\{5,10,20\}$
        & $\lambda_\Sigma\in\{20{,}480,40{,}960,81{,}920\}$ \\
        Gaussian-mixture full Fisher
        & $\lambda_\mu\in\{2.25,9,36\}$
        & $\lambda_\Sigma\in\{144,576,2304\}$ \\
        Stringer recordings (initial grid)
        & $\lambda_\mu\in\{0.00125\cdot2^j:j=0,\ldots,8\}$
        & $\lambda_\Sigma\in\{0.05\cdot2^j:j=0,\ldots,7\}\cup\{\infty\}$ \\
        Gaussian OU
        & $\lambda_{\mu,t}\in\{1,4,16,64,256\}$
        & $\lambda_{\Sigma,t}\in\{256\cdot4^j:j=0,\ldots,7\}\cup\{\infty\}$ \\
        Two-scale Gaussian-mixture OU
        & $\lambda_{\mu,t}\in\{1,4,16,64,256\}$
        & $\lambda_{\Sigma,t}\in\{16\cdot4^j:j=0,\ldots,8\}\cup\{\infty\}$ \\
        Allen Neuropixels PCA-10\newline
        & $\lambda_\mu\in\{0.015625\cdot2^j:j=0,\ldots,6\}$\newline$\cup\{4,\infty\}$
        & $\lambda_\Sigma\in\{0.25,1,4,16,64\}$\newline$\cup\{1024,16384,262144,\infty\}$ \\
        \bottomrule
    \end{tabular}
    \caption{Kernel-width grids used for the Wishart-process method. An infinite covariance width denotes a covariance that is constant over the continuous variable (e.g. orientation or trial time) within each condition.}
    \label{tab:upstream-wishart-kernel-grids}
\end{table}

To search kernel widths, we evaluate the grids in Table~\ref{tab:upstream-wishart-kernel-grids}, adaptively expanding the search space when needed. We continue the expansion until the validation curve exhibits an inverted-U shape, indicating a local optimum, or until the kernel width becomes sufficiently large to produce nearly constant smoothing. For example, supplementary Figure~\ref{fig:supp-stringer-wishart-grid-audit} shows the resulting search profiles for the Stringer recordings.

For the PCA-70 Allen Neuropixels analysis in Figure~\ref{fig:time-resolved-jeffreys}, we found it practically difficult to identify optimal kernel widths because validation often favored very large values, resulting in estimates that were nearly constant over time. This behavior may reflect the high dimensionality and non-Gaussianity of the neural responses.

Therefore we instead reducing the responses to 10 principal components where the Wishart process produced more reasonable fits, as shown in Supplementary Figure~\ref{fig:supp-allen-pca10-five-session-likelihood}. Affine flow matching remains best among Gaussian methods.

\subsubsection{Cross-fitted optimal linear estimator (OLE)}
\label{sec:ole}

We mainly follow \citet{Kanitscheider2015MeasuringFisher} for the locally optimal linear estimator and its finite-sample bias correction. OLE fits a linear decoder on a training set and projects held-out responses onto the resulting axis. Our reported achieved-information estimate applies the bias correction shown below and is clipped at zero.

Specifically, we estimate this quantity with five-fold cross-fitting independently for each adjacent pair $\theta_L<\theta_R$ on a 31-endpoint grid, where $h=\theta_R-\theta_L=0.4$. Responses are assigned to disjoint radius-$h/2$ windows around the two endpoints. If either window contains fewer than eight responses, we instead use the nearest disjoint groups with eight responses per endpoint. For fold $f$, four folds estimate the endpoint means and Ledoit--Wolf covariances. We then form
\begin{align}
    \widehat\mu_f'
    &=
    \frac{\widehat\mu_{R,f}-\widehat\mu_{L,f}}{h},
    &
    \widehat\Sigma_f
    &=
    \frac{\widehat\Sigma_{L,f}+\widehat\Sigma_{R,f}}{2},
    \\
    \widehat w_f
    &=
    \frac{\widehat\Sigma_f^{-1}\widehat\mu_f'}
         {\widehat\mu_f'{}^\top\widehat\Sigma_f^{-1}\widehat\mu_f'},
    &
    \widehat b_f
    &=
    \frac{\theta_L+\theta_R}{2}
    -\widehat w_f^\top
     \frac{\widehat\mu_{L,f}+\widehat\mu_{R,f}}{2}.
\end{align}
The fitted decoder $\widehat\theta=\widehat w_f^\top X+\widehat b_f$ is frozen and applied only to the held-out fold. Pooling the five held-out folds therefore evaluates each selected response exactly once out of fold. Let $\bar y_L,\bar y_R$, $s_L^2,s_R^2$, and $n_L,n_R$ denote the means, sample variances, and counts of the pooled held-out projections. The reported estimate is
\begin{align}
    \widehat I_{\mathrm{OLE}}^{\mathrm{raw}}
    &=
    \frac{
      (\bar y_R-\bar y_L)^2-s_L^2/n_L-s_R^2/n_R
    }{
      h^2(s_L^2+s_R^2)/2
    },
    &
    \widehat I_{\mathrm{OLE}}
    &=
    \max\!\left\{\widehat I_{\mathrm{OLE}}^{\mathrm{raw}},0\right\}.
\end{align}

\subsubsection{TRE density-ratio estimation for categorical Jeffreys divergence}
\label{sec:tre}

For the full Jeffreys divergence comparison in Supplementary Figure~\ref{fig:supp-categorical-distance-errors}, we use telescoping density-ratio estimation (TRE) \citep{Rhodes2020TRE} with $B\in\{1,8,16\}$ bridges. For each unordered pair of categorical conditions $a$ and $b$, TRE draws independent samples $X_a\sim p_a$ and $X_b\sim p_b$ and forms $B+1$ angular waymarks
\begin{equation}
    {
    Z_m=\cos(\phi_m)X_a+\sin(\phi_m)X_b,
    \qquad
    \phi_m=\frac{m\pi}{2B},\qquad m=0,\ldots,B.
    }
\end{equation}
Then we constructed $B$ adjacent-ratio classifiers share a three-hidden-layer MLP trunk with width $128$ and use separate linear output heads. Each condition-pair model is trained for at most $1000$ epochs with batch size $512$, AdamW learning rate $10^{-3}$, zero weight decay, and gradient clipping at norm $10$. We select the checkpoint with the lowest validation loss. If $q_m$ denotes the density of $Z_m$, classifier head $m$ is trained with equal class priors to distinguish $q_m$ from $q_{m+1}$. The estimated logits then telescope to
\begin{equation}
    {
    \widehat r_{ab}(x)
    =\sum_{m=0}^{B-1}\widehat\ell_{ab,m}(x)
    \approx\sum_{m=0}^{B-1}\log\frac{q_m(x)}{q_{m+1}(x)}
    =\log\frac{p_a(x)}{p_b(x)}.
    }
\end{equation}

We estimate the Jeffreys divergence for each condition pair by averaging this log ratio under the two endpoint distributions and clip negative finite-sample estimates at zero:
\begin{equation}
    {
    \widehat D_{\mathrm J}(p_a,p_b)
    =\max\!\left\{0,
    \frac{1}{n_a}\sum_{i=1}^{n_a}\widehat r_{ab}(X_{a,i})
     -\frac{1}{n_b}\sum_{j=1}^{n_b}\widehat r_{ab}(X_{b,j})
    \right\}.
    }
\end{equation}

\subsection{Simulation datasets}

\subsubsection{Categorical RDM dataset}

Here, we describe the categorical RDM experiment shown in Figure~\ref{fig:discrete}.

We define the conditional response distribution as
\begin{equation}
    z\mid Y=k\sim\mathcal N\!\left(\mu_k,\operatorname{diag}(v_k)\right),
\end{equation}
where $Y$ is uniformly distributed over $\{0, \ldots, 4\}$ and $z$ is three-dimensional. We generate the parameters $\mu_k$ and $\operatorname{diag}(v_k)$ once and hold them fixed across all five repetitions. Thus, the repetitions share the same category-conditional distributions and differ only in their sampled observations.

Specifically, we generate $\mu_k$ and $\operatorname{diag}(v_k)$ as follows. We draw each mean coordinate as $\mu_{k,j}=G_{k,j}Z_{k,j}$, where $G_{k,j}\sim\operatorname{Unif}(0.2,2.0)$ and $Z_{k,j}\sim\operatorname{Unif}(0,1)$. We resample the candidate means until every pair is separated by at least $\sqrt{3}/2$ (half the diagonal of the unit cube), thereby keeping the components well separated. We set the conditional variances to
\begin{equation}
    v_{k,j}=3(0.15)^2+0.15|\mu_{k,j}|+10^{-5}.
\end{equation}

We then independently draw five datasets from this fixed population. When fitting the structured models---that is, all velocity families in Table~\ref{tab:constraints} except the unconstrained family---we use all observations. Because the response dimension is low, we set $\lambda=0$ for affine flow, corresponding to no regularization. The nonlinear Jeffreys model instead uses an 80/20 training--validation split. Figure~\ref{fig:discrete}A shows accuracy as a function of sample size, and Figure~\ref{fig:discrete}B shows the corresponding RDM estimates at 2,000 total samples.

Figure~\ref{fig:supp-categorical-distance-errors} compares flow matching against plug-in mean estimators for correlation, cosine, and squared Euclidean distance; pooled-covariance Mahalanobis estimators with and without Ledoit--Wolf (LW) shrinkage; a condition-specific LW covariance estimator for Gaussian Jeffreys divergence; and TRE estimators using 1, 8, or 16 bridges for full Jeffreys divergence.

We compute all six ground-truth RDMs analytically from the known population means $\mu_k$ and diagonal covariances $\operatorname{diag}(v_k)$.

We evaluate estimation performance using the mean relative absolute error (MRAE) over the unique off-diagonal category pairs. Let $\mathcal P=\{(i,j):0\leq i<j\leq C-1\}$ and $M=|\mathcal P|=\binom{C}{2}$. For an estimated RDM $\widehat D$ and its ground-truth RDM $D$,
\begin{equation}
    \operatorname{MRE}(\widehat D,D)
    =
    \frac{1}{M}\sum_{(i,j)\in\mathcal P}
    \frac{\left|\widehat D_{ij}-D_{ij}\right|}
         {\max\!\left\{\left|D_{ij}\right|,10^{-12}\right\}}.
\end{equation}
With $C=5$, the average includes $M=10$ unique category pairs.

\subsubsection{Fisher datasets}

Here we describe the datasets used for estimating Fisher information in Figure~\ref{fig:independent-ou-fisher-jeffreys}A--F. We first specify the Gaussian dataset used for linear Fisher information, then the mixture dataset used for full Fisher information.

\paragraph{Gaussian responses and linear Fisher information.}
For the Gaussian dataset (Figure~\ref{fig:independent-ou-fisher-jeffreys}A--C), $\theta\in[-6,6]$, while the response dimension $d$ follows the sweep protocol below:
\begin{equation}
    p^{\mathrm G}(x\mid\theta)
    =\mathcal N\!\left(x\mid\mu(\theta),\Sigma(\theta)\right).
\end{equation}
The mean is a Gaussian bump with equal-spacing centers $\vartheta_j=-6+12(j-1)/(d-1)$
\begin{equation}
    \mu_j(\theta)
    =a_j\exp\!\left[-0.2\left(\theta-\vartheta_j\right)^2\right],
\end{equation}
where $a_j\overset{\mathrm{iid}}{\sim}\operatorname{Unif}(0.2,2.0)$. and the condition-dependent diagonal covariance is
\begin{equation}
    \left[\Sigma(\theta)\right]_{j\ell}
    =\delta_{j\ell}
      \left[
        d(0.07)^2
        +\frac{d}{300}\,1.30\lvert\mu_j(\theta)\rvert
        +10^{-8}
      \right].
\end{equation}
The $d/300$ factor preserves the relative covariance modulation across response dimensions.

The target is linear Fisher information, $\mu'(\theta)^\top\Sigma(\theta)^{-1}\mu'(\theta)$, estimated by affine flow (Section~\ref{sec:flow-matching} and Proposition~\ref{prop:app-fm-gaussian-fisher}), GKR (Section~\ref{sec:gkr}), the Wishart process (Section~\ref{sec:wishart-process-upstream}), and five-fold cross-fitted OLE (Section~\ref{sec:ole}). The sample-size sweep fixes $d=300$ and uses $N\in\{500,1000,3000,5000,10000\}$; the dimension sweep fixes $N=1000$ and uses $d\in\{10,20,50,100,200,300,400\}$.

For the affine-flow readout, each adjacent pair $(\theta_k,\theta_{k+1})$, with midpoint $\overline\theta_k=(\theta_k+\theta_{k+1})/2$, shares a covariance obtained by averaging its two affine matrices at every flow time:
\begin{align}
    \overline A_{k,t}
    &=\tfrac12\left(A_{\theta_k,t}+A_{\theta_{k+1},t}\right),
    &
    \dot{\overline\Sigma}_{k,t}
    &=\overline A_{k,t}\overline\Sigma_{k,t}
      +\overline\Sigma_{k,t}\overline A_{k,t}^{\top},
    &
    \overline\Sigma_{k,0}&=I_d,\\
    \widehat I_{\mathrm{lin}}(\overline\theta_k)
    &=\frac{\Delta b_k^{\top}\overline\Sigma_{k,1}^{-1}\Delta b_k}
            {(\theta_{k+1}-\theta_k)^2},
    &
    \Delta b_k&=b(\theta_{k+1})-b(\theta_k).
\end{align}

For estimated Fisher information $\widehat I(\theta_k)$ and ground truth $I(\theta_k)$ at $K$ evaluation points, we report MRAE
\begin{equation}
    \label{eq:mrae-fisher-information}
    \operatorname{MRAE}(\widehat I,I)
    =
    \frac{1}{K}\sum_{k=1}^{K}
    \frac{\left|\widehat I(\theta_k)-I(\theta_k)\right|}
         {\max\!\left\{\left|I(\theta_k)\right|,10^{-12}\right\}}.
\end{equation}
We additionally report the Pearson correlation between the estimated and ground-truth Fisher curves.

\paragraph{Mixture responses and full Fisher information.}
For the non-Gaussian dataset (Figure~\ref{fig:independent-ou-fisher-jeffreys}D--F), we retain the Gaussian dataset's $\mu(\theta)$ and $\Sigma(\theta)$, but introduce an equally likely latent scale branch $S$:
\begin{equation}
    \label{eq:independent-fisher-gaussian-mixture}
    \Pr(S=-1)=\Pr(S=+1)=\tfrac12,
    \qquad
    X\mid\theta,S
    \sim
    \mathcal N\!\left(
        \mu(\theta),
        [1+S\rho(\theta)]\Sigma(\theta)
    \right).
\end{equation}
where $\rho(\theta)=0.5+0.4\sin\!\left[\frac{\pi}{6}(\theta+6)+2.20\right]$. Because $\mathbb E[S]=0$, the marginal mean and covariance remain exactly $\mu(\theta)$ and $\Sigma(\theta)$; $\rho(\theta)$ therefore changes higher-order structure without changing either of the first two moments, keeping the linear Fisher information the same. The target for the common error panels is the exact mixture full Fisher information,
\begin{equation}
    I_{\mathrm{full}}(\theta)
    =
    \mathbb E_{X\sim p^{\mathrm{mix}}(\cdot\mid\theta)}
    \!\left[
        \left(\partial_\theta\log p^{\mathrm{mix}}(X\mid\theta)\right)^2
    \right].
\end{equation}
We use spacing $\Delta\theta=0.4$ and compare Unconstrained Flow, Affine Flow, GKR, and Wishart. The sample-size sweep fixes $d=20$ and uses $N\in\{1000,4000,7000,10000,13000\}$; the dimension sweep fixes $N=7000$ and uses $d\in\{5,20,50,80,110\}$.

\subsubsection{Time-resolved Jeffreys datasets}
\label{sec:time-resolved-jeffreys-benchmark}

Here we describe the Gaussian and non-Gaussian OU datasets in Figure~\ref{fig:ou-time-resolved-jeffreys}A--F.

\paragraph{Gaussian OU dataset: time-local marginals.}
The time-resolved dataset uses two conditions, $c\in\{0,1\}$, response dimension $d$, and $T=24$ time-bin centers
$t_k=-6+\left(k-\tfrac12\right)\Delta t$, $\Delta t=0.5$, $k=1,\ldots,T$.
For each tested $d$, the coordinatewise tuning-curve amplitudes are drawn once as
$a_j\overset{\mathrm{iid}}{\sim}\operatorname{Unif}(0.2,2.0)$, and the preferred time points are evenly spaced over the stimulus interval,
$\vartheta_j=-6+\frac{12(j-1)}{d-1}$, $j=1,\ldots,d$.
The common unsigned time-tuning curve is therefore
\begin{equation}
    \mu_j(t)
    =a_j\exp\!\left[-0.2\left(t-\vartheta_j\right)^2\right],
    \qquad
    \mu(t)=\bigl(\mu_1(t),\ldots,\mu_d(t)\bigr)^\top.
\end{equation}
Let $s_0=1$ and $s_1=-1$. At any fixed time $t$, the marginal distribution is:
\begin{equation}
    p_{\mathrm G}(x\mid c,t)
    =\mathcal N\!\left(x\mid s_c\,0.7\mu(t),\Sigma_{\mathrm G}(t)\right),
\end{equation}
\begin{equation}
    \left[\Sigma_{\mathrm G}(t)\right]_{j\ell}
    =\delta_{j\ell}
      \left[
        d(0.07)^2
        +\frac{d}{300}\,1.30\lvert\mu_j(t)\rvert
        +10^{-8}
      \right].
\end{equation}

\paragraph{Gaussian OU dataset: temporally correlated trials.}
To introduce within-trial dependence without changing the time-local marginals, let $L_{\mathrm G}(t)L_{\mathrm G}(t)^\top=\Sigma_{\mathrm G}(t)$. For each condition $c$ and trial $r$, standardized residuals follow the stationary OU recursion
\begin{align}
    z_{c,r,k}
    &=
    \rho z_{c,r,k-1}
    +\sqrt{1-\rho^2}\,\eta_{c,r,k},
    \qquad
    k=2,\ldots,T,
    \\
    \rho
    &=
    e^{-\Delta t/2}=e^{-0.5/2},
    \qquad
    z_{c,r,1},\eta_{c,r,k}
    \overset{\mathrm{ind}}{\sim}\mathcal N(0,I_d).
\end{align}
The observations are
\begin{equation}
    x_{c,r,k}=s_c\,0.7\mu(t_k)+L_{\mathrm G}(t_k)z_{c,r,k}.
\end{equation}
Each trial is an independent time sequence. Affine Flow uses a condition-balanced, trajectory-disjoint 80/20 training/validation split, so all time points from a trial remain in the same split. Bin\,+\,LW and GKR use all available trials.

Error is summarized by
\begin{equation}
    \operatorname{MRAE}
    =
    \frac1T\sum_{k=1}^T
    \frac{\left|\widehat D_{\mathrm J}(t_k)-D_{\mathrm J}(t_k)\right|}
         {\left|D_{\mathrm J}(t_k)\right|}.
\end{equation}
We compare Bin\,+\,LW, GKR, Affine Flow, and the Wishart process. The representative case has $d=60$ and $R=260$ trials per condition. The trial sweep fixes $d=60$ and uses $R\in\{80,215,350,485,620\}$; the dimension sweep fixes $R=260$ and uses $d\in\{20,40,60,80,100\}$.

\paragraph{Non-Gaussian two-scale OU dataset.}
For each condition $c$ and trial $r$, a narrow or wide covariance branch $S_{c,r}\in\{-1,+1\}$ is drawn with equal probability and held fixed over the complete trajectory. Its time-local conditional distribution and base covariance are
\begin{align}
    X\mid c,t,S_{c,r}
    &\sim
    \mathcal N\!\left(
        s_c\,0.4\mu(t),
        [1+S_{c,r}\rho_c(t)]\Sigma_{\mathrm M}(t)
    \right),
    \\
    \left[\Sigma_{\mathrm M}(t)\right]_{j\ell}
    &=4\delta_{j\ell}
      \left[
        d\left(\frac{0.2}{\sqrt 2}\right)^2
        +\frac{d}{50}\,0.65\lvert\mu_j(t)\rvert
        +10^{-8}
      \right].
\end{align}
The two conditions have opposite time modulation,
\begin{equation}
    \rho_0(t)=0.5+0.4\sin\!\left[\omega(t+6)+2.2\right],
    \qquad
    \rho_1(t)=0.5-0.4\sin\!\left[\omega(t+6)+2.2\right],
    \qquad
    \omega=\frac{2\pi}{12}.
\end{equation}
Thus, with $L_{\mathrm M}(t)L_{\mathrm M}(t)^\top=\Sigma_{\mathrm M}(t)$, the complete trajectory is
\begin{equation}
    x_{c,r,k}
    =s_c\,0.4\mu(t_k)
     +\sqrt{1+S_{c,r}\rho_c(t_k)}\,L_{\mathrm M}(t_k)z_{c,r,k}.
\end{equation}
Equal branch probabilities preserve the base mean and covariance, while the opposite modulations make higher-order moments condition dependent. The exact full-mixture Jeffreys reference is Monte Carlo audited.

We compare GKR, Affine Flow, the Wishart process, and Unconstrained Flow. The representative case has $d=10$ and $R=260$ trials per condition. The trial sweep fixes $d=10$ and uses $R\in\{100,260,420,580,740\}$; the dimension sweep fixes $R=260$ and uses $d\in\{2,10,18,26,34\}$. Both sweeps use five repeats.

\subsection{Real neural datasets}

\subsubsection{Mouse V1 static-grating dataset}

The mouse V1 static-grating dataset contains six two-photon calcium-imaging sessions (GT1, GT2, GT3, TX38, TX39, and TX40) collected during full-field static-grating presentation \citep{Stringer2021Geometry}. Each trial contributes one population-response vector and one grating orientation, treated modulo $\pi$; each session contains about 4000 trials and roughly 20000 simultaneously recorded neurons. Within each session, we form a deterministic, orientation-stratified 64/16/20\% training/validation/test split using seed 7. We fit a centered, unwhitened PCA using only the training trials, retain 200 components, and apply the fixed projection to validation and test trials. We chose 200 because it is close to the across-session median of 209 components required to reach a 70\% cumulative explained-variance ratio.

We compare five conditional density estimators on the same splits: binning with Ledoit--Wolf covariance (Bin+LW), GKR, the Wishart process, affine flow, and unconstrained flow. Bin+LW uses 16 equal orientation bins of width $\pi/16$, GKR follows Section~\ref{sec:gkr}, and the Wishart process follows Section~\ref{sec:wishart-process-upstream}, using the adaptive 16-bin fitting protocol and a full-rank covariance factorization. Both flow models use eight periodic radial-basis features for orientation and a three-hidden-layer MLP of width 256; their fitting follows Section~\ref{sec:flow-matching}.

We evaluate held-out log likelihood on the test split. For Bin+LW, GKR, the Wishart process, and affine flow, we use the fitted mean and covariance to compute Gaussian log likelihood; for unconstrained flow, we use the continuous change-of-variables identity in Eq.~\eqref{eq:cnf-log-density}. This comparison measures conditional density fit rather than direct Fisher-information accuracy.

\subsubsection{Allen Brain Observatory two-photon calcium-imaging dataset with drifting gratings}
\label{sec:allen-brain-institute}

We use two-photon calcium-imaging responses from 11 Allen Brain Observatory visual-cortex recording sessions selected to contain more than 300 recorded neurons \citep{deVries2020VisualCoding}. Mice viewed drifting gratings spanning eight motion directions ($0^\circ,45^\circ,\ldots,315^\circ$) and five temporal frequencies ($1,2,4,8,$ and $15$ Hz), giving 40 direction--frequency conditions. Each condition has 14--15 repeated trials, corresponding to approximately 560--600 nonblank trials per session. A trial contains 120 frames sampled at 30 Hz and spans four seconds: one second before stimulus onset, two seconds of drifting-grating presentation, and one second after stimulus offset, which also forms the first prestimulus second of the next trial.

For each session, we pool all nonblank observations (conditions $\times$ trials $\times$ time points) and randomly split into 60\% training, 20\% validation, and 20\% test sets. Trial identity is not used in this split, so frames from the same trial can occur in multiple subsets. We fit a centered, unwhitened PCA to the raw training observations only, retain 95 components, and apply this fixed map to all observations. PCA-95 retains 60.1--89.6\% of the training variance across sessions (median 70.2\%).

The resulting 95-dimensional observations are conditioned on a categorical index for the 40 direction--frequency combinations and on continuous time. We compare Bin\,+\,LW, GKR, the Wishart process, affine flow, and unconstrained flow. Training responses are used for fitting, validation responses for checkpoint or hyperparameter selection, and test responses only for final evaluation.

Supplementary Figure~\ref{fig:supp-allen-two-photon-summary} summarizes the responses, held-out likelihood, and time-resolved distance estimates across the 11 two-photon sessions.

\subsubsection{Allen Visual Coding Neuropixels dataset}
\label{sec:allen-ephys}

We analyze all 32 recording sessions from the Allen Visual Coding Neuropixels release \citep{Siegle2021Survey}. These sessions share 40 nonblank drifting-grating conditions formed by eight motion directions and five temporal frequencies. For each session, we retain presentations whose $[-1,3)$-second analysis window does not overlap an invalid interval. We retain units whose channel lies in VISp, VISl, VISrl, VISam, or VISpm and require presence ratio $\geq0.95$, inter-spike-interval violations $\leq0.5$, and amplitude cutoff $\leq0.1$. The resulting datasets contain 92--373 units and 421--600 presentations per session, with 8--15 presentations per condition.

We count spikes in 40 nonoverlapping 100-ms bins spanning $[-1,3)$ seconds relative to stimulus onset and apply the elementwise square-root transform $k\mapsto\sqrt{k}$. Within each session, we fit a centered, unwhitened PCA to all concatenated condition--presentation--time observations before splitting and retain 70 components. These components explain 62.4--95.2\% of the variance across sessions (median 72.9\%). This is the space we are working on for the rest of analysis. We then randomly divide observation rows into 80\% training and 20\% test sets; the flow models further divide the outer training rows into 80\% training and 20\% validation rows.

The models condition on the 40-level direction--frequency index and continuous time. We compare condition-specific Bin\,+\,LW, GKR, affine flow, and unconstrained flow. Held-out likelihood is averaged within each condition and then across the 40 conditions, with recording session as the unit of aggregation. For Supplementary Figure~\ref{fig:supp-allen-ephys-jeffreys-timecourses}, we average the 28 unique Jeffreys divergences among the eight directions separately at each temporal frequency within each recording session. Curves and shaded bands show the mean and SEM across the 32 sessions, respectively, with recording session as the aggregation unit. The 4-Hz direction RDMs in Figure~\ref{fig:time-resolved-jeffreys}C are evaluated at the bin centers nearest $-0.5$, $1.0$, and $2.5$ seconds (realized centers $-0.55$, $0.95$, and $2.45$ seconds) and averaged across sessions.

\subsection{Estimator and dataset for the geometry-template Jeffreys distance}

\paragraph{Synthetic dataset.}
Let $s_\ell(U)$ parameterize the boundary of a centered square with side length $\ell$, where $U\sim\operatorname{Unif}[0,4]$. For condition $c$, we draw
\begin{equation}
    {
    X_c=R_{\theta_c}s_2(U)+\eta,
    \qquad
    \eta\sim\mathcal N(0,0.2^2I_2),
    \qquad
    (\theta_1,\theta_2)=(0,\pi/4).
    }
\end{equation}
Only $X_c$ and its condition angle $\theta_c$ are included in the observed dataset; the perimeter coordinate $U$ is latent.
We generate 200 observations per condition and split each condition into 128 training, 32 validation, and 40 test observations (64/16/20\%).

\paragraph{Flow-matching fitting.}
Both models start from the same side-length-one square boundary $s_1(U)$ convolved with isotropic Gaussian noise of initial scale $0.1$; thus, the clean source geometry is identical across conditions. We compare two velocity families, each parameterized by a two-hidden-layer MLP of width 64. The similarity-constrained model outputs translation, angular velocity, isotropic scale, and center parameters, so its endpoint map can only translate, rotate, and uniformly rescale the square. The unconstrained model instead maps the response, flow time, and condition directly to a free two-dimensional velocity and may deform the template nonlinearly. We first train each model by Flow Matching along a cosine interpolation path using batch size 256, learning rate $10^{-3}$, gradient clipping at norm 10, and validation early stopping with patience 1000, while holding the source noise fixed. We then fine-tune by negative log likelihood for 1200 epochs at learning rate $10^{-4}$, allow a learnable source noise scale $\sigma_c$, and retain the checkpoint with the lowest validation NLL.

\clearpage
\section{Supplementary Figures}

\setcounter{figure}{0}
\renewcommand{\thefigure}{S\arabic{figure}}
\renewcommand{\theHfigure}{S\arabic{figure}}

\begin{figure}[H]
  \centering
  \includegraphics[width=\linewidth]{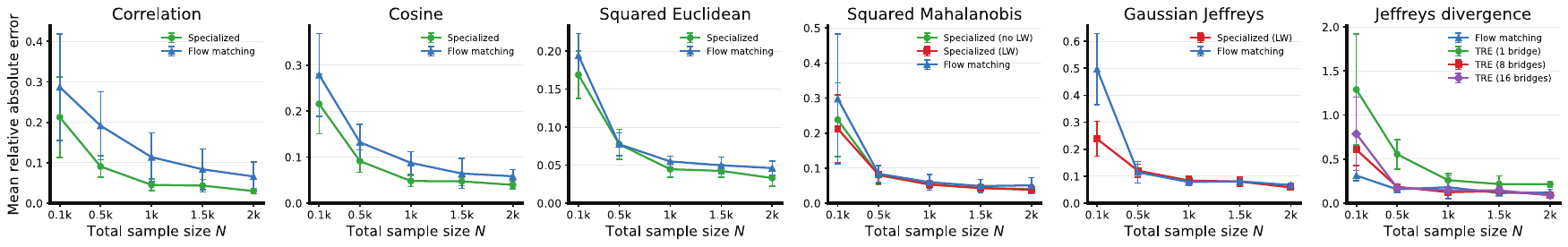}
  \caption{\textbf{Estimator accuracy across six representational distances on the five-condition categorical benchmark.} Mean relative absolute RDM error is shown against total sample size for correlation, cosine, squared Euclidean, squared Mahalanobis, Gaussian Jeffreys divergence, and full Jeffreys divergence. Specialized estimators are compared with flow matching; the full-Jeffreys panel also includes telescoping density-ratio estimation (TRE) with 1, 8, or 16 bridges~\citep{Rhodes2020TRE}. Markers and error bars show the mean and one sample standard deviation across five repetitions.}
  \label{fig:supp-categorical-distance-errors}
\end{figure}
\clearpage

\begin{figure}[H]
  \centering
  \includegraphics[width=\linewidth]{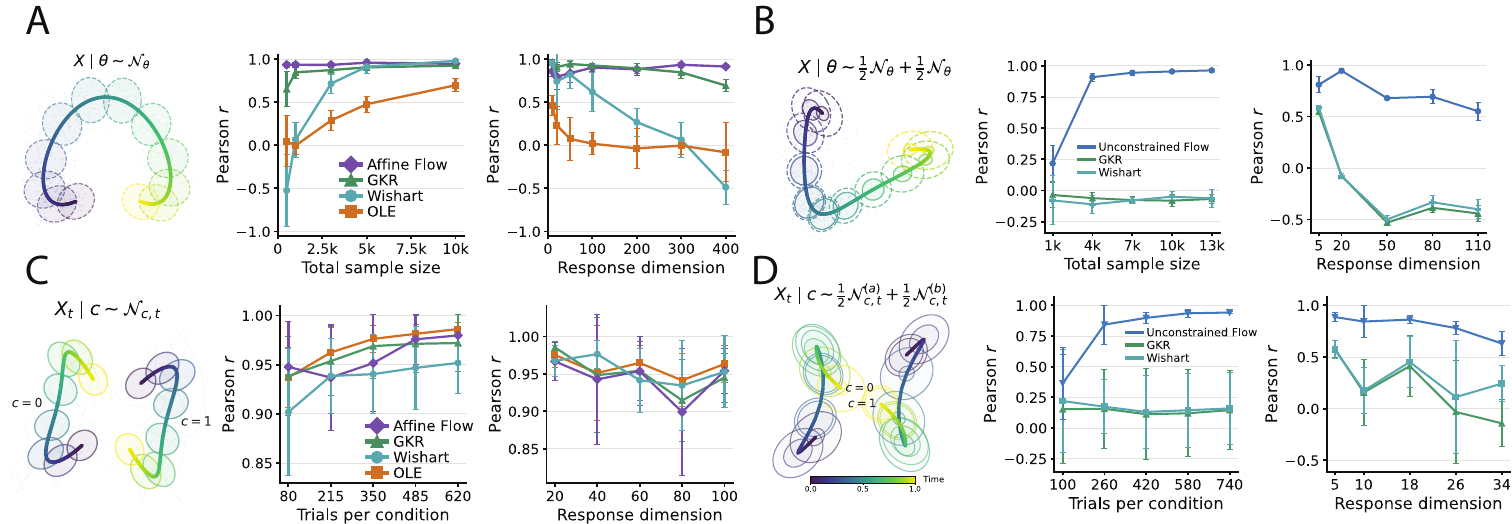}
  \caption{\textbf{Pearson correlation between the estimated and ground-truth curves.} (\textbf{A}) Independent Gaussian responses and linear Fisher information. (\textbf{B}) Independent two-scale Gaussian-mixture responses and full Fisher information. (\textbf{C}) Gaussian Ornstein--Uhlenbeck trials and time-resolved Gaussian Jeffreys divergence. (\textbf{D}) Two-scale Gaussian-mixture Ornstein--Uhlenbeck trials and time-resolved full Jeffreys divergence. Method colors follow Figures~\ref{fig:independent-ou-fisher-jeffreys} and~\ref{fig:ou-time-resolved-jeffreys}; points and error bars show the mean and one sample standard deviation across five repetitions. Figure relates to Figures~\ref{fig:independent-ou-fisher-jeffreys} and~\ref{fig:ou-time-resolved-jeffreys}}
  \label{fig:supp-continuous-benchmark-correlations}
\end{figure}
\clearpage

\begin{figure}[H]
  \centering
  \includegraphics[width=\linewidth]{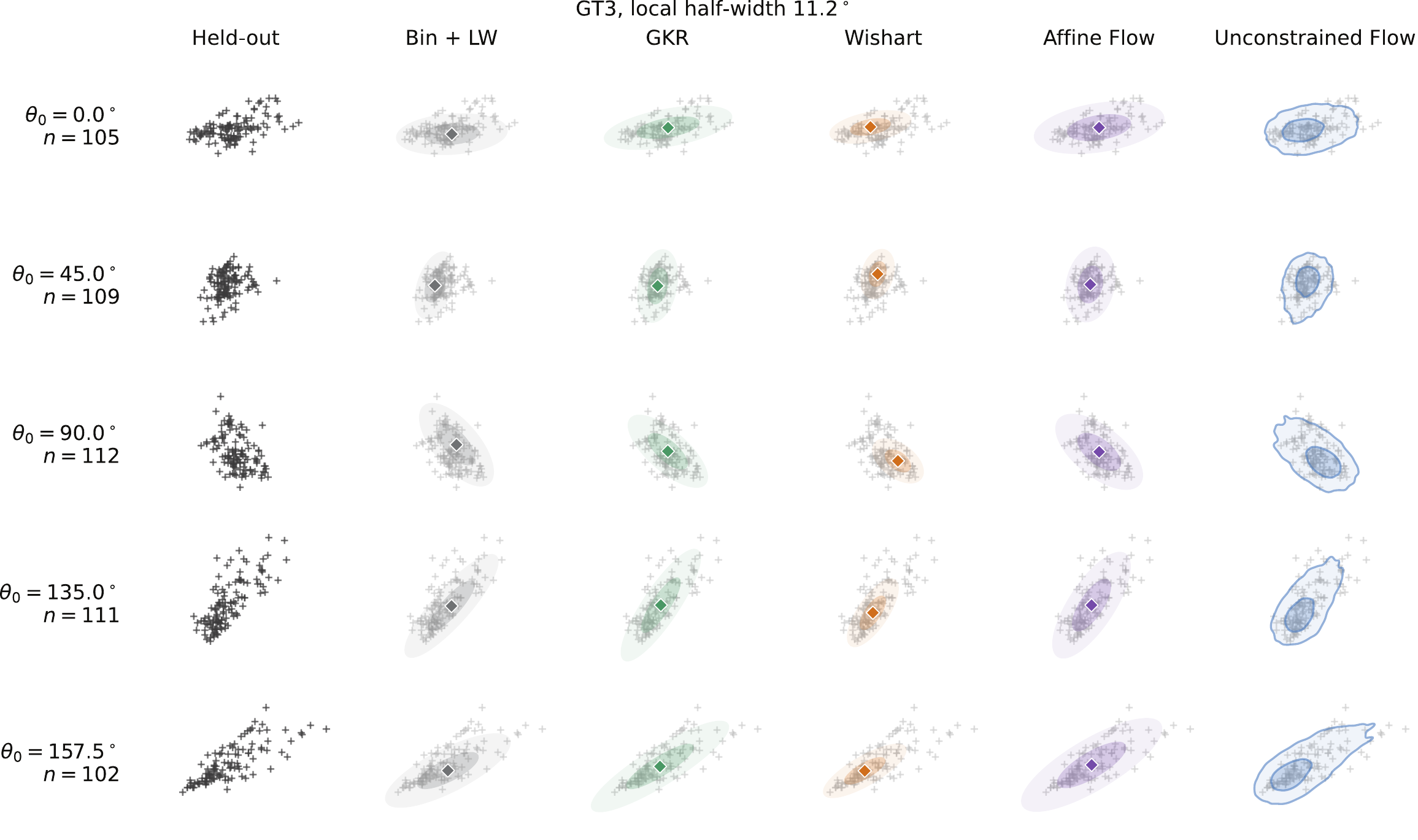}
  \caption{\textbf{Local conditional density fits across orientations in the representative Stringer session GT3.} Rows show held-out responses in the first two principal components within circular $\pm11.25^\circ$ windows centered at the displayed orientations. Columns compare held-out data, Bin\,+\,LW, GKR, the Wishart process, affine flow, and unconstrained flow. Crosses denote held-out observations; inner and outer ellipses or contours enclose 50\% and 95\% probability mass, respectively.}
  \label{fig:supp-stringer-local-density}
\end{figure}
\clearpage

\begin{figure}[H]
  \centering
  \includegraphics[width=\linewidth]{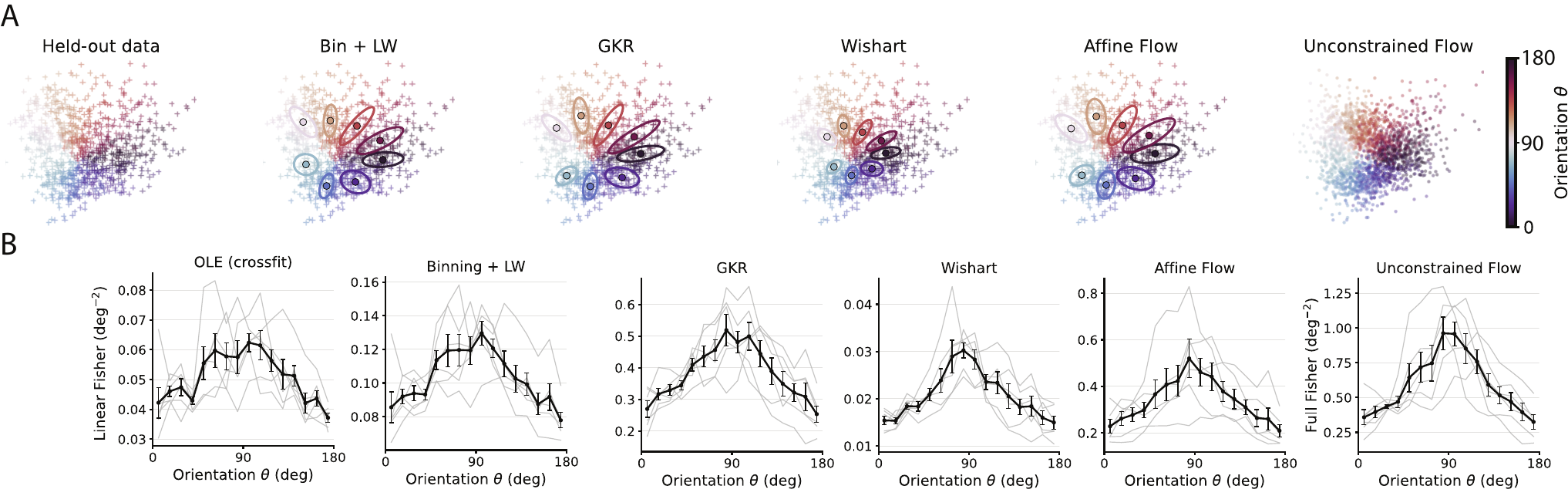}
  \caption{\textbf{Conditional density and Fisher-information estimates for the Stringer recordings.} (\textbf{A}) Held-out responses and fitted conditional densities in the first two principal components for an example session, colored by grating orientation. Gaussian estimators show fitted means and probability ellipses, whereas the unconstrained-flow panel shows generated samples. (\textbf{B}) Fisher-information estimates across the six sessions. GKR, Bin\,+\,LW, cross-fitted OLE, the Wishart process, and affine flow estimate linear Fisher information; unconstrained flow estimates full Fisher information. Thin gray curves show individual sessions, and colored curves and error bars show the pointwise mean and session-level standard error of the mean.}
  \label{fig:supp-stringer-density-fisher}
\end{figure}
\clearpage

\begin{figure}[H]
  \centering
  \includegraphics[width=\linewidth]{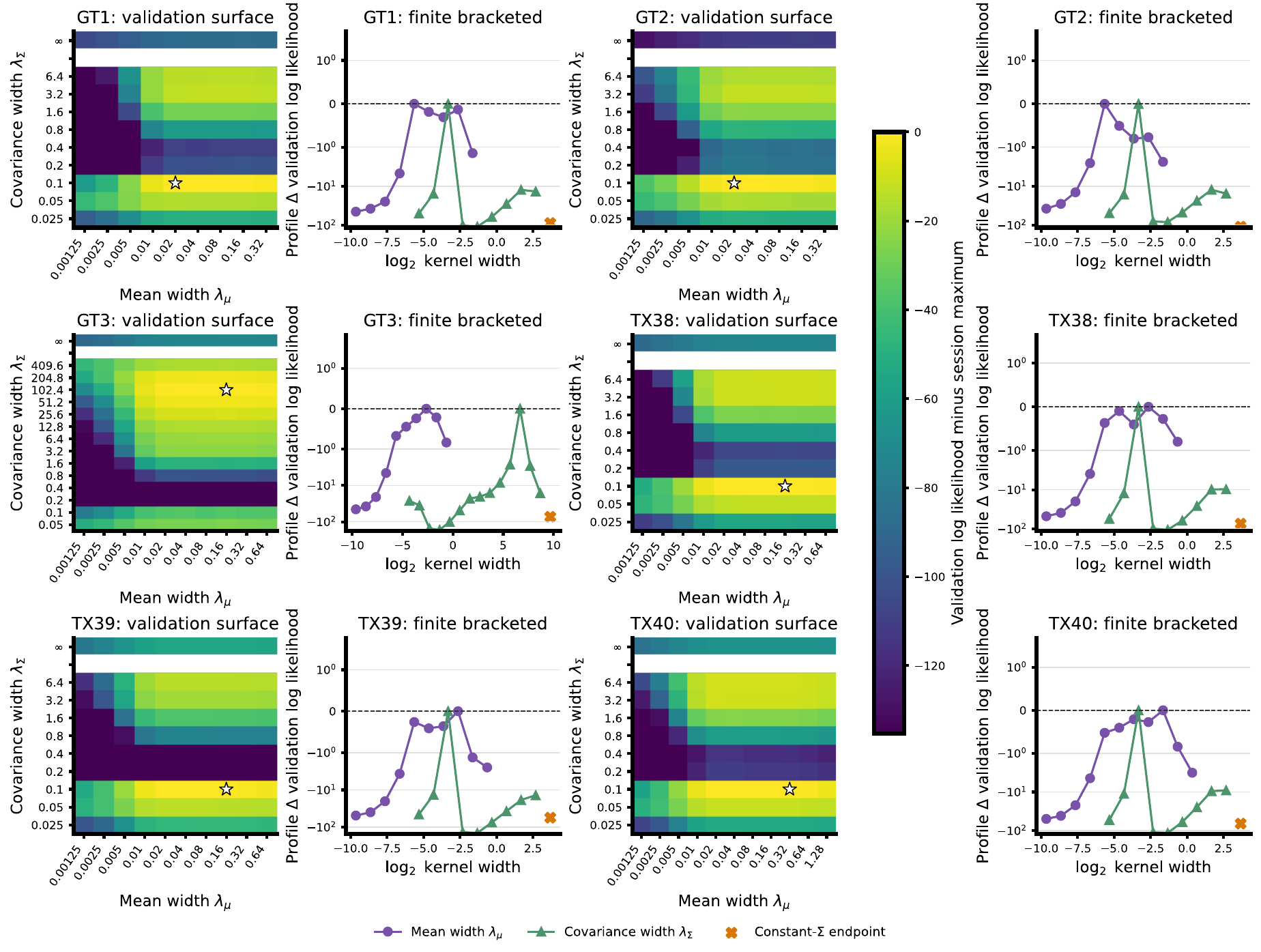}
  \caption{\textbf{Adaptive Wishart kernel-width searches for the Stringer recordings.} For each session, the validation surface shows Gaussian log likelihood relative to the session maximum over the mean width $\lambda_\mu$ and covariance width $\lambda_\Sigma$; the white star marks the selected pair. The adjacent profile panel shows relative validation likelihood across mean widths (purple circles) and finite covariance widths (green triangles), with the constant-covariance endpoint shown by the orange cross.}
  \label{fig:supp-stringer-wishart-grid-audit}
\end{figure}
\clearpage

\begin{figure}[H]
  \centering
  \includegraphics[width=\linewidth]{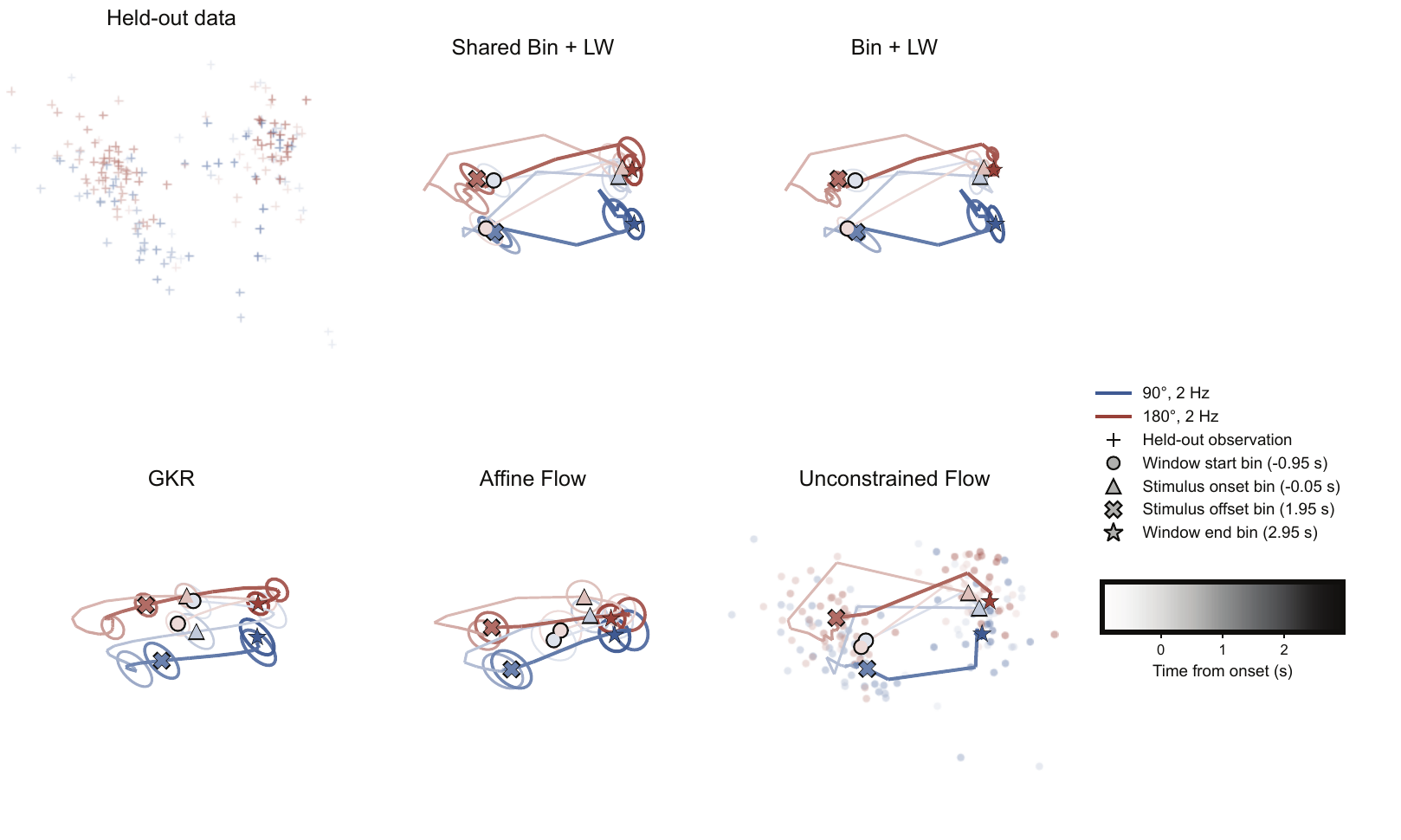}
  \caption{\textbf{Time-resolved conditional density trajectories for two drifting-grating conditions in the Allen Neuropixels recordings.} Held-out responses and fitted distributions are shown in the first two principal components for the $(90^\circ,2\,\mathrm{Hz})$ and $(180^\circ,2\,\mathrm{Hz})$ conditions. Columns show held-out data, shared Bin\,+\,LW, Bin\,+\,LW, GKR, affine flow, and unconstrained flow. Marker shape identifies representative time bins from the beginning to the end of the analysis window, and color lightness indicates time from stimulus onset.}
  \label{fig:supp-allen-ephys-density-trajectories}
\end{figure}
\clearpage

\begin{figure}[H]
  \centering
  \includegraphics[width=0.7\linewidth]{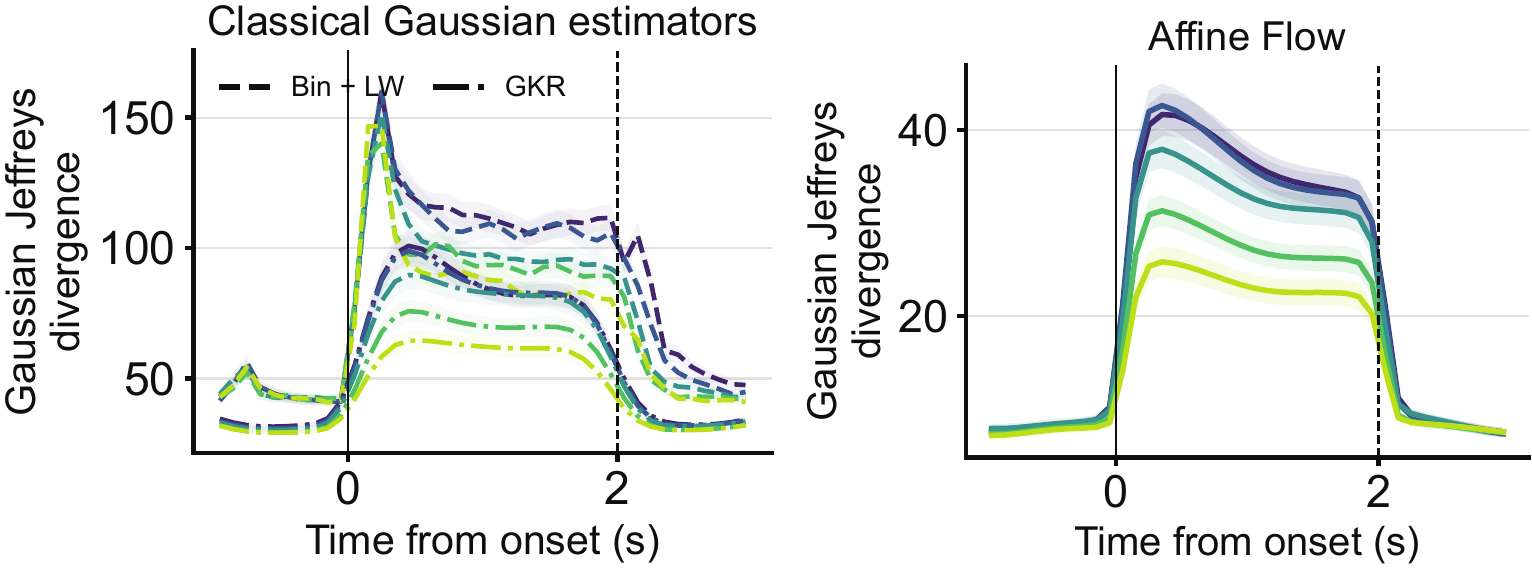}
  \caption{\textbf{Frequency-specific time-resolved Gaussian Jeffreys divergence from classical estimators and affine flow across 32 Allen Neuropixels sessions.} At each temporal frequency, Gaussian Jeffreys divergence is averaged over the 28 unique pairs among the eight directions within each session. Colors indicate the five temporal frequencies ($1$, $2$, $4$, $8$, and $15$ Hz). The left panel shows Bin\,+\,LW (dashed curves) and GKR (dash-dotted curves); the right panel shows affine flow (solid curves). Curves show across-session means, shaded regions show the SEM, and solid and dashed vertical lines mark stimulus onset and offset. Bin\,+\,LW uses a 0.2-s time window.}
  \label{fig:supp-allen-ephys-jeffreys-timecourses}
\end{figure}
\clearpage

\begin{figure}[H]
  \centering
  \includegraphics[width=\linewidth]{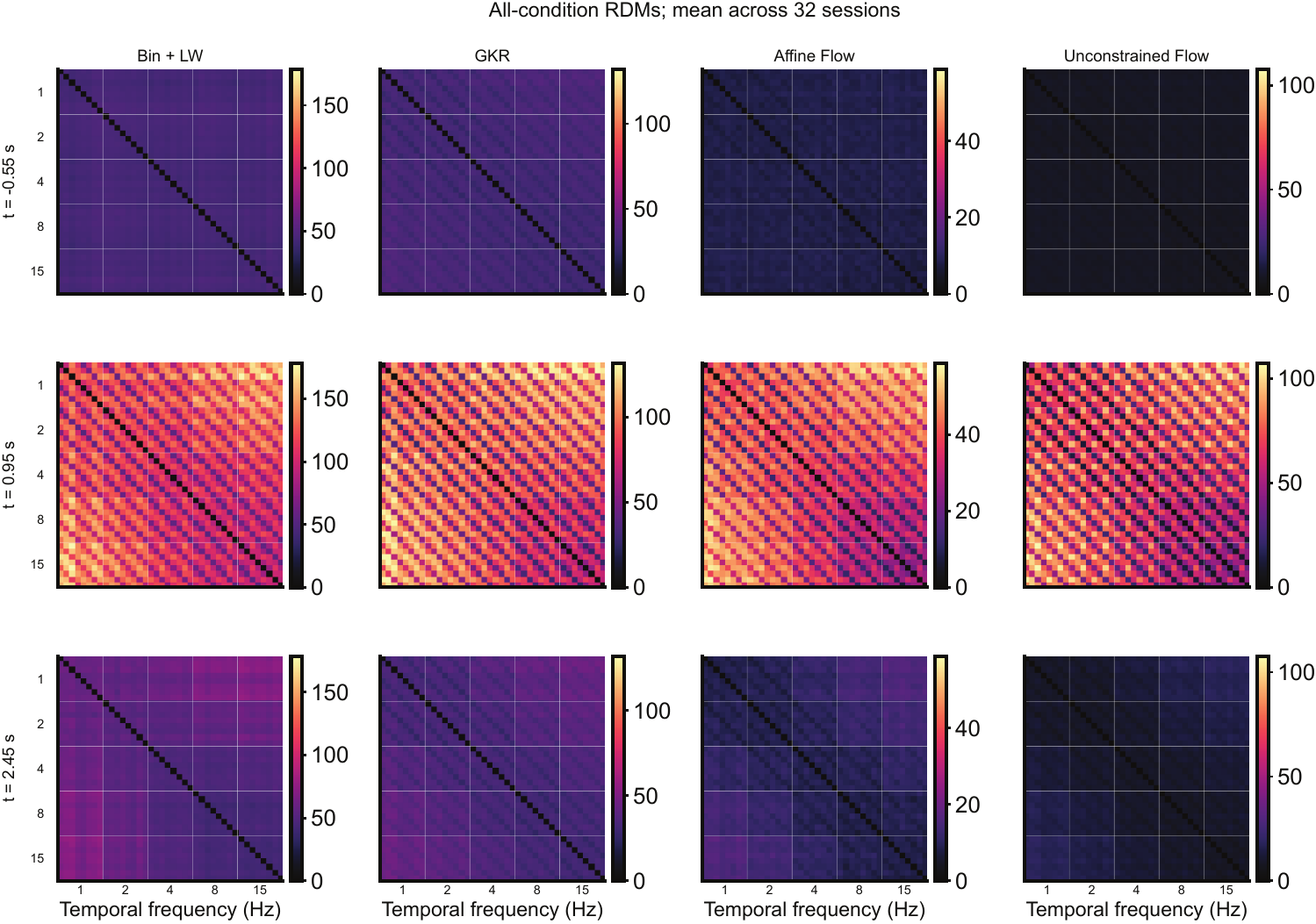}
  \caption{\textbf{Time-resolved all-condition RDMs across 32 Allen Neuropixels sessions.} Each matrix is the elementwise mean across sessions for all 40 direction--frequency conditions at $-0.55$, $0.95$, or $2.45$ seconds relative to stimulus onset. Columns show Gaussian Jeffreys divergence from Bin\,+\,LW, GKR, and affine flow, followed by full Jeffreys divergence from unconstrained flow. Conditions are grouped by temporal frequency, with eight motion directions within each block.}
  \label{fig:supp-allen-ephys-all-condition-rdms}
\end{figure}
\clearpage

\begin{figure}[H]
  \centering
  \includegraphics[width=\linewidth]{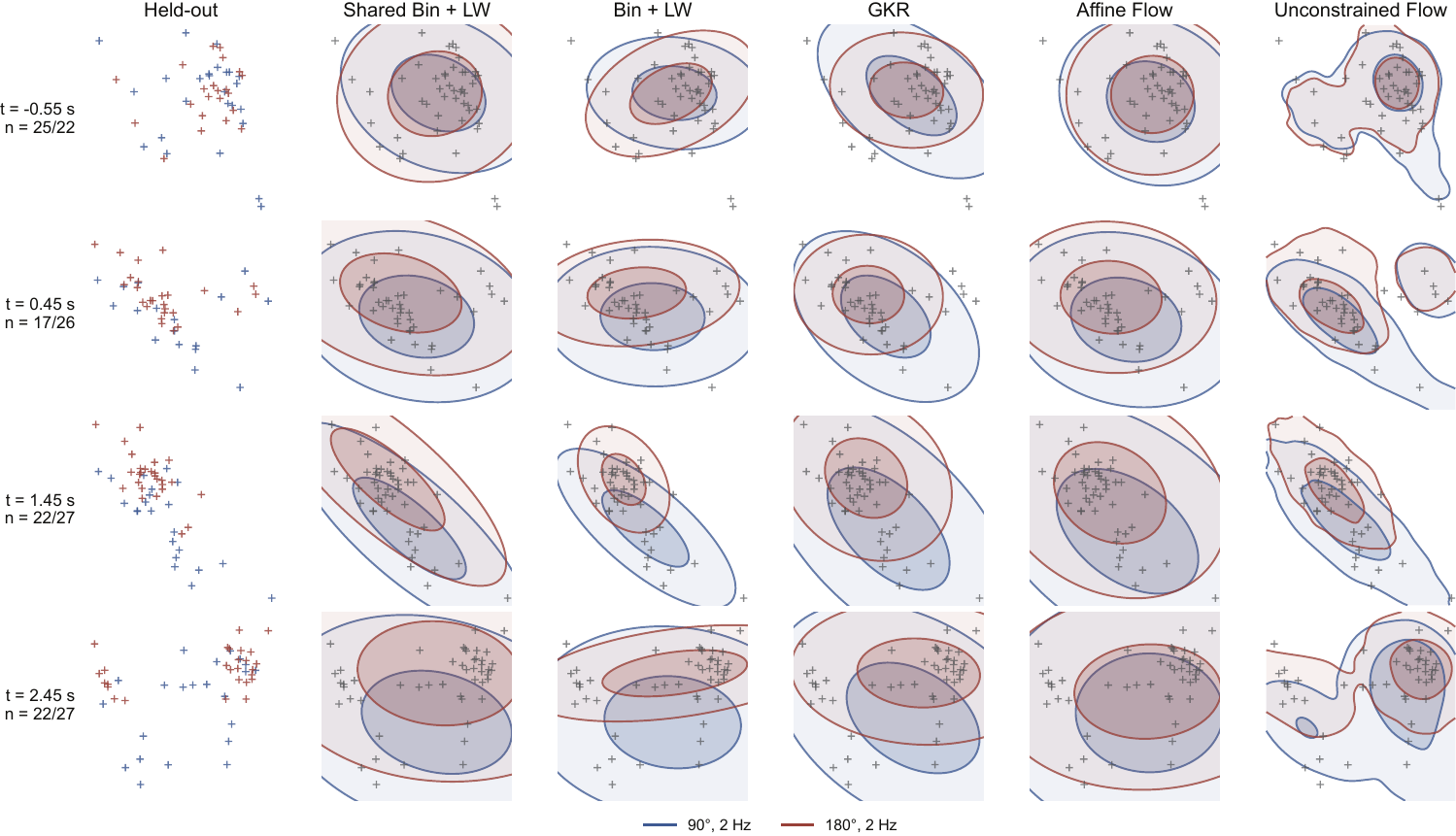}
  \caption{\textbf{Conditional density fits at four example times in an example Allen Neuropixels session.} Held-out responses and fitted conditional densities are shown in the first two principal components for the $(90^\circ,2\,\mathrm{Hz})$ and $(180^\circ,2\,\mathrm{Hz})$ conditions. Rows correspond to windows centered at $-0.55$, $0.45$, $1.45$, and $2.45$ seconds relative to stimulus onset; the displayed counts give the held-out observations for the two conditions. Columns compare held-out data, shared Bin\,+\,LW, Bin\,+\,LW, GKR, affine flow, and unconstrained flow.}
  \label{fig:supp-allen-ephys-density-fits}
\end{figure}
\clearpage

\begin{figure}[H]
  \centering
  \includegraphics[width=0.5\linewidth]{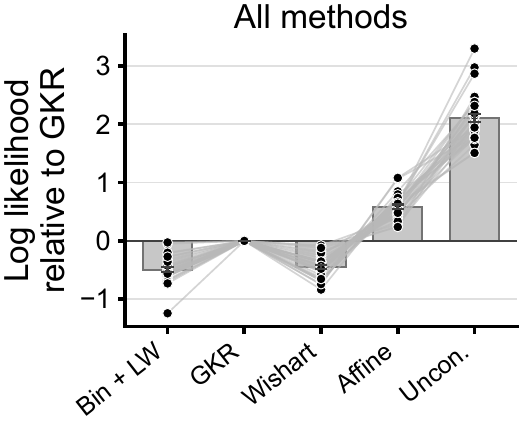}
  \caption{\textbf{Held-out likelihood across five Allen Neuropixels sessions after projection onto 10 principal components.} Held-out log likelihood relative to GKR is shown for Bin\,+\,LW, GKR, the Wishart process, affine flow, and unconstrained flow. Gray lines connect estimates from the same session, black dots show individual sessions, and bars and error bars show the across-session mean and standard error of the mean.}
  \label{fig:supp-allen-pca10-five-session-likelihood}
\end{figure}
\clearpage

\begin{figure}[H]
  \centering
  \includegraphics[width=\linewidth]{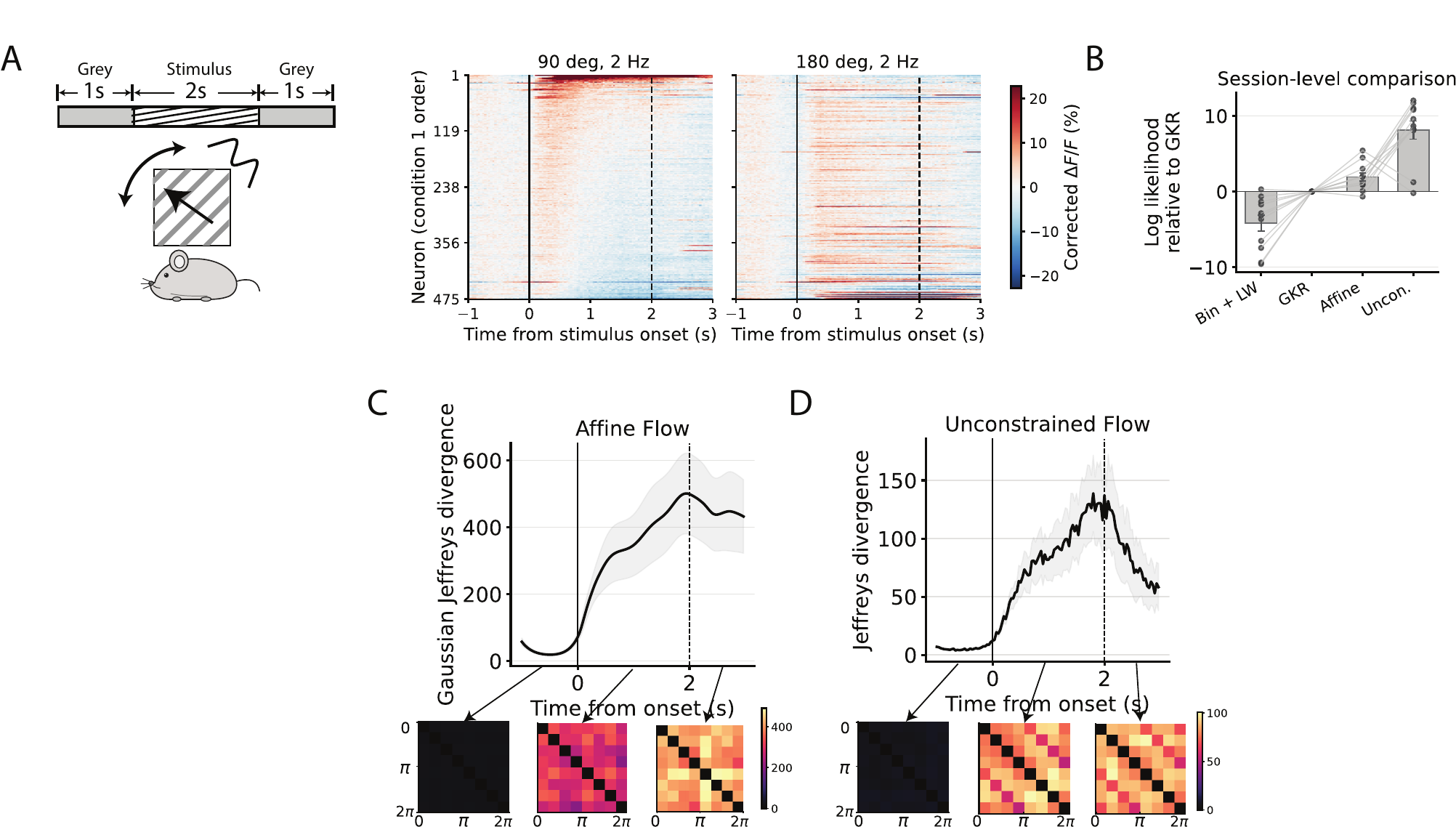}
  \caption{\textbf{Responses, held-out likelihood, and time-resolved representational geometry in the Allen two-photon recordings.} (\textbf{A}) Four-second trial structure and baseline-corrected response heat maps for two drifting-grating conditions in a representative session. (\textbf{B}) Held-out log likelihood relative to GKR across 11 imaging sessions; gray lines connect estimates from the same session, and bars and error bars show the across-session mean and standard error of the mean. (\textbf{C}) Affine-flow Gaussian Jeffreys divergence averaged over condition pairs and 4-Hz direction RDMs at representative pre-stimulus, stimulus, and post-stimulus times. (\textbf{D}) Corresponding full Jeffreys divergence and RDMs from unconstrained flow. Curves and shaded regions summarize mean and SEM of the 11 sessions.}
  \label{fig:supp-allen-two-photon-summary}
\end{figure}
\clearpage

\begin{figure}[H]
  \centering
  \includegraphics[width=0.7\linewidth]{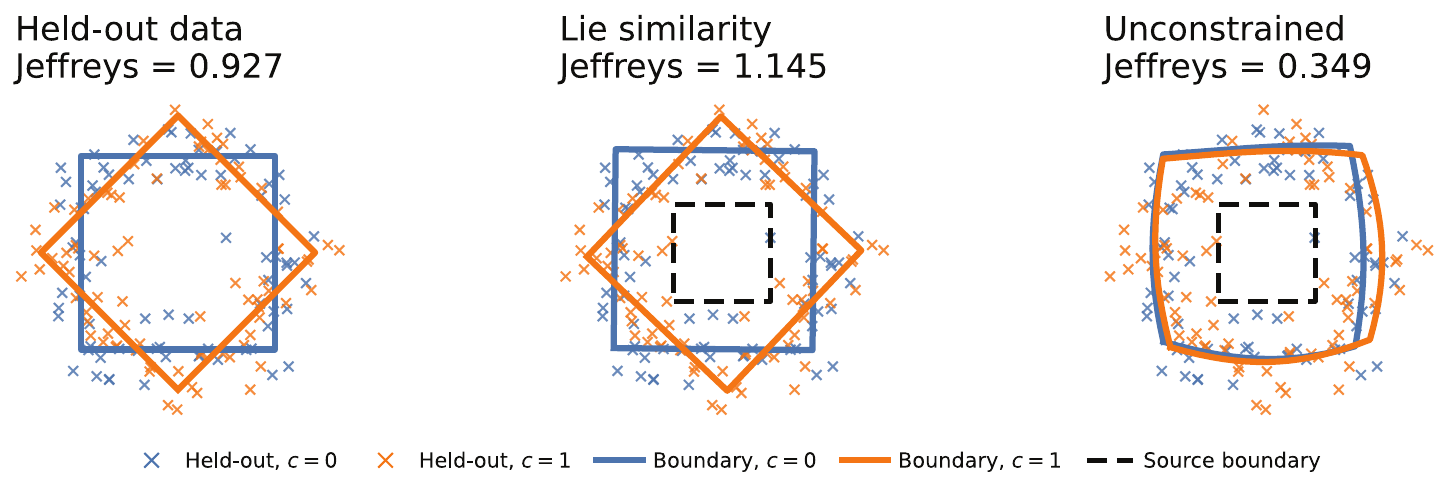}
  \caption{\textbf{Similarity-constrained flow preserves a square template when estimating representational dissimilarity.} Left: held-out observations from two noisy-square conditions, with solid curves showing their noiseless target boundaries. Middle and right: fits obtained using similarity-constrained and unconstrained flows, respectively, with the same side-length-one square source. In the fitted panels, the dashed black curve shows the original source boundary, and the solid colored curves show the condition-specific transported boundaries. Crosses denote held-out observations, and each panel title reports the Jeffreys divergence between the two condition distributions. The similarity-constrained fit preserves the square geometry, whereas the unconstrained fit distorts it.}
  \label{fig:geom}
\end{figure}
\clearpage

\begin{figure}[H]
  \centering
  \includegraphics[width=\linewidth]{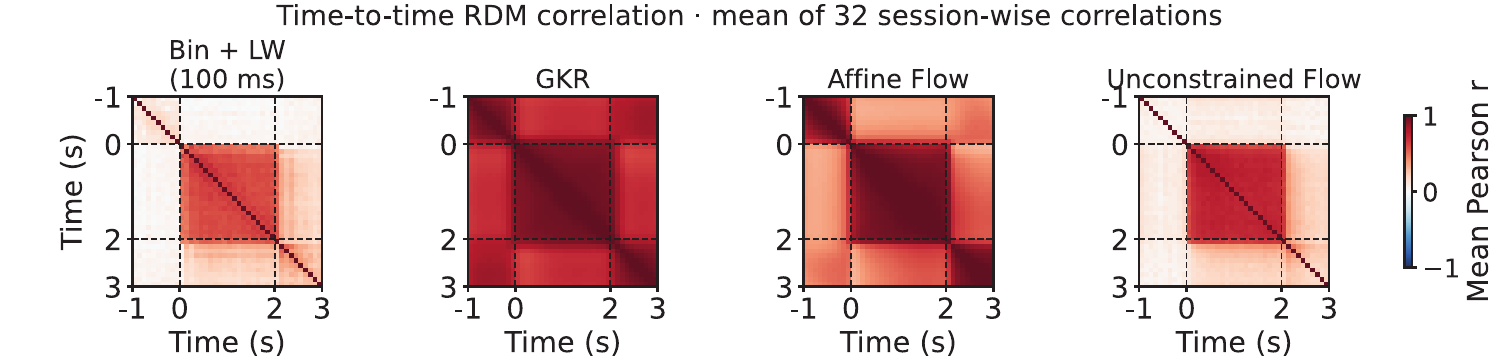}
  \caption{\textbf{Time-to-time RDM correlations across estimators in the Allen Neuropixels recordings.} Panels compare Bin\,+\,LW, GKR, affine flow, and unconstrained flow. Each matrix shows Pearson correlations between RDMs at pairs of time points, computed within each recording session and then averaged across 32 sessions. Dashed lines mark stimulus onset at $0$ seconds and offset at $2$ seconds.}
  \label{fig:supp-allen-rdm-time-correlations}
\end{figure}
\clearpage
\end{document}